\documentclass{article}

\PassOptionsToPackage{numbers, compress}{natbib}

\usepackage[preprint]{neurips_2026}

\usepackage[utf8]{inputenc} %
\usepackage[T1]{fontenc}    %
\usepackage{hyperref}       %
\usepackage{url}            %
\usepackage{booktabs}       %
\usepackage{amsfonts}       %
\usepackage{nicefrac}       %
\usepackage{microtype}      %
\usepackage{xcolor}         %

\usepackage{amsmath,amssymb,amsthm,amsfonts}
\usepackage{bm}
\usepackage{mathtools}
\usepackage{booktabs}
\usepackage{array}
\usepackage{cleveref}

\crefname{theorem}{Theorem}{Theorems}
\Crefname{theorem}{Theorem}{Theorems}
\crefname{lemma}{Lemma}{Lemmas}
\Crefname{lemma}{Lemma}{Lemmas}
\crefname{corollary}{Corollary}{Corollaries}
\Crefname{corollary}{Corollary}{Corollaries}
\crefname{proposition}{Proposition}{Propositions}
\Crefname{proposition}{Proposition}{Propositions}
\crefname{definition}{Definition}{Definitions}
\Crefname{definition}{Definition}{Definitions}
\crefname{remark}{Remark}{Remarks}
\Crefname{remark}{Remark}{Remarks}
\crefname{result}{Result}{Results}
\Crefname{result}{Result}{Results}
\usepackage{multirow}
\usepackage{subcaption}
\usepackage[most]{tcolorbox}
\usepackage{xspace}

\usepackage{aliascnt}

\theoremstyle{plain}
\newtheorem{theorem}{Theorem}[section]
\newaliascnt{corollary}{theorem}
\newtheorem{corollary}[corollary]{Corollary}
\aliascntresetthe{corollary}

\theoremstyle{definition}
\newaliascnt{lemma}{theorem}
\newtheorem{lemma}[lemma]{Lemma}
\aliascntresetthe{lemma}
\newaliascnt{definition}{theorem}
\newtheorem{definition}[definition]{Definition}
\aliascntresetthe{definition}
\newaliascnt{result}{theorem}
\newtheorem{result}[result]{Result}
\aliascntresetthe{result}
\newaliascnt{proposition}{theorem}
\newtheorem{proposition}[proposition]{Proposition}
\aliascntresetthe{proposition}

\theoremstyle{definition}
\newaliascnt{remark}{theorem}
\newtheorem{remark}[remark]{Remark}
\aliascntresetthe{remark}

\newcommand{\DeltaMomentum}{\textsc{AK-Momentum}\xspace}

\newcommand{\E}{\mathbb{E}}
\newcommand{\R}{\mathbb{R}}
\newcommand{\tr}{\operatorname{tr}}

\newcommand{\NS}{\operatorname{NS}}
\newcommand{\Sigx}{\Sigma_x}
\newcommand{\Sighat}{\hat{\Sigma}}
\newcommand{\xhat}{\hat{x}}
\newcommand{\uhat}{\hat{u}}
\newcommand{\lamhat}{\hat{\lambda}}
\newcommand{\Gbar}{\bar{G}}

\newcommand{\Mstar}{M^{*}}
\newcommand{\tbeta}{\tilde{\beta}}
\newcommand{\Xhat}{\hat{X}}
\newcommand{\Ghat}{\hat{G}}
\newcommand{\Mhat}{\widehat{M}}
\newcommand{\nablaY}{\nabla Y}
\newcommand{\nablaW}{\nabla W}
\newcommand{\dout}{d_{\mathrm{out}}}
\newcommand{\dinp}{d_{\mathrm{in}}}
\newcommand{\FLOP}{\mathrm{FLOPs}}

\newcommand{\bigO}{\mathcal{O}}

\title{Activation-Keyed Momentum: An Anisotropic Momentum Update via the Delta Rule}

\author{%
  Euijin Hong \qquad Guannan Qu \\
  Electrical and Computer Engineering, Carnegie Mellon University \\
  \texttt{\{ehong, gqu\}@andrew.cmu.edu}
}

\hypersetup{hidelinks}

\begin{document}

\maketitle

\begin{abstract}
Most modern optimizers form their momentum as an exponential moving average (EMA) of past gradients, forgetting every direction at one fixed rate. 
However, the inputs a deep network sees during training can be highly anisotropic, with a few directions queried frequently while most are seen rarely. 
Preconditioning methods address this anisotropy by wrapping extra processing around this buffer and leave the momentum update itself unchanged. 
We propose Activation-Keyed Momentum (\DeltaMomentum), which builds direction-awareness into the momentum update rule. The gradient of a linear layer splits into an input activation that acts as a \textit{key} and an output-side error that acts as a \textit{value}. Keying on that activation, \DeltaMomentum updates the momentum buffer by the canonical delta rule, so each direction is forgotten at a rate set by how often it appears. 
We prove that it is a valid momentum, that it applies the input-side curvature correction without matrix inversion, and that it clears stale directions faster than EMA under both a fixed and a drifting optimum. 
It is a drop-in replacement for the momentum buffer of any optimizer, its coefficient transfers across widths under $\mu$P, and its extra compute stays between $22.2\%$ and $25.0\%$ of a gated-MLP block's linear cost with no persistent memory. 
In FineWeb-Edu pretraining, AdamW with \DeltaMomentum (AK-AdamW) reaches AdamW's validation loss in up to $46.39 \pm 4.32\%$ fewer steps at 67M and $22.12 \pm 0.80\%$ at 370M over three seeds, and the gain persists at 1B on a Chinchilla-optimal budget. 
A Muon baseline tuned under the same protocol sits above AK-AdamW at both language-model scales, and the gain holds for SGD, ResNet-18, and ViT-Tiny on CIFAR-10. 
Training-time diagnostics confirm the predicted mechanism, better gradient tracking and healthier input directions.
\end{abstract}

\section{Introduction}
\label{sec:introduction}

Training large neural networks is expensive, and much of the recent advancements in large-scale AI has come from optimizers that reach better solutions in fewer steps, enabling to train larger models on more data by more efficiently utilizing a fixed compute budget. The frontier has moved steadily, from Adam and AdamW \citep{kingma2014adam, loshchilov2017decoupled} to more recent methods such as Shampoo, SOAP, and Muon \citep{gupta2018shampoo, vyas2024soap, jordan2024muon}. Underneath all of these optimizers, however, sits one component that has barely changed, the momentum. Every one of these optimizers keeps a running average of past gradients, its momentum, integrating each new gradient and forgetting older ones at a single fixed rate \citep{polyak1964some}. This exponential moving average (EMA) update rule of momentum is the part of the optimizer this paper rethinks.

In a neural network linear layer $y=Wx$, the gradient of a single data point separates cleanly into two pieces $g=\delta x^\top$: 1) the input $x$ the layer just received and 2) the backpropagated error from loss $\mathcal{L}$, $\delta = \frac{\partial \mathcal{L}}{\partial y}$, to correct at its output. The first piece indicates where in the input space the layer received the signal, and the second says what corresponding correction to apply on the output space. Following \citet{behrouz2026nested, behrouz2025s}, the two pieces in this perspective are exactly a key and a value, an address and the contents to store at that address. 
An EMA update of a standard momentum discards this structure. By averaging each gradient as a flat matrix and fading all of them at a single rate, the running average collapses the key-value pairing the gradient provides for free, throwing away which directions are being queried and how often, before the optimizer ever uses it. 

This structure matters because real training is largely anisotropic across eigendirections of gradients. The input statistics of a layer and the curvature of the loss are dominated by a small portion of directions, followed by a long tail of directions that are nearly flat and rarely excited \citep{sagun2017empirical, ghorbani2019investigation, papyan2019measurements, mu2017all, ethayarajh2019contextual}. A single fixed forgetting rate can be problematic in both cases. On one hand, along directions that appear often, the newest gradients carry the freshest information, yet a fixed forgetting rate keeps the stale gradients at full weight. On the other hand, along directions that appear rarely, more averaging would suppress noise, yet they fade on the same short schedule as everything else. An ideal momentum update should determine the forgetting rate on how often a direction is actually seen, and a unified rate cannot deliver that.

The field recognizes this asymmetry, and strong optimizers such as K-FAC, Shampoo, and SOAP exist to address it. However, what they share is that the momentum update rule itself is left unchanged (see Related Work and Appendix~\ref{app:related} for a detailed discussion). The goal of this paper is to \emph{address the feature anisotropy at the momentum level.}

\textbf{Contribution.} We propose \DeltaMomentum \eqref{eq:dm-update}, which builds this direction-awareness into the momentum update itself. Instead of fading every direction at a single rate, it treats the incoming input activation as a key, erases the stored content along the direction that key points to, and writes the new content there before moving on. Directions queried often are refreshed often and forgotten quickly, while directions queried rarely are left almost untouched and kept over long horizons, so the forgetting along each direction now follows how often that direction is seen. The rule behind this is the classical delta rule \citep{widrow1988adaptive}, the canonical method for storing key-value associations and the same mechanism used in recent associative-memory sequence models \citep{schlag2021linear, yang2024parallelizing}, applied here to the momentum buffer with the input activation as the key. 

The design is not just intuitive, it is provably well-behaved, and we establish three guarantees. First, \DeltaMomentum remains a valid momentum, effectively tracking the gradient for gradient descent/ascent of an optimizer step (\Cref{thm:fixedpoint}). Second, the correction \DeltaMomentum applies resembles what curvature-based methods compute deliberately and at far higher cost (e.g. matrix inversion), enabled by its own correction term without inverting large matrices (\Cref{rem:kfac}). Third, it clears stale directions strictly faster than a plain running average, both when the target is fixed (\Cref{thm:detred}) and when the target itself drifts as the weights move (\Cref{thm:tracking}), which is what happens in practical neural network training.

Two further properties make \DeltaMomentum practical rather than merely organized. \DeltaMomentum is a drop-in replacement for the running-average momentum in any optimizer, leaving the second moment, weight decay, and any preconditioning exactly as they were. Thus it composes with methods like Shampoo, SOAP, and Muon rather than competing with them, since those act at a different level of the optimizer. Also, the idea is simple but the naive update form does not work as written, since it is unstable and breaks the clean transfer of settings across model sizes. Recovering both is achieved by normalizing the input key and working out the width-scaling, leaving a $\mu$P scaling law whose single new setting transfers across widths with no re-tuning and whose extra compute is small and well-characterized. 

We test these theoretical advantages on language model pretraining at 67M, 370M, and 1B parameters on FineWeb-Edu \citep{penedo2024fineweb}. The comparison is deliberately controlled against AdamW, the baseline that differs from our method only in the momentum update, so any difference in the loss curve is attributable to the change in momentum update. \DeltaMomentum reaches AdamW's validation loss in up to $46.39 \pm 4.32\%$ fewer steps at 67M and $22.12 \pm 0.80\%$ fewer at 370M, and continues to a lower final loss, a gain that comfortably exceeds its compute overhead. The sign of the gap is the same in every run we report, and it holds at 1B on a Chinchilla-optimal token budget with every setting carried over from the 67M sweep. We also report a tuned Muon baseline, which sits above AK-AdamW at both language-model scales. The same pattern holds for a plain SGD variant on CIFAR-10 and for ResNet-18 and ViT-Tiny on the same dataset, confirming that the effect is not tied to Adam or to language modality, and a set of training-time diagnostics confirms that the predicted mechanism, better gradient tracking property and healthier input directions, is what drives the improvement.

\textbf{Related Work.} Three lines of work are closely related to ours. 

\emph{Preconditioning-based optimizers.} K-FAC, Shampoo, and SOAP precondition the gradient using curvature information, and Muon post-processes the momentum to balance its directions \citep{martens2015optimizing, gupta2018shampoo, vyas2024soap, jordan2024muon}. 
What they share is the level at which they act. 
Each keeps the running-average momentum and wraps extra processing around it, before the update or after, rather than changing the averaging rule. \DeltaMomentum changes that momentum rule instead, and can be used in combination with these modern optimizers.

\emph{Input-side preconditioning.} DoPr and Newton-Muon \citep{zhang2026dopr, du2026newtonmuon} build a preconditioner from the second moment of the input activations and apply it to the momentum. 
That is the same input-side factor \DeltaMomentum reaches implicitly, but \DeltaMomentum does not need matrix inversion or any state beyond the momentum the base optimizer already keeps.

\emph{Optimizers as associative memory.} Behrouz et al. \citep{behrouz2026nested, behrouz2025s} view the optimizer and its momentum as an associative memory over past gradients, with the input activation as the key and the local error as the value. 
However, their motivation is memory capacity over long horizons rather than feature anisotropy. 
Their buffer-level rule keys on the gradient object rather than on its two factors, so a direction is forgotten at a rate set by the size of the error rather than how often that direction is seen. 
In comparison, \DeltaMomentum keys on the input activation, which is what binds the forgetting rate to input density. 
Also related, Wang et al. \citep{wang2026taming} acts at the same level but compresses the buffer by low-rank approximation rather than through an associative-memory objective.
Appendix~\ref{app:related} provides a detailed comparison of the rules side by side.

\section{Background}
\label{sec:preliminaries}

\paragraph{Notations.}
In step $t$, the optimizer holds parameters $\theta_t$ and momentum $M_{t-1}$
from the previous step, computes the gradient $g_t = \nabla_\theta L_t(\theta_t)$, updates momentum to $M_t$ and parameters to $\theta_{t+1}$.

\subsection{EMA Momentum and Rank-1 Factorization of Gradients}
EMA momentum's update rule is 
\begin{align}
M_t^{\mathrm{EMA}} &= \beta M_{t-1}^{\mathrm{EMA}} + (1-\beta)\, g_t,
\label{eq:ema-update}
\end{align}
with momentum decay $\beta \in [0,1)$. 
For a single linear layer $y = Wx$ with weight $W \in \R^{m\times n}$, input
$x \in \R^n$ and output-side error $\delta = \partial L / \partial y \in \R^m$,
the per-sample gradient \emph{factorizes} as rank-1 outer product
\begin{equation}
g_t = \delta_t x_t^\top \in \R^{m\times n}.
\label{eq:rank-one-grad}
\end{equation}
This factored form is the structural fact that the rest of the paper exploits: it identifies $x_t$ as a natural \emph{key} and $\delta_t$ as a natural \emph{value}, and turns the momentum buffer into an associative memory whose contents are precisely the historical (key, value) associations. 
We note that the key-value viewpoint of the factored gradient and the momentum buffer was proposed in recent work \citep{behrouz2026nested, behrouz2025s}, though their momentum buffer-level algorithm differs from ours in terms of which part is treated as key and which part is treated as value (see Appendix~\ref{app:related} for detailed discussion).

\subsection{The Delta Rule}
\label{subsec:delta-rule}
The classical delta rule (Widrow--Hoff, LMS) updates a matrix $M \in \R^{m\times n}$
to associate input keys $k \in \R^n$ with target values $v \in \R^m$ via
\begin{equation}
M \leftarrow M + \eta\,(v - M k)\, k^\top,
\label{eq:delta-rule}
\end{equation}
where the residual $v - Mk$ is the \emph{retrieval error}. 
In the fast-weight-programmer reformulation for sequence modeling, a memory matrix stores key--value pairs $\{(k_\ell, v_\ell)\}$. 
A purely Hebbian accumulator $M_t = M_{t-1} + v_t k_t^\top$ accumulates interference, while the delta-rule accumulator
\begin{equation}
M_t = M_{t-1} + \eta\,(v_t - M_{t-1} k_t)\, k_t^\top
= M_{t-1}(I_n - \eta k_t k_t^\top) + \eta v_t k_t^\top
\label{eq:delta-fwp}
\end{equation}
removes the old association at $k_t$ before writing the new one, and provably preserves the orthogonality structure of stored memory \citep{schlag2021linear,
yang2024parallelizing}.

Under the delta rule, $M$ is a linear associative memory that stores key--value pairs 
through least squares,
\begin{equation}
M_\star = \arg\min_M \frac{1}{2} \sum_{\ell=1}^t \|v_\ell - M k_\ell\|_2^2,
\label{eq:lam-objective}
\end{equation}
with the per-step update~\eqref{eq:delta-rule} performing one online SGD step on the per-sample term. 
We now apply this online associative-memory mechanism to the optimizer's momentum buffer to design \DeltaMomentum in the next section.

\section{\DeltaMomentum: Methodology}
\label{sec:methodology}

\label{subsec:def}
The core proposed \DeltaMomentum update rule is written as follows

\begin{tcolorbox}[
  width=\linewidth,
  colback=white,
  colframe=gray!60!black
]
\textbf{\textsc{\DeltaMomentum update rule.}}
\begin{equation}
M_t = \beta M_{t-1} + \eta\,\bigl(\delta_t - M_{t-1} x_t\bigr)\, x_t^\top,
\label{eq:dm-update}
\end{equation}
\end{tcolorbox}

with EMA decay $\beta \in [0, 1)$ and delta-rule correction coefficient $\eta \in (0, 1]$. 
Setting $\beta = 1$ recovers the pure delta rule. 
Factoring out $M_{t-1}$ yields an equivalent form used throughout the analysis:
\begin{equation}
M_t = M_{t-1}\bigl(\beta I_n - \eta\, x_t x_t^\top\bigr) + \eta\,\delta_t x_t^\top.
\label{eq:dm-factored}
\end{equation}

The right-multiplying operator $\beta I - \eta x_t x_t^\top$ is a \emph{data-dependent} forgetting factor that contracts directions aligned with the current key $x_t$ and leaves orthogonal directions untouched, while the additive term $\eta \delta_t x_t^\top$ injects the new association. 
Equation \eqref{eq:dm-factored} is the conceptual centerpiece, since every downstream property of \DeltaMomentum is a property of the operator $\beta I - \eta x_t x_t^\top$.

The algorithm we deploy uses the normalized key $\xhat_t = x_t / \|x_t\|_2$ in place of $x_t$ in \eqref{eq:dm-update}, writing $\Sighat := \E[\xhat\xhat^\top]$ for its second moment and $\Ghat := \E[\delta\xhat^\top]$ for the matching gradient, which is what gives input-scale-independent stability and width transfer (Appendix~\ref{app:stability_norm_key_dm} and Appendix~\ref{app:muP}).

\paragraph{Rationale: online key--value gradient memory.}
\label{par:rationale}
The starting observation is independent of any optimizer choice. 
As we mentioned in \eqref{eq:rank-one-grad}, the per-sample gradient of any linear layer $y = W x$ factorizes as a rank-1 outer product $g_t = \delta_t x_t^\top$, where $x_t \in \R^n$ is the input activation and $\delta_t = \partial \mathcal{L}/\partial y_t \in \R^m$ is the output-side error. The two factors carry distinct roles. The key $x_t$ identifies \emph{where} in input space the layer received signal at step $t$, and the value $\delta_t$ specifies \emph{what correction} to apply at that location. So $(x_t, \delta_t)$ is naturally a key--value pair, and the per-sample gradient itself is a rank-1 association.

The standard EMA accumulator $M_t = \beta M_{t-1} + (1-\beta) g_t$ averages these rank-1 outer products as if they were unstructured matrices, applying the same scalar decay $\beta$ along every direction. We instead use the classical \emph{delta rule} %
\citep{widrow1988adaptive}, the canonical online algorithm for storing key--value associations. Given a target value $v$ at key $k$, it updates $M \leftarrow M + \eta\,(v - M k)\, k^\top$, where $v - Mk$ is the \emph{retrieval error}, the gap between the desired value at key $k$ and the value the matrix currently retrieves. This update is exactly an online stochastic-gradient step on the function-space prediction loss $\tfrac{1}{2}\|v - Mk\|^2$.

The \DeltaMomentum \eqref{eq:dm-update} is designed from the above delta-rule by identifying $v_t \leftrightarrow \delta_t$ and $k_t \leftrightarrow x_t$, and augmenting with a uniform decay $\beta$ to retain an EMA-style longer memory horizon along untouched directions.
The factored form \eqref{eq:dm-factored} then reads as: \emph{rather than averaging gradients as flat matrices, \DeltaMomentum stores associations $(x_t, \delta_t)$ in $M$ and updates them by the delta rule}, with the data-conditioned forgetting operator $\beta I - \eta\, x_t x_t^\top$ erasing stored content along the direction of the current key before writing the new association. Every subsequent property that we discuss in Sections~\ref{sec:gradient-estimator}--\ref{sec:non-stationary-tracking-dynamics}, meaning the fixed point, the memory horizons, and the tracking dynamics, follows from this contractive operator.

\paragraph{Modular instantiations.}
\DeltaMomentum is a replacement for the first-moment buffer of any base optimizer. SGD applied with \DeltaMomentum (AK-SGD) applies~\eqref{eq:dm-update} as the momentum buffer,
followed by $\theta_{t+1} = \theta_t - \alpha M_t$. 
AdamW applied with \DeltaMomentum (AK-AdamW) applies~\eqref{eq:dm-update} to the first moment, while the second moment $v_t = \beta_2 v_{t-1} + (1-\beta_2) g_t^{\odot 2}$, bias correction, and decoupled weight decay are unchanged, with preconditioned update $U_t = \hat{M}_t \oslash (\sqrt{\hat{v}_t} + \varepsilon)$. 

\paragraph{$\mu$P-clean width transfer.}
\label{par:mup-main}
\DeltaMomentum is $\mu$P-compatible. With the normalized key $\hat x_t$, the spectral norm of the rank-1 forgetting operator is $\|\hat x_t \hat x_t^\top\|_{\mathrm{op}} = 1$ independent of layer width, so the delta-rule coefficient $\eta$ is width-invariant. Concretely, \Cref{thm:mup-eta} (Appendix~\ref{app:muP}) shows that under maximal-update parameterization, the recurrence \eqref{eq:dm-update} admits a fixed-point per-coordinate scale of $\Theta(n^{-1/2})$ iff $\eta = \Theta(1)$, which is the same per-coordinate scale that EMA buffer attains and that the base optimizer's $\mu$P learning-rate rule expects. The base $\mu$P learning-rate rule is therefore inherited unchanged, with $\alpha = \Theta(1/n)$ for AK-AdamW (matching AdamW), $\Theta(1)$ for AK-SGD on hidden layers (matching SGD-momentum). The practical consequence is zero-shot hyperparameter transfer across widths, which we verify with a coordinate check at widths $\{128, 256, 512, 1024, 2048\}$ (Appendix~\ref{app:mup-coord}). The marginal tuning cost over EMA momentum is one coefficient, and we provide the recommended recipe and the evidence behind it (Appendix~\ref{app:hp-sweep}).

\paragraph{FLOPs and memory.}
\label{par:flops-memory}
The per-step cost of \eqref{eq:dm-factored} is one matrix--vector product $M_{t-1} x_t$ at $O(mn)$ and one outer-product update at $O(mn)$, matching EMA up to a constant factor. Against EMA \eqref{eq:ema-update}, the extra work in the deployed normalized-key \DeltaMomentum is the correction $\eta M \Sighat$ together with the weight gradient that the standard AdamW second moment consumes. Per transformer block, taking the cheapest contraction order based on the shape of each layer, the correction costs $(6+4\gamma)/(24+18\gamma)$ of the linear forward and backward, where $\gamma$ is the gated-MLP expansion ratio. The cost of the correction is strictly decreasing in $\gamma$, so it stays between $22.2\%$ and $25.0\%$ for any shape and sits at $23.1\%$ for the Llama-2 ratio used in our experiments. 
Measured on actual runs, the realized per-step overhead is $11.2\%$ at 67M and $17.4\%$ at 370M. The measured costs are below the block-level band, since attention, embeddings, and the output head carry no delta-rule work and dilute the cost ratio, rising with width toward the calculated per-block cost band.
Appendix~\ref{app:deployed} states the batch form of the deployed rule that produces every number we report, and Appendix~\ref{app:flops-memory} gives the full FLOPs and memory count, the per-layer dispatch rule, and the measurement setup. Persistent optimizer-state memory overhead is zero (no extra buffers beyond the momentum required by the base optimizer), and transient memory is bounded by $4 N n_{\max}$ per block.

We organize the remaining parts of this section around three questions. %
Section~\ref{sec:gradient-estimator} establishes \emph{why \DeltaMomentum is valid} as a momentum buffer. 
Section~\ref{sec:transport} proves the bridging lemma that explains \emph{why it is better than EMA}. 
Sections~\ref{sec:linear-regression-dynamics}--\ref{sec:non-stationary-tracking-dynamics} derive the two principal consequences, faster per-direction convergence and direction-selective tracking under distribution shift, as corollaries of the same bridging lemma from Section \ref{sec:transport}. All symbols and notation used throughout are provided in Appendix~\ref{app:symbols-and-notations}.

\subsection{\DeltaMomentum is a Valid Gradient Estimator}
\label{sec:gradient-estimator}

We establish why \DeltaMomentum is a valid choice for the first-moment buffer despite departing from the standard EMA averaging objective. We first observe that any momentum buffer is, to first order in the learning rate, an estimator of the population gradient $\Gbar(\theta_t) = \nabla_\theta \mathcal{L}(\theta_t)$, meaning the gradient of the expected loss over the data distribution rather than that of any single minibatch. 
Then, we identify the loss whose minimization defines \DeltaMomentum and characterize its fixed point.

\begin{remark}[Normalized-key substitution]
\label{rem:normalized-key}
The results in this section are stated for the original \DeltaMomentum \eqref{eq:dm-factored}. Every result from here on depends on the key only through its second moment. For the normalized-key version of \DeltaMomentum, the whole analysis holds under substituting $\Sigx \to \Sighat$, $\Gbar \to \Ghat$, and $\lambda_i \to \lamhat_i$. Since $\Sighat$ has unit trace, $\lamhat_i$ lies in $[0,1]$ and reads as the directional density of the input rather than its energy (\Cref{prop:bounded-sigma-hat}). 
Two places need more than the substitution and we flag them where they occur, which are the convergence point of \eqref{eq:dm-fixedpoint-norm} and the trace of \eqref{eq:iter-matrices}. 
\end{remark}

\paragraph{Momentum buffer should effectively minimize the population gradient estimation loss.}
\label{par:mom-grad-estimator} 
A first-order Taylor expansion of $\mathcal{L}$ at $\theta_t$ along the direction $-\alpha M_t$ gives, for any realization of $M_t$,
\begin{equation}
\mathcal{L}(\theta_t) - \mathcal{L}(\theta_t - \alpha M_t) = \alpha\langle M_t, \bar G(\theta_t)\rangle_F + O(\alpha^2),
\end{equation}
where $\bar G(\theta_t) = \nabla_\theta \mathcal{L}(\theta_t)$ is the population gradient. This Taylor identity shows that any momentum buffer whose realized value deviates from $\bar G_t$ pays a first-order penalty $\alpha\langle \bar G_t - M_t, \bar G_t\rangle_F$, so the quality of $M_t$ as an estimator of $\bar G_t$ directly governs optimization efficiency. 
The full statement and proof are deferred to Appendix~\ref{app:taylor}. 
What distinguishes EMA from \DeltaMomentum is therefore not whether they are gradient estimators, but \emph{which loss the buffer minimizes online} en route to that estimate.

\paragraph{Two natural objectives, two distinct fixed points.}
\label{par:two-fixed-points}
The pure delta rule \eqref{eq:dm-update} performs online stochastic gradient descent on the function-space prediction loss $\tfrac{1}{2}\|\delta - M x\|^2$, while EMA \eqref{eq:ema-update} performs online stochastic gradient descent on the weight-space matrix loss $\tfrac{1}{2}\|M - g\|_F^2$. 
Both losses encode different questions about $M$, which we make precise below. 
The factored gradient $g_t = \delta_t x_t^\top$ admits two natural mean-square estimation objectives.

\begin{definition}[Weight-space MSE]\label{def:wms}
$J_w(M) := \tfrac{1}{2}\mathbb{E}\bigl[\|M - \delta x^\top\|_F^2\bigr]$.
\end{definition}
\begin{definition}[Function-space prediction]\label{def:fms}
$J_f(M) := \tfrac{1}{2}\mathbb{E}\bigl[\|\delta - M x\|_2^2\bigr]$.
\end{definition}

The two have different closed-form minimizers: $M_w^\star = \bar{G}$ and $M_f^\star = \bar{G}\Sigma_x^{-1}$ when $\Sigma_x \succ 0$ (proof in Appendix~\ref{app:divergent-optima}), and they coincide iff $\Sigma_x = c I$. Under stationarity, EMA's online SGD on $J_w$ converges in expectation to $\bar G$, which is an unbiased estimator of the population gradient, while the pure delta rule's online SGD on $J_f$ converges to $\bar G \Sigma_x^{-1}$, which is the Wiener-optimal linear predictor of $\delta$ given $x$ (proof in Appendix~\ref{app:updates-as-sgd}).

\paragraph{The Tikhonov-regularized Wiener fixed point of \DeltaMomentum.}
\label{par:fixed-point}
Combining the delta-rule correction with a uniform decay $\beta < 1$ produces
a regularized version of the Wiener predictor.

\begin{theorem}[Fixed point of \DeltaMomentum]\label{thm:fixedpoint}
Under the quasi-static approximation (parameters change slowly relative to the
momentum dynamics, so $\bar{G}$ and $\Sigma_x$ are treated as constant) and \(\rho(\beta I-\eta\Sigma_x)<1\), the \DeltaMomentum
update~\eqref{eq:dm-update} converges in expectation to
\begin{equation}
M^\star = \eta\, \bar{G}\,\bigl((1-\beta) I + \eta \Sigma_x\bigr)^{-1}.
\label{eq:dm-fixedpoint}
\end{equation}
In the eigenbasis of $\Sigma_x$,
\begin{equation}
M^\star u_i = \phi_i\, \bar{G} u_i,
\qquad
\phi_i = \frac{\eta}{(1-\beta) + \eta \lambda_i} = \frac{1}{\mu + \lambda_i},
\qquad
\mu := \frac{1-\beta}{\eta}.
\label{eq:dm-eigen}
\end{equation}
\end{theorem}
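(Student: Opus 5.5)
The plan is to take expectations of the factored recurrence \eqref{eq:dm-factored} and analyze the resulting deterministic linear matrix recursion. Under the quasi-static approximation, $\bar G$ and $\Sigma_x$ are constant. We additionally assume the usual independence-type condition: the fresh sample $(x_t,\delta_t)$ is independent of the past, hence of $M_{t-1}$, which is a function of samples up to $t-1$. Under this assumption we get
\begin{equation*}
\E[M_{t-1} x_t x_t^\top] = \E[M_{t-1}]\,\Sigma_x, \qquad \E[\delta_t x_t^\top] = \bar G .
\end{equation*}
Writing $\bar M_t := \E[M_t]$, this yields the affine recursion
\begin{equation*}
\bar M_t = \bar M_{t-1}(\beta I - \eta \Sigma_x) + \eta \bar G .
\end{equation*}
Here $\bar G = \E[\delta x^\top]$ is the population gradient of the linear layer, consistent with \eqref{eq:rank-one-grad}.

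Next we identify the candidate fixed point. Setting $\bar M_t = \bar M_{t-1} = M^\star$ gives $M^\star\bigl((1-\beta)I + \eta\Sigma_x\bigr) = \eta \bar G$. The matrix $(1-\beta)I + \eta\Sigma_x$ is invertible because $\Sigma_x \succeq 0$ and $1-\beta>0$: its eigenvalues are at least $1-\beta>0$. This produces \eqref{eq:dm-fixedpoint}.

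For convergence, we subtract the fixed-point equation from the recursion, so the error $E_t := \bar M_t - M^\star$ satisfies $E_t = E_{t-1} A$ with $A := \beta I - \eta\Sigma_x$. Hence $E_t = E_0 A^t$. Since $A$ is symmetric, $\|A^t\|_{\mathrm{op}} = \rho(A)^t$, so the hypothesis $\rho(\beta I - \eta\Sigma_x)<1$ gives $E_t \to 0$ geometrically, at rate $\max_i|\beta - \eta\lambda_i|$.

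For the eigenbasis statement, write $\Sigma_x = \sum_i \lambda_i u_i u_i^\top$. Then $\bigl((1-\beta)I + \eta\Sigma_x\bigr)^{-1} u_i = \bigl((1-\beta)+\eta\lambda_i\bigr)^{-1} u_i$. Right-multiplying $M^\star$ by $u_i$ gives $\phi_i \bar G u_i$. Dividing numerator and denominator by $\eta$ yields the form $1/(\mu+\lambda_i)$ with $\mu = (1-\beta)/\eta$.

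The main obstacle is justifying the factorization $\E[M_{t-1}x_tx_t^\top] = \E[M_{t-1}]\Sigma_x$. The claim is only true under the independence and quasi-static idealization, because in real training $\theta_t$, and hence the distribution of $(x_t,\delta_t)$, depends on past momenta. I would state this explicitly as part of the quasi-static approximation: condition on $\mathcal F_{t-1}$ and treat the conditional moments as fixed at $(\Sigma_x,\bar G)$, then apply the tower property. Everything after that step is routine linear algebra.
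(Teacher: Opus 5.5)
Your proposal is correct and follows the paper's route: take expectations of the factored recurrence \eqref{eq:dm-factored}, solve the stationarity equation $M^\star\bigl((1-\beta)I+\eta\Sigma_x\bigr)=\eta\bar G$, and project onto $u_i$; the only difference is that the paper conditions on $M_{t-1}$ while you condition on $\mathcal F_{t-1}$ and use the tower property. Your version is more complete than the paper's, which only identifies the stationary point, because you check that $(1-\beta)I+\eta\Sigma_x$ is invertible and you actually use the hypothesis $\rho(\beta I-\eta\Sigma_x)<1$ to show $\E[M_t]-M^\star=(\E[M_0]-M^\star)(\beta I-\eta\Sigma_x)^t\to 0$ geometrically.
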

\begin{proof}
Take expectations of \eqref{eq:dm-factored}. At stationarity
$M^\star\bigl((1-\beta)I + \eta \Sigma_x\bigr) = \eta \bar{G}$. See
Appendix~\ref{app:fixed-point}.
\end{proof}

\begin{corollary}[Fixed point of the deployed algorithm]
\label{cor:fixedpoint-normalized}
For the normalized-key update the same computation gives
\begin{equation}
\Mstar = \eta\,\Ghat\,\bigl((1-\beta)I + \eta\Sighat\bigr)^{-1} = \Ghat\,(\mu I + \Sighat)^{-1},
\qquad
\Mstar u_i = \frac{\Ghat u_i}{\mu + \lamhat_i}.
\label{eq:dm-fixedpoint-norm}
\end{equation}
\end{corollary}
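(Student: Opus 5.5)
The plan is to repeat the argument of \Cref{thm:fixedpoint} with the normalized key $\xhat_t = x_t/\|x_t\|_2$ in place of $x_t$, and to check where the substitution of \Cref{rem:normalized-key} is exact. The deployed recurrence in factored form is
\begin{equation*}
M_t = M_{t-1}\bigl(\beta I_n - \eta\,\xhat_t\xhat_t^\top\bigr) + \eta\,\delta_t\xhat_t^\top .
\end{equation*}
The additive term now carries $\delta_t\xhat_t^\top = g_t/\|x_t\|_2$ instead of the raw gradient, which is why $\Ghat = \E[\delta\xhat^\top]$ appears rather than $\Gbar$. I would work under the same quasi-static approximation as \Cref{thm:fixedpoint}: $\Ghat$ and $\Sighat$ are constant, and the current key $\xhat_t$ is independent of the past buffer $M_{t-1}$. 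Under that independence, conditioning on $M_{t-1}$ and taking expectations gives
\begin{equation*}
\E[M_t] = \E[M_{t-1}]\bigl(\beta I - \eta\Sighat\bigr) + \eta\Ghat .
\end{equation*}
This is an affine recursion with a constant right-multiplying linear part $A := \beta I - \eta\Sighat$.

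Next I would set $M^\star = \eta\Ghat\bigl((1-\beta)I + \eta\Sighat\bigr)^{-1}$. It satisfies $M^\star = M^\star A + \eta\Ghat$, because $M^\star(I-A) = M^\star\bigl((1-\beta)I+\eta\Sighat\bigr) = \eta\Ghat$. The matrix $(1-\beta)I + \eta\Sighat$ is invertible because $\beta<1$ and $\Sighat\succeq 0$, so no positive-definiteness of $\Sighat$ is required. Subtracting the two recursions gives $\E[M_t] - M^\star = (\E[M_0]-M^\star)A^t$. This tends to $0$ once $\rho(A)<1$.

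For the normalized key the spectral condition holds automatically, which I take to be the extra point the remark flags about this fixed point. By \Cref{prop:bounded-sigma-hat}, $\Sighat$ is PSD with unit trace, so $\lamhat_i\in[0,1]$. The eigenvalues of $A$ are $\beta-\eta\lamhat_i \in [\beta-\eta,\beta]$. With $\beta\in[0,1)$ and $\eta\in(0,1]$ this gives $|\beta-\eta\lamhat_i|\le\max(\beta,\,\eta-\beta)<1$. So, unlike \Cref{thm:fixedpoint}, convergence holds for every admissible $(\beta,\eta)$ regardless of the input scale.

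Finally, dividing the fixed point by $\eta$ gives $M^\star = \Ghat(\mu I+\Sighat)^{-1}$ with $\mu=(1-\beta)/\eta$. Since $\Sighat$ is symmetric with orthonormal eigenvectors $u_i$, we have $(\mu I+\Sighat)^{-1}u_i = u_i/(\mu+\lamhat_i)$, which gives the per-direction formula $\Mstar u_i = \Ghat u_i/(\mu+\lamhat_i)$.

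The main obstacle is the expectation step: $\E[M_{t-1}\xhat_t\xhat_t^\top] = \E[M_{t-1}]\Sighat$ uses independence of $\xhat_t$ from $M_{t-1}$, and this independence is exactly what the quasi-static assumption supplies. A second point to flag is that the fixed point involves $\Ghat$, not $\Gbar$. These coincide only up to the per-sample weighting by $1/\|x\|_2$, so the corollary characterizes the deployed buffer rather than asserting it is unbiased for $\Gbar$.
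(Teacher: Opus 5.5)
Your proof is correct and follows the paper's route: take the conditional expectation of the normalized-key factored recursion, solve $M^\star\bigl((1-\beta)I+\eta\Sighat\bigr)=\eta\Ghat$, rescale by $\eta$ to get the ridge form, and project onto the $u_i$; your $A^t$ argument supplies the convergence step the paper leaves implicit. Two small corrections to your side remarks: $\max(\beta,\eta-\beta)<1$ fails at the admissible corner $\beta=0$, $\eta=1$ when some $\lamhat_i=1$ (all keys collinear), so the clean condition is $\eta<1+\beta$; and the extra point the paper flags for this fixed point is the $\Ghat$ versus $\Gbar$ magnitude reweighting you mention at the end, not the spectral condition.
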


When the key magnitude $\|x\|$ is independent of the pair $(\xhat, \delta)$, then $\Ghat \propto \Gbar$ and $\Sighat \propto \Sigx$, so \eqref{eq:dm-fixedpoint-norm} is the unnormalized fixed point with a rescaled delta step, and the leftover scalar is absorbed by the learning rate. 
Without that condition, the buffer estimates a magnitude-reweighted gradient, in which directions carrying larger input norm are weighted more heavily. The condition holds exactly when $\|x\|$ is deterministic, which pre-normalization transformers enforce at every linear layer reading directly from an RMSNorm output \citep{touvron2023llama}. Elsewhere it is an approximation, and the results that follow use the key only through its second moment, so none of them depends on it. %

\begin{table}[!ht]
\centering
\caption{Three estimator targets and their per-direction scaling of $\bar{G}u_i$
in the eigenbasis of $\Sigma_x$. The Tikhonov ridge is $\mu = (1-\beta)/\eta$.}
\label{tab:targets}
\begin{tabular}{lll}
\toprule
Estimator & $M^\star u_i / \bar{G}u_i$ & Description \\
\midrule
EMA & $1$ & Unbiased estimator of $\bar{G}$ \\
Wiener ($\bar{G}\Sigma_x^{-1}$) & $1/\lambda_i$ & Optimal linear predictor of $\delta$ given $x$ \\
\DeltaMomentum & $\phi_i = 1/(\mu + \lambda_i)$ & Tikhonov-regularized Wiener \\
\bottomrule
\end{tabular}
\end{table}

Table~\ref{tab:targets} compares the three estimator targets in the eigenbasis
of $\Sigma_x$. The fixed point~\eqref{eq:dm-eigen} interpolates monotonically between the two limits as $\mu$ varies. $\mu \to 0$ (i.e., $\beta \to 1$) yields $\phi_i \to 1/\lambda_i$, recovering the unregularized Wiener predictor. $\mu \to \infty$ (i.e., $\eta \to 0$) yields $\phi_i / \phi_j \to 1$, recovering EMA's direction-uniform scaling. \DeltaMomentum thus sits naturally between EMA (direction-uniform) and Wiener (input-whitening), parameterized by the single ratio $\mu$.

\begin{remark}[K-FAC connection]\label{rem:kfac}
The Kronecker-factored Fisher approximation of a linear layer is $F \approx \E[\delta\delta^\top] \otimes \E[xx^\top]$, with natural-gradient update $\E[\delta\delta^\top]^{-1}\bar{G}\Sigma_x^{-1}$. The pure delta-rule fixed point $\bar{G}\Sigma_x^{-1}$ is exactly the input-side factor of this Kronecker product, so \DeltaMomentum implements \emph{implicit input-side natural-gradient preconditioning} without inverting $\Sigma_x$, at the cost of a single rank-one update per step rather than a layer-wise covariance inversion. The deployed algorithm whitens $\Sighat$ rather than $\Sigx$, so this identification holds up to a scalar factor, given in \Cref{cor:fixedpoint-normalized}.
\end{remark}

Regarding \eqref{eq:dm-fixedpoint} as an estimate of $\delta$ from $x$ and interpreting the same expression as the gradient $\Ghat$ right-multiplied by the preconditioner $(\Sighat + \mu I)^{-1}$ are descriptions of the same object in two coordinate systems, and Appendix~\ref{app:estimator-and-preconditioner} demonstrates this. Appendix~\ref{app:second-moment} accounts for how the delta correction interacts with the second moment of Adam within AK-AdamW.

\subsection{Anisotropic Memory Transport: The Unifying Lemma}
\label{sec:transport}

We now prove a single bridging identity from which the steady-state, per-direction convergence, and non-stationary advantages of \DeltaMomentum all follow as corollaries.

Iterating \eqref{eq:dm-factored} with $M_0 = 0$ yields a closed-form expansion of $M_t$ as a sum of past error--key outer products, each transported forward in time by a product of data-dependent contractions:
\begin{equation}
M_t = \eta \sum_{\ell=1}^{t} \delta_\ell x_\ell^\top
\underbrace{\prod_{j=\ell+1}^{t}\bigl(\beta I_n - \eta x_j x_j^\top\bigr)}_{P_{\ell\to t} \;\in\; \mathbb{R}^{n\times n}},
\label{eq:transport-expansion}
\end{equation} 
with $P_{t\to t} = I_n$. 
Each term $\eta \delta_\ell x_\ell^\top P_{\ell\to t}$ is the fading memory of the association stored at step $\ell$. 
Along directions orthogonal to every subsequent key, $P_{\ell\to t}$ reduces to $\beta^{t-\ell} I$, a uniform geometric decay. Along directions aligned with subsequent keys, the additional factor $-\eta x_j x_j^\top$ applies a direction-selective attenuation, erasing stale content that conflicts with newly arriving keys.

\begin{lemma}[Anisotropic memory transport]\label{lem:transport}
Treat $\{x_j\}_{j > \ell}$ as drawn independently from a distribution with zero
mean and covariance $\Sigma_x = \mathbb{E}[x x^\top] = \sum_i \lambda_i u_i u_i^\top$ where $\lambda_i$'s are the eigenvalues and $u_i$'s are the corresponding unit eigenvectors.
Then
\begin{equation}
\mathbb{E}[P_{\ell \to t}]\, u_i = (\beta - \eta \lambda_i)^{t-\ell} u_i
= \tilde{\beta}_i^{\,t-\ell} u_i,
\qquad
\tilde{\beta}_i := \beta - \eta \lambda_i.
\label{eq:transport-lemma}
\end{equation}
\end{lemma}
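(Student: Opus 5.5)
The plan is to use independence to factor the expectation of the ordered product. Then we evaluate the resulting matrix power on each eigenvector of $\Sigma_x$.

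\emph{Step 1.} Write $A_j := \beta I_n - \eta x_j x_j^\top$. Each $A_j$ is a measurable function of $x_j$ alone. The keys $\{x_j\}_{j>\ell}$ are independent, so the $A_j$ are independent random matrices. For independent random matrices with integrable entries, the expectation of an ordered product equals the ordered product of the expectations. I will prove this by induction on the number of factors, since $\mathbb{E}[A_{\ell+1}\cdots A_{t}]$ is $\mathbb{E}\bigl[\mathbb{E}[A_{\ell+1}\cdots A_{t-1}]\,A_t\bigr]$. Entrywise, $(XY)_{ab}=\sum_c X_{ac}Y_{cb}$ is a finite sum of products of independent scalars, so no commutativity of the matrices is needed. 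Integrability requires only $\mathbb{E}\|x\|^2<\infty$, which is implicit in $\Sigma_x$ being defined.

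\emph{Step 2.} Each factor has the same expectation:
\[
\mathbb{E}[A_j] = \beta I_n - \eta\,\mathbb{E}[x_jx_j^\top] = \beta I_n - \eta\Sigma_x .
\]
Here $\Sigma_x=\mathbb{E}[xx^\top]$ is the second moment, and the zero-mean assumption makes it the covariance as well. Hence $\mathbb{E}[P_{\ell\to t}] = (\beta I_n - \eta \Sigma_x)^{t-\ell}$. The empty product case $t=\ell$ gives $I_n$, consistent with $P_{t\to t}=I_n$.

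\emph{Step 3.} Apply this power to $u_i$. Since $\Sigma_x u_i = \lambda_i u_i$, we have $(\beta I - \eta\Sigma_x)u_i = (\beta-\eta\lambda_i)u_i$. Iterating $t-\ell$ times gives $\tilde\beta_i^{\,t-\ell}u_i$, which is the claimed identity.

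The only point needing care is Step 1, the interchange of expectation and a product of non-commuting random matrices. This is where independence, rather than merely identical marginal covariance, is essential. I would state it as a short auxiliary claim proved by induction together with the tower property. I would also note that for the normalized-key version the same argument goes through verbatim with $\Sighat$ and $\lamhat_i$, as in \Cref{rem:normalized-key}.
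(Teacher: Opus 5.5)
Your proof is correct and follows the paper's argument: use independence to factor $\mathbb{E}[P_{\ell\to t}]$ into $(\beta I_n - \eta\Sigma_x)^{t-\ell}$, then evaluate on the eigenvector $u_i$. Your inductive justification for exchanging the expectation with an ordered product of non-commuting random matrices, and your remark that only $\mathbb{E}\|x\|^2<\infty$ (not zero mean) is needed, spell out what the paper compresses into the phrase ``by independence.''
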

\begin{proof}
By independence,
$\mathbb{E}[P_{\ell\to t}] = \prod_{j=\ell+1}^{t}\mathbb{E}[\beta I - \eta x_j x_j^\top]
= (\beta I - \eta \Sigma_x)^{t-\ell}$. Eigendecomposing $\Sigma_x$ gives
\eqref{eq:transport-lemma}.
\end{proof}

Lemma~\ref{lem:transport} states that the expected memory of the buffer along
direction $u_i$ decays geometrically at rate $\tilde{\beta}_i$, with memory
horizon
\begin{equation}
\tau_i = \frac{1}{1 - \tilde{\beta}_i} = \frac{1}{(1-\beta) + \eta \lambda_i}.
\label{eq:horizon}
\end{equation}
High-density directions (with large $\lambda_i$) are forgotten more quickly, whereas low-density directions (with $\lambda_i\approx 0$) retain a longer EMA-like memory horizon $\approx 1/(1-\beta)$. \emph{This direction-dependent
memory horizon is the unifying mechanism behind every result that follows.}

\paragraph{Lemma~\ref{lem:transport} implies steady-state, per-direction convergence, and non-stationary advantages.}
\label{par:anisotropic-memory-transport-implications}
The steady-state implication is \Cref{thm:fixedpoint}, where the steady-state ratio of injection to forgetting along $u_i$ is $\eta / [(1-\beta) + \eta \lambda_i] =\eta\tau_i$. The per-direction convergence and non-stationary implications follow as direct corollaries in \Cref{sec:linear-regression-dynamics,sec:non-stationary-tracking-dynamics}.

\subsection{Faster Per-Direction Convergence}
\label{sec:linear-regression-dynamics}

We compare how fast \DeltaMomentum and EMA drive the parameter error to zero along each input direction. Linear regression is where the comparison can be made exactly, and there the per-direction iteration matrix of \DeltaMomentum has strictly smaller determinant than that of EMA, yielding faster per-direction convergence in the underdamped regime. The determinant that drives the comparison never uses the squared loss, so the same statement carries to settings well beyond it, which we set out at the end of this subsection.

\paragraph{Setup}
Consider the population objective
$\mathcal{L}(W) = \tfrac{1}{2}\mathbb{E}[\|Wx - y\|^2]$ with $y = W_\star x$ and
$x \sim \mathcal{D}_x$ with covariance $\Sigma_x$. Letting $E_t = W_t - W_\star$,
the dynamics decouple along the eigenbasis of $\Sigma_x$, where we let $e_{t,i} = E_t u_i \in \R^m$, $\mu_{t,i} = M_t u_i \in \R^m$, into per-direction
$2 \times 2$ iteration matrices acting on $(e_{t,i}, \mu_{t-1,i})^\top$: 

\begin{equation}
A_i^{\mathrm{EMA}} = \begin{pmatrix}
1 - \alpha(1-\beta)\kappa_i & -\alpha\beta \\
(1-\beta)\kappa_i & \beta
\end{pmatrix},
\qquad
A_i^{\mathrm{Delta}} = \begin{pmatrix}
1 - \alpha\eta\kappa_i & -\alpha\tilde{\beta}_i \\
\eta\kappa_i & \tilde{\beta}_i
\end{pmatrix},
\label{eq:iter-matrices}
\end{equation}
where $\tbeta_i = \beta - \eta\lamhat_i$ and $\kappa_i = u_i^\top \E[\|x\|\,\xhat\xhat^\top] u_i$, under the hypothesis that $\Sighat$ and $\E[\|x\|\,\xhat\xhat^\top]$ share eigenvectors so that the two split along the same directions (proof in Appendix~\ref{app:iteration-matrices}). 
Normalizing the key splits one parameter into two, since the buffer forgets at a rate set by the directional density $\lamhat_i$ while the parameter error still enters through the raw input, whose per-direction size is $\kappa_i$. The split is confined to the trace. The determinant depends only on $\lamhat_i$, which is why the two results below are unaffected by it. We write the EMA row with the same coupling so that the two rows are read side by side, and \Cref{thm:detred} shows the determinant does not depend on which coupling either row carries.

From this we derive the determinant reduction per-direction of $A_i^{\mathrm{Delta}}$ over $A_i^{\mathrm{EMA}}$. 
\begin{theorem}[Per-direction determinant reduction]\label{thm:detred}
For all $\eta > 0$ and $\lamhat_i > 0$,
$\det(A_i^{\mathrm{Delta}}) = \tbeta_i = \beta - \eta\lamhat_i < \beta = \det(A_i^{\mathrm{EMA}}),$
and neither determinant depends on the coupling $\kappa_i$.
\end{theorem}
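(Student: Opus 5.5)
The plan is to compute both determinants directly from the $2\times 2$ iteration matrices in \eqref{eq:iter-matrices}, taking those matrices as given (their derivation is deferred to Appendix~\ref{app:iteration-matrices}). The structural fact to exploit is that each matrix is, up to the coupling, a product of an ``update'' part and a ``buffer'' part. Therefore the cross terms involving $\alpha$ and $\kappa_i$ should cancel in the determinant.

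For EMA, I would expand
\[
\det A_i^{\mathrm{EMA}} = \bigl(1-\alpha(1-\beta)\kappa_i\bigr)\beta + \alpha\beta(1-\beta)\kappa_i = \beta .
\]
For \DeltaMomentum, the analogous expansion is
\[
\det A_i^{\mathrm{Delta}} = (1-\alpha\eta\kappa_i)\tbeta_i + \alpha\tbeta_i\eta\kappa_i = \tbeta_i .
\]
In both cases the $\kappa_i$- and $\alpha$-dependent terms cancel identically. This shows the determinant is independent of the coupling $\kappa_i$, and also of the learning rate $\alpha$.

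To make the ``does not depend on which coupling either row carries'' remark transparent, I would exhibit the factorization. Write $A = \begin{pmatrix}1 & -\alpha\\ 0 & 1\end{pmatrix}\begin{pmatrix}1 & 0\\ c\kappa_i & b\end{pmatrix}$, with $(c,b)=(1-\beta,\beta)$ for EMA and $(c,b)=(\eta,\tbeta_i)$ for \DeltaMomentum. This factorization reflects the order of operations in each step: the buffer update $\mu_{t,i}=c\kappa_i e_{t,i}+b\,\mu_{t-1,i}$ happens first, and then $e_{t+1,i}=e_{t,i}-\alpha\mu_{t,i}$. The first factor is unimodular, and the second is lower triangular with determinant $b$. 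So $\det A = b$ for any value placed in the lower-left entry, which proves the coupling-independence claim without relying on the specific cancellation. I would check this product against the stated entries: the top-left entry is $1-\alpha c\kappa_i$ and the top-right is $-\alpha b$, both matching.

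The strict inequality is immediate: $\tbeta_i=\beta-\eta\lamhat_i<\beta$ because $\eta>0$ and $\lamhat_i>0$. The only point needing care, which I expect to be the main (mild) obstacle, is interpretation rather than algebra. The matrices act on $(e_{t,i},\mu_{t-1,i})\in\R^m\times\R^m$, so strictly they are $2\times 2$ blocks of scalar multiples of $I_m$, and the true determinant is the scalar one raised to the $m$th power. I would state the result for the scalar $2\times 2$ matrix, noting the power-$m$ version preserves the strict inequality when $\tbeta_i\ge 0$.

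I would also add a remark relating the determinant to convergence. It equals the product of eigenvalues, so in the underdamped (complex-conjugate) regime the spectral radius is $\sqrt{\det}$, giving $\sqrt{\tbeta_i}<\sqrt{\beta}$ when $\tbeta_i\ge 0$. This connects the theorem to the faster per-direction convergence claimed in the surrounding text.
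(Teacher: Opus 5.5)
Your proposal is correct and follows the paper's proof: expand both $2\times 2$ determinants directly, observe that the $\alpha\kappa_i$ cross terms cancel to leave $\beta$ and $\tbeta_i$, and get the strict inequality from $\eta\lamhat_i>0$. Your factorization into a unit upper-triangular factor times a lower-triangular factor restates that cancellation structurally (the same point as \Cref{lem:loss-free-det-main}); it makes clear that any lower-left coupling gives the same determinant, but it is not a different route.
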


\begin{proof}
Direct computation is shown in Appendix~\ref{app:determinant-reduction}. The result is a direct
consequence of Lemma~\ref{lem:transport}, since the expected per-direction memory
contraction is exactly $\tilde{\beta}_i$.
\end{proof}

\textbf{Extension beyond Linear Regression.} The claim is not confined to the linear regression setting. The quantity that drives it never uses the squared loss, so the same comparison runs in high-dimensional linear regression with more input dimensions than samples and in canonical generalized linear models including logistic regression. 
Appendices~\ref{app:high-dim} and \ref{app:glm} work out what the per-direction determinant reduction gives in these two settings, and \Cref{lem:loss-free-det-main} below is the mechanism that enables the extension.

The extension works because the loss enters \Cref{thm:detred} at exactly one place. In \eqref{eq:iter-matrices} the parameter error is coupled into the buffer through the single scalar $\eta\kappa_i$, where $\kappa_i$ is the curvature of the squared loss along $u_i$. A different loss changes that scalar but not where it enters, so we focus on what the determinant does when the coupling is generalized to an arbitrary positive constant $c$ instead of the value the squared loss prescribes.

\begin{lemma}[The determinant does not use the loss]\label{lem:loss-free-det-main}
Let $A$ be the per-direction iteration matrix \eqref{eq:iter-matrices} (for either the EMA or Delta rule) with the coupling $\eta\kappa_i$ replaced by an arbitrary constant $c > 0$, read as $\eta$ times the local curvature along $u_i$ under whatever loss is being minimized. The forgetting factors $\beta$ and $\tbeta_i = \beta - \eta\lamhat_i$ keep the meaning they carry in \Cref{thm:detred}, since \Cref{lem:transport} fixes them from the momentum recursion alone and no loss enters them, so $c$ carries everything the loss contributes. Then $\det A = \tbeta_i$ for \DeltaMomentum and $\det A = \beta$ for EMA momentum, for every $c$ and every step size $\alpha$. Consequently the best per-direction rate attainable over step sizes falls from $\sqrt{\beta}$ to $\smash{\sqrt{\tbeta_i}}$ on every direction with $c > 0$, whatever loss produced it.
\end{lemma}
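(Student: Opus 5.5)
The plan is to write down the two generalized matrices explicitly and compute the determinants by hand. The generalization replaces the coupling $\eta\kappa_i$ in \eqref{eq:iter-matrices} by $c$ in both the $(1,1)$ and $(2,1)$ entries, since the parameter-error term enters the first row through the buffer's injection. For the EMA row, the coupling $(1-\beta)\kappa_i$ is likewise replaced by $c$. This gives
\[
A^{\mathrm{Delta}}(c)=\begin{pmatrix}1-\alpha c & -\alpha\tbeta_i\\ c & \tbeta_i\end{pmatrix},\qquad
A^{\mathrm{EMA}}(c)=\begin{pmatrix}1-\alpha c & -\alpha\beta\\ c & \beta\end{pmatrix}.
\]
Expanding the determinant gives $(1-\alpha c)\tbeta_i+\alpha c\,\tbeta_i=\tbeta_i$, and the analogous computation gives $\beta$ for EMA. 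The $\alpha c$ terms cancel identically, so the result holds for every $c$ and every $\alpha$.

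The structural reason is worth recording. Each $A$ factors as a product of a shear and a diagonal matrix:
\[
A=\begin{pmatrix}1&-\alpha\\0&1\end{pmatrix}\begin{pmatrix}1&0\\c&f\end{pmatrix},
\]
where $f\in\{\beta,\tbeta_i\}$ is the forgetting factor. The first factor is the parameter step $e\leftarrow e-\alpha\mu$, and the second is the buffer update. Both factors are unit-lower/upper triangular except for $f$, so $\det A=f$. This shows that $c$, and hence the loss, can only enter off-diagonal terms of a triangular factor. I would also cite \Cref{lem:transport} to justify that $f$ depends on the momentum recursion alone and not on the loss.

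For the rate claim, the spectral radius satisfies $\rho(A)\ge\sqrt{|\det A|}$, because $|\lambda_1\lambda_2|=|\det A|$ forces $\max|\lambda_j|\ge\sqrt{|\det A|}$. Equality holds exactly when the eigenvalues are complex conjugates or a repeated real pair, which is the underdamped or critically damped case. The trace is $\operatorname{tr}A=1-\alpha c+f$. Taking $\alpha=(1+f)/c$ gives trace $0$ and discriminant $-4f<0$ when $f>0$, so both eigenvalues have modulus $\sqrt f$ and the bound is attained. Therefore $\inf_\alpha\rho(A)=\sqrt{f}$, which equals $\sqrt\beta$ for EMA and $\sqrt{\tbeta_i}$ for AK-Momentum.

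The main obstacle is this last step: showing that the infimum over $\alpha$ is exactly $\sqrt f$ and that no real-eigenvalue regime beats it. This is immediate from $\rho\ge\sqrt{|\det|}$, so the only care needed is the sign of $f$. Assuming $\tbeta_i>0$, which follows from $\rho(\beta I-\eta\Sighat)<1$ together with the stated parameter regime, the comparison $\sqrt{\tbeta_i}<\sqrt\beta$ follows from \Cref{thm:detred}. If instead $\tbeta_i\le 0$, the modulus is $\sqrt{|\tbeta_i|}$, which is still below $\sqrt\beta$ provided $\eta\lamhat_i<2\beta$; I would flag this as the needed side condition.
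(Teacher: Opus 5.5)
Your proof is correct and follows the paper's route. You expand the determinant so the $\alpha c$ terms cancel, then use $\rho(A)\ge\sqrt{\det A}$ with equality on the underdamped band $\alpha c\in\bigl((1-\sqrt{f})^2,(1+\sqrt{f})^2\bigr)$; your choice $\alpha c=1+f$ is the midpoint of that band, and the shear-times-triangular factorization is a tidy explanation of the cancellation.

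One small correction: $\rho(\beta I-\eta\Sighat)<1$ only gives $|\tbeta_i|<1$, not $\tbeta_i>0$. Positivity needs $\eta\lamhat_i<\beta$, which the paper assumes tacitly when it requires $\det A>0$. Because you treat the case $\tbeta_i\le 0$ separately, this does not affect your argument.
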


The determinant is the same for every $c$, so it is the same for every loss that produces a positive coupling, and this is what carries \Cref{thm:detred} out of the squared-loss setting. Appendix~\ref{app:lemma-a} proves the lemma and the rest of Appendix~\ref{app:beyond-linreg} uses it to extend the guarantee to the two settings named above. The proof uses only that the parameter error enters the buffer through some per-direction coupling and that the coupling cancels in the determinant. We state the conclusion over step sizes rather than at a fixed one, because at a step size held fixed across the two methods a smaller determinant does not by itself force a smaller spectral radius.

We further prove in \Cref{thm:strengthened-rate} that, whenever the per-direction iteration is underdamped along $u_i$, this determinant reduction translates into a strictly smaller per-direction spectral radius. 
The underdamped condition reduces to a band condition on $\alpha\eta\kappa_i$ (\Cref{rem:underdamped-typicality}).
We defer the statement carrying that band condition to the appendix.

\subsection{Non-stationary Tracking Dynamics}
\label{sec:non-stationary-tracking-dynamics}

The fixed-point analysis of \Cref{par:two-fixed-points} and the linear-regression dynamics of \Cref{sec:linear-regression-dynamics} both assume stationarity. In practice, the population gradient $\bar{G}_t = \nabla_\theta \mathcal{L}(\theta_t)$ and the input-feature covariance $\Sigma_x(\theta_t) = \E[x_t x_t^\top]$ both drift continuously as the weights update. We now characterize the rate at which the momentum buffer adapts to such drift.

\begin{theorem}[Direction-selective tracking under distribution shift]
\label{thm:tracking}
Suppose at step $t_0$ the population gradient and input-feature covariance shift to new fixed values $(\Gbar_k, \Sigma_k)$, and let $\Mstar$ be the corresponding new fixed point. 
Define the tracking error $T_{t,i} = (M_t - \Mstar) u_i$ along eigendirection $u_i$ of $\Sigma_k$. 
Then 
\begin{align}
\text{EMA:} \quad & \E[T_{t,i} \mid T_{t-1,i}] = \beta T_{t-1,i}, \quad \forall i, \label{eq:tracking-ema}\\
\text{Delta:} \quad & \E[T_{t,i} \mid T_{t-1,i}] = \tbeta_i T_{t-1,i} = (\beta - \eta\lambda_i) T_{t-1,i}. \label{eq:tracking-delta}
\end{align}
EMA contracts uniformly at rate $\beta$. 
\DeltaMomentum contracts along $u_i$ at rate $\tbeta_i$, strictly faster for every direction with $\lambda_i > 0$.
\end{theorem}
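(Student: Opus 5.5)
The plan is to write each update as an affine recursion in the deviation from the post-shift fixed point, take the expectation over the fresh step-$t$ sample, and then project onto the eigendirection $u_i$ of $\Sigma_k$.

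\emph{Assumptions.} For $t > t_0$ the pair $(x_t,\delta_t)$ is drawn from the post-shift distribution, independently of the past, so that $\E[x_t x_t^\top]=\Sigma_k$ and $\E[\delta_t x_t^\top]=\Gbar_k$. As in \Cref{thm:fixedpoint}, this is the quasi-static reading: the new statistics are held fixed after $t_0$, and $M_{t-1}$ is independent of $x_t$. The only care needed is to make this explicit and to state the conditioning in the right form. The conditional expectation given $T_{t-1,i}$ alone is only well defined after the recursion is shown to decouple along $u_i$. I would therefore condition on the full buffer $M_{t-1}$, which determines $T_{t-1,i}$, and observe that the resulting expression depends on $M_{t-1}$ only through $T_{t-1,i}$.

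\emph{EMA case.} The EMA target is $\Mstar_{\mathrm{EMA}}=\Gbar_k$, the minimizer of $J_w$. Subtracting it from \eqref{eq:ema-update} gives
\[
M_t-\Gbar_k=\beta(M_{t-1}-\Gbar_k)+(1-\beta)(g_t-\Gbar_k).
\]
The noise term has conditional mean zero. Right-multiplying by $u_i$ then yields \eqref{eq:tracking-ema} for every $i$, since the scalar $\beta$ commutes with everything.

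\emph{Delta case.} By \Cref{thm:fixedpoint}, the new fixed point satisfies $\Mstar\bigl((1-\beta)I+\eta\Sigma_k\bigr)=\eta\Gbar_k$, which is equivalent to
\[
\Mstar=\Mstar(\beta I-\eta\Sigma_k)+\eta\Gbar_k.
\]
Subtracting this from the factored form \eqref{eq:dm-factored} gives
\[
M_t-\Mstar=(M_{t-1}-\Mstar)(\beta I-\eta x_tx_t^\top)+\Mstar\bigl(\eta\Sigma_k-\eta x_tx_t^\top\bigr)+\eta\bigl(\delta_tx_t^\top-\Gbar_k\bigr).
\]
Conditioning on $M_{t-1}$ and using independence, the last two terms vanish in expectation. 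The first term becomes $(M_{t-1}-\Mstar)(\beta I-\eta\Sigma_k)$, which is the one-step version of \Cref{lem:transport}. Right-multiplying by $u_i$ and using $\Sigma_k u_i=\lambda_i u_i$ gives $\tbeta_i\,(M_{t-1}-\Mstar)u_i=\tbeta_i T_{t-1,i}$. This depends on $M_{t-1}$ only through $T_{t-1,i}$, so the tower property converts it to conditioning on $T_{t-1,i}$, which is \eqref{eq:tracking-delta}.

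\emph{Strict comparison.} Since $\eta>0$, we have $\tbeta_i=\beta-\eta\lambda_i<\beta$ whenever $\lambda_i>0$. For this to be a faster contraction in magnitude we also need $|\tbeta_i|<\beta$, which is exactly $\eta\lambda_i<2\beta$. This is implied by the stability assumption $\rho(\beta I-\eta\Sigma_k)<1$ together with the regime $\eta\lambda_i\le\beta$. I would state this hypothesis explicitly rather than leave it implicit. The normalized-key version follows by the substitution of \Cref{rem:normalized-key}, using \Cref{cor:fixedpoint-normalized} for the fixed point.
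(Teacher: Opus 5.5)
Your proof is correct and follows the paper's argument. You rewrite the fixed-point condition as $\Mstar = \Mstar(\beta I - \eta\Sigma_k) + \eta\Gbar_k$, subtract it from the update, take the conditional expectation given $M_{t-1}$ to obtain $(M_{t-1}-\Mstar)(\beta I - \eta\Sigma_k)$, and project onto $u_i$, with $\Gbar_k$ as the EMA target. The extras you add are sound refinements of points the paper leaves implicit, not a different route: keeping the zero-mean noise terms explicit, using the tower property to pass from conditioning on $M_{t-1}$ to conditioning on $T_{t-1,i}$, and noting that a strict speed-up \emph{in magnitude} needs $\eta\lambda_i < 2\beta$, which the stable range $\eta\lambda_i < 1+\beta$ alone does not guarantee.
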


\Cref{thm:tracking} follows immediately from Lemma~\ref{lem:transport}, holds for any $\beta$ and $\eta$ in the stable range with no further restriction, and applies in expectation along every direction (proof in Appendix~\ref{app:tracking}). 
Unrolling \eqref{eq:tracking-delta} gives expected memory horizons of $\tau_i^{\mathrm{Delta}} = 1/((1-\beta) + \eta\lambda_i)$ along $u_i$ versus $\tau_i^{\mathrm{EMA}} = 1/(1-\beta)$ uniformly, identical to \eqref{eq:horizon}, and shorter under \DeltaMomentum by a factor of $1 + \eta\lambda_i/(1-\beta)$. Tighter direction-uniform bounds and density-weighted mean contraction rates are deferred to Appendix~\ref{app:uniform-tracking-rate}.

\Cref{thm:tracking} constrains the recursion that produces the buffer, specifically the operator $\beta I - \eta \xhat_t \xhat_t^\top$ of \eqref{eq:dm-factored}, and places no condition on the loss landscape. It is therefore the result that survives into non-convex training, and it sharpens into a lag bound there.

\begin{proposition}[Steady lag under drift]\label{prop:nc1-main}
If the target of the buffer drifts by at most $\Delta_i$ per step along $u_i$, then $\limsup_t \|\E[T_{t,i}]\| \le \tbeta_i\Delta_i/(1-\tbeta_i)$, against $\beta\Delta_i/(1-\beta)$ for EMA. Since $\tbeta_i < \beta$ on every direction with positive input density and $t \mapsto t/(1-t)$ is increasing, the bound is strictly smaller there, and the relaxation-time ratio is $1 + \eta\lamhat_i/(1-\beta)$.
\end{proposition}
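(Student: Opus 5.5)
The plan is to write the expected tracking error as a linear recursion driven by the drift and unroll it. Let $M^\star_t$ denote the (moving) target of the buffer at step $t$, i.e.\ the fixed point of \Cref{cor:fixedpoint-normalized} evaluated at the current $(\Ghat_t, \Sighat)$, and set $T_{t,i} = (M_t - M^\star_t)u_i$. The hypothesis reads $\|(M^\star_t - M^\star_{t-1})u_i\| \le \Delta_i$. I hold the key covariance fixed so that $u_i$ is a fixed eigendirection; otherwise the per-direction decoupling of \Cref{lem:transport} is lost.

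\textbf{Recursion.} First decompose
\[
T_{t,i} = (M_t - M^\star_{t-1})u_i - (M^\star_t - M^\star_{t-1})u_i .
\]
Apply \Cref{thm:tracking} to the first term, with the target frozen at $M^\star_{t-1}$ for one step. Taking total expectation through the tower property gives $\E[(M_t - M^\star_{t-1})u_i] = \tbeta_i\,\E[T_{t-1,i}]$, because the update is affine in $M_{t-1}$. The drift is deterministic given the trajectory, so taking norms yields
\[
\|\E[T_{t,i}]\| \le |\tbeta_i|\,\|\E[T_{t-1,i}]\| + \Delta_i .
\]
This is a crude but sufficient step.

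To get the sharper $\tbeta_i\Delta_i$ numerator claimed in the statement, I would reorder the decomposition. Write $M_t - M^\star_t = \mathcal{T}(M_{t-1}) - \mathcal{T}(M^\star_t)$, where $\mathcal{T}$ is the one-step expected update map whose fixed point is $M^\star_t$. Then
\[
\E[T_{t,i}] = \tbeta_i\,\E[(M_{t-1} - M^\star_t)u_i] = \tbeta_i\bigl(\E[T_{t-1,i}] - (M^\star_t - M^\star_{t-1})u_i\bigr),
\]
so that
\[
\|\E[T_{t,i}]\| \le \tbeta_i\bigl(\|\E[T_{t-1,i}]\| + \Delta_i\bigr).
\]
This is the version I will use. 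It assumes the target update occurs before the momentum step, which matches the paper's convention that $\Gbar_t$ is evaluated at $\theta_t$ before $M_t$ is formed. It needs $0 \le \tbeta_i < 1$, which the stable range and $\lamhat_i \le 1$ with $\eta \le \beta$ give; otherwise I would use $|\tbeta_i|$.

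\textbf{Unrolling.} Iterating gives
\[
\|\E[T_{t,i}]\| \le \tbeta_i^{\,t-t_0}\|\E[T_{t_0,i}]\| + \Delta_i\sum_{k=1}^{t-t_0}\tbeta_i^{\,k},
\]
and taking $\limsup$ yields $\tbeta_i\Delta_i/(1-\tbeta_i)$. Running the identical argument with \eqref{eq:tracking-ema} gives $\beta\Delta_i/(1-\beta)$ for EMA.

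\textbf{Comparison.} Strictness follows from $\tbeta_i = \beta - \eta\lamhat_i < \beta$ when $\lamhat_i > 0$ and the monotonicity of $s \mapsto s/(1-s)$ on $[0,1)$. The relaxation times are $1/(1-\tbeta_i)$ and $1/(1-\beta)$, and their ratio is
\[
\frac{1-\tbeta_i}{1-\beta} = 1 + \frac{\eta\lamhat_i}{1-\beta}.
\]

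\textbf{Main obstacle.} The delicate step is justifying the per-direction contraction in expectation once the target moves. \Cref{thm:tracking} is stated for a single shift, and $\E[M_t]$ obeys a closed affine recursion only if $x_t$ is independent of $M_{t-1}$. With keys depending on $\theta_t$, I would invoke the same quasi-static/independence assumption used in \Cref{lem:transport} and make it explicit in the statement's setting.
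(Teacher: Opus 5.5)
Your proposal is correct and follows the paper's proof. You derive the one-step bound $\|\E[T_{t,i}]\| \le \tbeta_i(\|\E[T_{t-1,i}]\| + \Delta_i)$ from \Cref{thm:tracking} with the drift entering before the contraction, bound the limit superior by the fixed point $\tbeta_i\Delta_i/(1-\tbeta_i)$, and get the comparison and the ratio $1 + \eta\lamhat_i/(1-\beta)$ exactly as the paper does. You also make explicit the conditions the paper's proof uses without stating them (fixed $\Sighat$ so that $u_i$ stays an eigendirection, keys independent of $M_{t-1}$, and $\tbeta_i \ge 0$), which tightens the argument rather than changing it.
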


In the above proposition, convexity never enters, so this holds for the deep non-convex losses of \Cref{sec:experiments}. Appendix~\ref{app:nonconvex} proves it, adds a companion bound that is uniform over directions when the drift follows the input density, and connects the reduced lag back to the first-order loss decrease of \Cref{prop:first-order-optimal}. Lastly, we clarify that the above is a tracking result and does not give a global convergence rate on non-convex losses.

\paragraph{What gets forgotten, what gets preserved.}
\DeltaMomentum couples input-direction density to gradient staleness. Along high-density directions ($\lambda_i$ large), the local gradient changes rapidly with $\theta$ and the buffer overwrites stale content on a shorter horizon. Along low-density directions ($\lambda_i$ small), the gradient is approximately stationary and is preserved on an EMA-like time scale. This is exactly the data-driven schedule the function-space prediction loss prescribes.

The results of this section carry different amounts of assumption, and Appendix~\ref{app:scope} states which set of assumptions describes a training run.

\section{Experiments}
\label{sec:experiments}

The theory of \Cref{sec:methodology} gives three effects: implicit input-side preconditioning at the fixed point (\Cref{thm:fixedpoint}), direction-density-aware memory transport (Lemma~\ref{lem:transport}), and small constant FLOP overhead (\Cref{thm:deployed-overhead}). 
To test whether these claims hold in deep, non-linear, non-stationary training we run four sets of experiments. (a)~We compare AK-AdamW against AdamW on Llama-2-style language-model pretraining at 67M, 370M, and 1B parameters on FineWeb-Edu (\Cref{sec:main}), isolating the contribution of \DeltaMomentum. 
(b)~We compare with a tuned Muon baseline under the same search protocol, as a reference point among structured optimizers, summarized in \Cref{sec:main} and fully reported in Appendix~\ref{app:structured}. 
(c)~We probe the theoretical effects with a per-layer mechanistic diagnostic harness on the 67M run (\Cref{sec:diag}).
(d)~We verify generality across optimizer families and architectures, comparing AK-SGD against SGD with EMA momentum on an MLP, and AK-AdamW against AdamW on ResNet-18 and ViT-Tiny, all on CIFAR-10 (\Cref{sec:main-deltaSGD}). Full setup for every run, including architectures, data, tuning protocol, and compute, is in Appendix~\ref{app:setup}.

\subsection{Main Results: Validation Loss on FineWeb-Edu}
\label{sec:main}

\begin{figure}[!tbp]
  \centering
  \begin{subfigure}[t]{0.325\linewidth}
    \includegraphics[width=\linewidth]{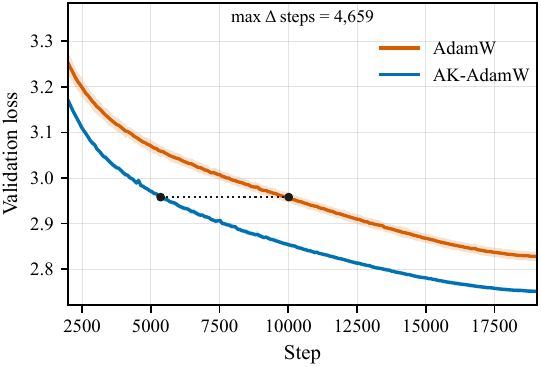}
    \caption{67M, 3 seeds, 10BT.}
    \label{fig:loss-samples-67m}
  \end{subfigure}\hfill
  \begin{subfigure}[t]{0.325\linewidth}
    \includegraphics[width=\linewidth]{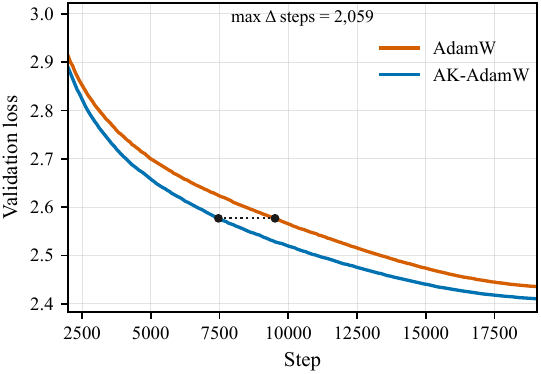}
    \caption{370M, 3 seeds, 10BT.}
    \label{fig:loss-samples-370m}
  \end{subfigure}\hfill
  \begin{subfigure}[t]{0.325\linewidth}
    \includegraphics[width=\linewidth]{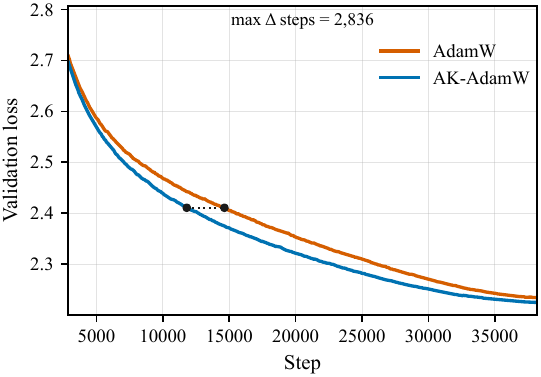}
    \caption{1B, 1 seed, 20BT.}
    \label{fig:loss-samples-1b}
  \end{subfigure}
  \caption{Validation loss against optimizer step on FineWeb-Edu at 67M, 370M, and 1B parameters.
  The gain holds at every scale we ran, including the 1B Chinchilla-optimal run in (c).
  Curves at 67M and 370M are seed means
  with per-seed bands, and the 1B curve is a single seed. AK-AdamW separates from AdamW early and
  stays below it for the rest of training at all three scales.
  The annotation marks the largest step saving over matched validation-loss levels, in steps, measured from the seed-mean curve.
  Curves are cropped at the left to omit the early high-loss region, which compresses the later separation.}
  \label{fig:main-loss}
\end{figure}

\paragraph{Per-sample step efficiency.}
At all three scales AK-AdamW reaches AdamW's validation loss in substantially fewer steps and continues to a lower terminal value (\Cref{fig:main-loss}). The two curves separate shortly after warmup and do not cross again at any scale, so AK-AdamW is below AdamW at every matched step over the entire training horizon.
\Cref{tab:scale} reports the paired gap against AdamW across matched validation-loss levels. The sign of the paired gap is the same in every run at every scale, and at 370M the mean gap is $13.7$ times its seed standard deviation. Every hyperparameter at 370M and 1B is carried over from the 67M sweep with no re-tuning, under the $\mu$P rule of \Cref{thm:mup-eta}. The two branches share initialization, data order, and schedule at each scale. The advantage is evident early and persists through the cosine decay, which is what \Cref{thm:tracking} and \Cref{prop:nc1-main} predict for a drifting target without assuming convexity or stationarity (Appendix~\ref{app:scope}). \Cref{sec:diag} measures the mechanism directly.

\begin{table}[!ht]
  \centering
  \caption{AK-AdamW against AdamW on FineWeb-Edu across matched validation-loss levels. Loss gap is in nats and step reduction is the fraction of AdamW steps saved to reach the same loss, both reported as mean and maximum over the matched levels with the seed standard deviation. The 1B scale run was done on a single seed and uses the Chinchilla-optimal token budget at that size, with the cosine cycle stretched and warmup steps held fixed, identically for both optimizers. Matched levels are taken over the range plotted in \Cref{fig:main-loss}. Detailed settings are stated in Appendix~\ref{app:setup}.}
  \label{tab:scale}
  \small
  \begin{tabular}{lccccc}
    \toprule
    Scale (seeds, tokens) & Mean loss gap & Max loss gap & Mean step red. & Max step red. \\
    \midrule
    67M \ \ (3 seeds, 10BT)  & $0.0936 \pm 0.0150$ & $0.1077 \pm 0.0146$ & $39.25 \pm 4.55\%$ & $46.39 \pm 4.32\%$ \\
    370M (3 seeds, 10BT) & $0.0383 \pm 0.0028$ & $0.0501 \pm 0.0016$ & $17.61 \pm 0.95\%$ & $22.12 \pm 0.80\%$ \\
    1B \ \ \ (1 seed, 20BT)  & $0.0245$ & $0.0361$ & $13.43\%$ & $19.37\%$ \\
    \bottomrule
  \end{tabular}
\end{table}

\paragraph{Per-FLOP compute efficiency.}
\Cref{tab:cost} collects the cost side of the comparison, and \Cref{fig:wallclock} plots the same runs against wall clock. The realized per-step overhead of the deployed rule is $11.2\%$ at 67M and $17.4\%$ at 370M, counted over the actual runs, and it rises with width toward the band of \Cref{thm:deployed-overhead} as the linear layers take a larger share of the step.
Measured end-to-end step time on a single H200 is $1.177\times$ AdamW at 67M and $1.153\times$ at 370M. \Cref{sec:realized-flops} gives the counted overhead at all three scales.
Unlike the per-step comparison, the cost axes have a crossing point, which occurs early
at $420$ PFLOPs of the $4.0$ EFLOP horizon at 67M and at $5.10$ EFLOPs of the $22.2$ EFLOP horizon at 370M, which is about a tenth and about a quarter of the way through the respective runs, and past that point it is ahead on executed FLOPs and on wall clock at both scales.

\begin{table}[!ht]
  \centering
  \caption{A cost table for the deployed algorithm. Overhead is the realized per-step FLOP overhead counted over the actual runs, step time is measured end to end on a single H200, and the savings are averaged over the matched validation-loss levels past the crossing point, with the best matched level in parentheses. Appendix~\ref{app:flops-memory} gives the full count and the measurement setup. We report the cost comparison at the two scales for which we ran the full matched-loss measurement.}
  \label{tab:cost}
  \small
  \begin{tabular}{lccccc}
    \toprule
    Scale & FLOP overhead & Step time & FLOPs saved & Wall clock saved \\
    \midrule
    67M  & $11.2\%$ & $1.177\times$ & $25.9\%$ ($32.6\%$) & $21.6\%$ ($28.7\%$) \\
    370M & $17.4\%$ & $1.153\times$ & $\phantom{0}4.5\%$ ($\phantom{0}9.4\%$) & $\phantom{0}6.2\%$ ($11.0\%$) \\
    \bottomrule
  \end{tabular}
\end{table}

\paragraph{A reference point comparison among structured optimizers.}
AdamW is the baseline that isolates the contribution of \DeltaMomentum. We add a tuned Muon baseline as a reference point among optimizers that address anisotropy at a different place within the optimizer, by post-processing the buffer rather than changing how it is formed. The baseline was tuned under the same Optuna protocol, and ran with 3 seeds at each of 67M and 370M. The validation loss curve of AK-AdamW sits below Muon at both scales, with a mean loss gap of $0.120 \pm 0.008$ nats at 67M and $0.109 \pm 0.003$ at 370M, and the sign is consistent in every run. Muon received the largest per-dimension search budget among all three arms and was tuned under its own $\mu$P rule \citep{qiu2025hptransfer} rather than an Adam-family rule \citep{yang2021tuning}. 
We make no general claim about Muon from this, and we report it as a baseline tuned inside our protocol. 
Full tables, curves, protocol, and the scope of the comparison are in Appendix~\ref{app:structured}.

\subsection{Mechanistic Diagnostics: Validating the Theoretical Picture}
\label{sec:diag}

Section~\ref{sec:methodology} predicts three quantities that should differ between EMA momentum and \DeltaMomentum: 
the buffer's alignment with the population gradient (\Cref{sec:diag-grad}), 
its function-space prediction error on output-side errors (\Cref{sec:diag-func}), 
and the conditioning of the input-feature covariance (\Cref{sec:diag-feat}). 
These sit at three points of one mechanism. \Cref{sec:diag-func} measures the loss whose minimization defines \DeltaMomentum and its input-side preconditioning, \Cref{sec:diag-grad} measures the buffer in the role that by Proposition~\ref{prop:first-order-optimal} governs per-step loss decrease, and \Cref{sec:diag-feat} measures the input-feature covariance that the preconditioning acts on. 
We probe each at fixed evaluation steps on the 67M run with a per-layer diagnostic harness on a held-out probe batch.

\subsubsection{Improved Gradient-Estimator Quality}
\label{sec:diag-grad}

We measure the momentum--gradient cosine
and the relative prediction error
between the pre-step buffer $M_t$ and current gradient $g_t$, on global parameters 
(definitions, pre-step semantics, and sign convention are explained in
Appendix~\ref{app:mom-grad-cos-sim}).
Figure~\ref{fig:diag-grad-func}(a,b) shows AK-AdamW maintains \emph{strictly higher} cosine and \emph{steadily lower} relative prediction error $e_t$ than AdamW at
every logged step,
consistent with \Cref{sec:gradient-estimator}'s gradient-estimator interpretation.

\begin{figure}[!tbp]
  \centering
  \newcommand{\subfigheight}{3.0cm}
  \captionsetup[subfigure]{justification=centering,singlelinecheck=false}

  \begin{subfigure}[t]{0.32\linewidth}
    \centering
    \includegraphics[width=\linewidth,height=\subfigheight,keepaspectratio]{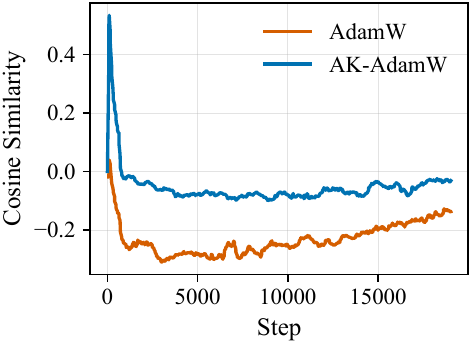}
    \caption{Momentum--gradient cosine similarity.}
    \label{fig:cossim}
  \end{subfigure}\hfill
  \begin{subfigure}[t]{0.32\linewidth}
    \centering
    \includegraphics[width=\linewidth,height=\subfigheight,keepaspectratio]{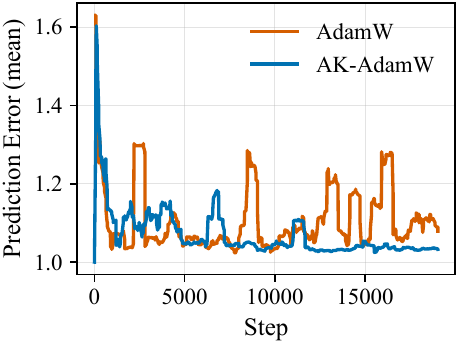}
    \caption{Momentum--gradient prediction error.}
    \label{fig:grad-error}
  \end{subfigure}\hfill
  \begin{subfigure}[t]{0.32\linewidth}
    \centering
    \includegraphics[width=\linewidth,height=\subfigheight,keepaspectratio]{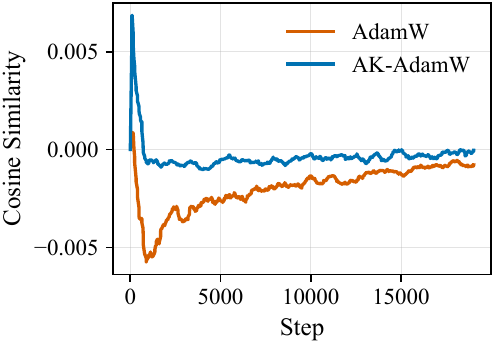}
    \caption{Gradient--prediction cosine similarity.}
    \label{fig:dy-cossim}
  \end{subfigure}

  \caption{Mechanistic diagnostics on momentum buffer on the 67M run.
    At every logged step, AK-AdamW (blue) maintains \textbf{(a)}~higher 
    $\cos(M_t, g_t)$, 
    \textbf{(b)}~lower 
    $e_t = \|M_t - g_t\|_F / \|g_t\|_F$, 
    and \textbf{(c)}~higher 
    $\cos(M_t \hat{x},\, \delta)$ 
    than AdamW (orange). 
    Negative values are an artifact of stochastic descent itself, not the buffer rule, and apply equally to both optimizers. The diagnostic quantity is the gap, uniformly favorable to AK-AdamW.
    }
  \label{fig:diag-grad-func}
\end{figure}

\subsubsection{Function-Space Prediction Error}
\label{sec:diag-func}

The function-space view of Definition~\ref{def:fms} predicts 
that \DeltaMomentum's update
rule explicitly performs online stochastic gradient descent on the
prediction loss \(\frac{1}{2}\|\delta - M\hat{x}\|^2\), so its buffer should 
be a better predictor of the output-side error
\(\delta\) given the input \(\hat{x}\) than EMA, which is not optimizing this loss at all. 
To test this directly, at each evaluation checkpoint we
freeze the buffer \(M_t\) for both optimizers, sample a held-out probe batch, compute the predicted output-side error \(\widehat{\delta}_t = M_t \hat{x}_t\) per layer, and compare against the true \(\delta_t\). 
\Cref{fig:dy-cossim} reports %
the cosine similarity \(\cos(\widehat{\delta}, \delta)\).

\subsubsection{Robustness Against Anisotropic Collapse of Input Features}
\label{sec:diag-feat}

\Cref{thm:fixedpoint} predicts that \DeltaMomentum implements implicit
input-side preconditioning. Along an eigendirection \(u_i\) of the
input-feature covariance with directional density \({\lambda}_i\), the steady-state buffer is scaled by \(\phi_i = 1/(\mu+{\lambda}_i)\)
rather than by \(1\) as in EMA, which actively counteracts the dominance
of high-density directions. 
If this mechanism is operating during
training, then the input-feature covariance of layers under AK-AdamW
should remain better-conditioned (lower condition number, higher
effective rank, lower top-1 variance fraction) than under AdamW.
We compute three statistics of the per-layer input-feature covariance:
the layerwise mean of the condition number 
\(\lambda_{\max}/\lambda_{\min}\) and the effective rank
\(\exp(H(\boldsymbol{\lambda}/\sum\lambda_i))\) where \(H\) is Shannon entropy, 
and the maximum top-1 variance fraction
\(\lambda_{\max}/\sum\lambda_i\). 
\Cref{fig:diag-feat} summarizes the result. Feature covariances under AK-AdamW remain meaningfully better-conditioned than
those under AdamW, with consistently lower condition number, higher
effective rank, and lower top-1 variance fraction throughout training.
These are also the measurements that separate the delta correction from the
diagonal second moment of AdamW, 
which accumulates $g^{\odot 2}$ and therefore registers only the per-coordinate input energies $\E[x_k^2]$, the diagonal of $\Sigma_x$, and never its off-diagonal correlations or eigendirections,
and Appendix~\ref{app:second-moment} discusses that interaction.

\begin{figure}[!tbp]
  \centering
  \newcommand{\subfigheight}{2.65cm}
  \begin{subfigure}[t]{0.32\linewidth}
    \includegraphics[width=\linewidth]{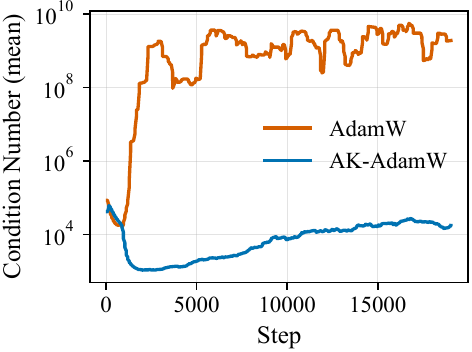}
    \caption{Condition number (mean).}
    \label{fig:cond}
  \end{subfigure}\hfill
  \begin{subfigure}[t]{0.32\linewidth}
    \includegraphics[width=\linewidth]{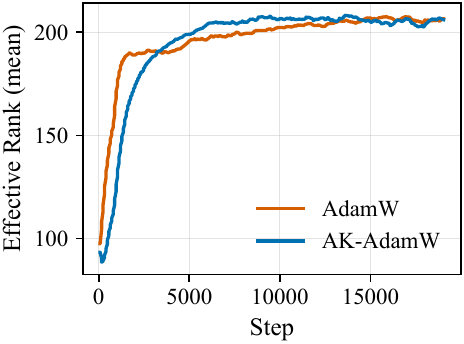}
    \caption{Effective rank (layerwise mean).}
    \label{fig:effrank}
  \end{subfigure}\hfill
  \begin{subfigure}[t]{0.32\linewidth}
    \includegraphics[width=\linewidth]{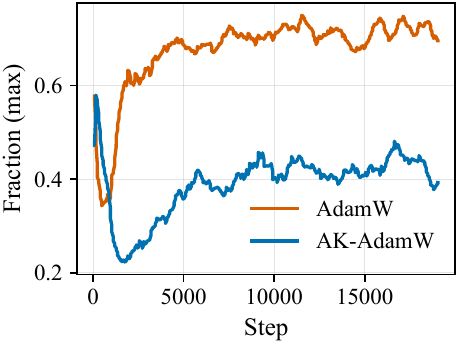}
    \caption{Top-1 variance fraction (max).}
    \label{fig:top1}
  \end{subfigure}
  \caption{Input-feature covariance diagnostics on the 67M run.
    Across all three statistics, layers trained with AK-AdamW (blue) maintain
    healthier feature spectra than those trained with AdamW (orange), consistent
    with the implicit input-side preconditioning predicted by
    \cref{thm:fixedpoint}.}
  \label{fig:diag-feat}
\end{figure}

\paragraph{Interpretation.}

All three diagnostics move in the direction \Cref{sec:methodology} predicts, with consistent magnitudes.

\subsection{Generality Across Optimizer Families and Architectures}
\label{sec:main-deltaSGD}

To verify that the gain transfers beyond Adam's diagonal preconditioner, we compare AK-SGD with SGD-momentum on a 3-hidden-layer MLP on CIFAR-10. AK-SGD reaches SGD's validation loss earlier and converges to a strictly lower terminal value, with the gap opening right after the first epoch and persisting throughout. Validation curves and test accuracy are in Appendix~\ref{app:cifar-10}.

To verify that it also transfers beyond language modeling and beyond a single architecture, we run ResNet-18 and ViT-Tiny on CIFAR-10 for 120 epochs, AK-AdamW against AdamW with 3 seeds and an identical log-scale learning-rate sweep for each optimizer. \Cref{tab:vision} reports the training-loss gap at matched epochs, averaged over the phase where the curves separate, and the sign of the gap is the same in all runs. Appendix~\ref{app:cifar-vision} gives the curves, the averaging window, and why each architecture is read on the curve that measures optimization rather than the inductive bias of the model. We read this as a generality check rather than a second pillar of evidence, and language-model pretraining remains where the claims of this paper are made.

\begin{table}[!ht]
  \centering
  \caption{AK-AdamW against AdamW on CIFAR-10 over 120 epochs with 3 seeds. Gaps are training loss at matched epochs, in nats and relative to the AdamW loss at that epoch, averaged over the phase where the curves separate. We report training loss for both architectures here so the two rows are on the same footing, and Appendix~\ref{app:cifar-vision} gives the validation-loss curve for ResNet-18. Curves are in Appendix~\ref{app:cifar-10}.}
  \label{tab:vision}
  \small
  \begin{tabular}{lcccc}
    \toprule
    Architecture & Mean gap (nats) & Mean gap (\%) & Max gap (nats) & Max gap (\%) \\
    \midrule
    ViT-Tiny   & $0.079 \pm 0.013$ & $25.1 \pm 2.7$ & $0.140 \pm 0.017$ & $45.5 \pm 3.5$ \\
    ResNet-18  & $0.041 \pm 0.006$ & $11.1 \pm 1.6$ & $0.081 \pm 0.016$ & $15.1 \pm 1.8$ \\
    \bottomrule
  \end{tabular}
\end{table}

\section{Conclusion}
\label{sec:conclusion}

\DeltaMomentum replaces the EMA accumulator with the classical delta rule, treating the first-moment buffer as an online linear associative memory of the factored gradient $\delta_t x_t^\top$. The anisotropic memory transport mechanism (Lemma~\ref{lem:transport}) unifies its steady-state, per-direction convergence, and non-stationary behavior.
AK-AdamW reduces steps-to-target loss by up to $46.39 \pm 4.32\%$ at 67M and $22.12 \pm 0.80\%$ at 370M on FineWeb-Edu, the gain holds at 1B on a Chinchilla-optimal budget, and AK-SGD, ResNet-18, and ViT-Tiny show the same pattern on CIFAR-10.

\paragraph{Limitations and Future Work.}
Experiments cover language modeling up to 1B parameters and CIFAR-10 classifiers, so larger scale and non-language generative domains such as image generation and reinforcement learning remain open. On implementation, the deployed rule is $M \leftarrow \beta M + \eta(G - M\Sighat)$ and its counted per-step cost rises with width, reaching $20.3\%$ at 1B, with the per-layer share varying with layer shape as Appendix~\ref{app:flops-memory} sets out. Reported wall-clock numbers use a non-fused implementation, and a fused backward and optimizer kernel would narrow the gap to the count. Our two modalities are trained separately rather than interleaved, and a run that mixes them is the natural next step, with Appendix~\ref{app:multimodal} setting out why the analysis already covers mixed input types. Buffer-level compositions with Muon, Shampoo, and SOAP remain future work, and each deserves its own controlled study. 
A controlled comparison against a gradient-keyed delta rule on the buffer \citep{behrouz2026nested}, which isolates the choice of key, is left to future work as well.

\begin{ack}
This work is supported by NSF Grants 2339112, 2512805, Jane Street, and Pennsylvania Infrastructure Technology Alliance.
\end{ack}

\bibliographystyle{plainnat}
\bibliography{references}

\clearpage
\appendix

\section{Symbols and Notation}
\label{app:symbols-and-notations}

\begin{table}[!ht]
\centering
\caption{Notation used in the analysis. The left block is the unnormalized-key reading and the right block is the normalized-key reading, which is the deployed algorithm.}
\label{tab:notation}
\small
\setlength{\tabcolsep}{4pt}
\begin{tabular}{@{}p{0.30\linewidth}p{0.40\linewidth}p{0.26\linewidth}@{}}
\toprule
Symbol & Meaning & Where it enters \\
\midrule
$x,\ \delta$ & input activation, output-side error $\partial \mathcal{L}/\partial y$ & key and value of the association \\
$\xhat = x/\|x\|_2$ & normalized key & deployed update \\
$X, \Xhat, \Delta$ & batched inputs, normalized inputs, errors & Appendix~\ref{app:flops-memory} \\
$\Sigx = \E[xx^\top]$ & input covariance & unnormalized analysis \\
$\Sighat = \E[\xhat\xhat^\top]$ & key second moment, $\tr\Sighat = 1$ & deployed analysis \\
$\lambda_i,\ \lamhat_i,\ u_i$ & eigenvalues of $\Sigx$, of $\Sighat$, shared eigenvectors & per-direction statements \\
$\kappa_i = u_i^\top \E[\|x\|\,\xhat\xhat^\top] u_i$ & per-direction gradient coupling & trace of \eqref{eq:iter-matrices} \\[2pt]
$\Gbar = \E[\delta x^\top]$, $\Ghat = \E[\delta \xhat^\top]$ & mean gradient in raw and normalized keys & fixed points \\
$G$ & batch estimate of $\Ghat$ & deployed update \\
$M,\ \Mstar$ & momentum buffer and its fixed point & \Cref{thm:fixedpoint} \\
$\beta,\ \eta$ & EMA decay, delta-rule coefficient & \eqref{eq:dm-update} \\
$\tbeta_i = \beta - \eta\lamhat_i$ & per-direction memory contraction & \Cref{lem:transport} \\
$\mu = (1-\beta)/\eta$ & Tikhonov ridge & \Cref{thm:fixedpoint} \\
$\tau_i = 1/((1-\beta) + \eta\lamhat_i)$ & per-direction memory horizon & \eqref{eq:horizon} \\
$r_i = \eta\lamhat_i/(1-\beta)$ & forgetting speed-up over EMA & Appendix~\ref{app:beyond-linreg} \\
$m,\ n,\ N,\ \gamma$ & output width, input width, tokens per step, gated-MLP ratio & Appendix~\ref{app:flops-memory} \\[2pt]
\bottomrule
\end{tabular}
\end{table}

\section{Extended Related Work}
\label{app:related}

\paragraph{Momentum and acceleration.}
First-moment EMA momentum has been a near-universal component of deep-network optimizers since heavy-ball momentum \citep{polyak1964some}, which is the source of the view of the buffer as inertia. A separate line starting from Nesterov's accelerated gradient \citep{nesterov1983method} builds accelerated methods on a lookahead rather than on the EMA first moment used in SGD-momentum and Adam-style optimizers, so we place it here rather than on the buffer itself. The empirical importance of momentum in neural-network training was established by \citet{sutskever2013importance}, together with its interaction with initialization and with momentum schedules. The reading of the buffer as gradient denoising is developed directly in the smoothing and variance-reduction line \citep{cutkosky2019momentum, xu2023momentum}. 
The isotropic, scalar-decay form of EMA has been retained in essentially every adaptive optimizer derived from Adam \citep{kingma2014adam, loshchilov2017decoupled}.
\DeltaMomentum departs from this lineage by replacing the EMA update itself with a delta-rule update on a function-space prediction loss, keyed on the input activation. 
The delta-rule constructions of Behrouz et al. \citep{behrouz2026nested} depart from the same lineage, on a loss keyed on the gradient instead.

\paragraph{Online associative memory and the delta rule.}
The delta rule \citep{widrow1988adaptive} is the canonical online stochastic gradient method for linear least squares with key--value targets. In recent deep-learning literature, the delta rule has reappeared as the backbone of linear-attention fast-weight programmers \citep{schlag2021linear, yang2024parallelizing}, where a memory matrix stores key--value associations across a sequence and is updated online to minimize retrieval error. 
\DeltaMomentum imports this online-associative-memory machinery into the optimizer, so the first-moment buffer plays the role of the memory matrix, the layer's input activations and output-side errors play the roles of keys and values, and the per-step update is exactly the regularized delta rule. 
The view of the optimizer and its momentum as an associative memory over gradients has also appeared in Behrouz et al. \citep{behrouz2025s, behrouz2026nested}. 
They factor the layer gradient into an outer product of the input activation and a local error and carry that reading to the momentum term, writing the momentum as an associative memory whose objective is a free design choice rather than a fixed one. 
They then instantiate that choice with a delta rule keyed on the gradient. 
They also give a delta rule that does key on the input activation, but it replaces the descent step on the weights rather than the momentum buffer. 
This paper newly devises the activation-keyed instantiation of the momentum objective and the theory that it supports. 
What the choice of objective decides is the form of the resulting rule. 
The dot-product objective $\langle M x, \delta \rangle$ has gradient step $\delta x^\top$, so it returns the Hebbian accumulator of \Cref{subsec:delta-rule}, which is EMA momentum with its single decay rate. 
The function-space prediction loss of \Cref{def:fms}, on the same key and value, produces the state-dependent forgetting factor of \eqref{eq:dm-factored} instead. 
\Cref{tab:delta-rule-placement} places the three rules side by side. 

\begin{table}[h]
\centering
\caption{Where the delta rule is applied and what keys it. The first two rows are the constructions of \citet{behrouz2026nested}, written throughout in the notation of \Cref{tab:notation}, so $\beta$ and $\eta$ denote the decay and the step size belonging to the rule in that row rather than a value shared across rows. The forgetting factor right-multiplies the object in the second column. Of the two buffer-level rules, only the activation-keyed one has a forgetting factor whose expectation is $\beta I - \eta \Sigx$, the matrix \Cref{lem:transport} and everything downstream of it act on.}
\label{tab:delta-rule-placement}
\small
\begin{tabular}{lllll}
\toprule
Rule & Applied to & Key & Value & Forgetting factor \\
\midrule
Weight-level \citep{behrouz2026nested} & weights $W$ & input activation $x$ & output-side error $\delta$ & $I - \eta\, x x^\top$ \\
Buffer-level \citep{behrouz2026nested} & momentum $M$ & gradient $g$ & preconditioner & $\beta I - g^\top g$ \\
\DeltaMomentum (ours) & momentum $M$ & normalized input activation $\xhat$ & output-side error $\delta$ & $\beta I - \eta\, \xhat \xhat^\top$ \\
\bottomrule
\end{tabular}
\end{table}

\paragraph{Anisotropy in deep-network optimization.}
The anisotropy of the deep-network loss landscape has been documented extensively. 
Hessian-eigenvalue-density studies \citep{sagun2017empirical, ghorbani2019investigation, papyan2019measurements} report a small bulk of large eigenvalues followed by a long, near-flat tail spanning several orders of magnitude. Activation-anisotropy studies in NLP \citep{mu2017all, ethayarajh2019contextual} show that learned representations concentrate disproportionately along a handful of dominant
directions. 
These findings have motivated optimizers that explicitly account for anisotropy. K-FAC \citep{martens2015optimizing} approximates the Fisher information by Kronecker-factored input/output curvature. Shampoo \citep{gupta2018shampoo} applies Kronecker-factored preconditioning to the gradient. SOAP \citep{vyas2024soap} refines the eigenbasis used by Shampoo. Muon \citep{jordan2024muon} applies Newton--Schulz orthogonalization to the EMA momentum to flatten its singular-value spectrum. Adafactor
\citep{shazeer2018adafactor} factorizes the second-moment statistics. 
A unifying property of these methods is that they retain the EMA momentum update and apply external structural processing (preconditioning before, post-conditioning after) to address anisotropy. 
\DeltaMomentum addresses anisotropy at a different level of the optimizer stack, inside the momentum update itself. 
Therefore, \DeltaMomentum is complementary to, rather than in competition with, these methods.

\paragraph{Concurrent work on activation-side preconditioning.}
Concurrent with this work, DoPr \citep{zhang2026dopr} pairs a gradient-based preconditioner such as Adam or Muon with an activation-based preconditioner in the style of K-FAC, on the argument that non-isotropic input activations degrade feature learning. 
The object it adds is the input-side Kronecker factor, chosen over the full K-FAC preconditioner to avoid the output-side factor and the extra optimizer state it needs. That is the factor \Cref{thm:fixedpoint} shows \DeltaMomentum reaches implicitly, without inversion, damping, or state beyond the momentum the base optimizer already keeps. 
Newton-Muon \citep{du2026newtonmuon} reaches the same factor from a different direction. It approximates the layer loss by a quadratic surrogate in the gradient, an output-side curvature matrix, and the matrix of input activations, and minimizing that surrogate right-preconditions the gradient by the inverse input second moment before the usual Muon pipeline. It reads standard Muon as a Newton-type method that drops this right preconditioner. 
Both methods differ from ours in level, since each forms an explicit preconditioner and applies it to a buffer that is still an EMA, while \DeltaMomentum changes the buffer itself, so they compose with it rather than substitute for it. 
DoPr reports downstream test-time performance under rollout and Newton-Muon reports iterations to a target validation loss on a GPT-2 speedrun configuration, so neither set of results is directly comparable with ours.

\paragraph{AK-AdamW vs.\ AdamW is a controlled head-to-head that isolates the first-moment update.}
The empirical contribution of this paper is the delta-rule update applied to the first-moment buffer, and cleanly isolating that contribution requires a baseline that differs from AK-AdamW \emph{only} in the first-moment update. AdamW is the unique such baseline. The second moment, bias correction, and decoupled weight decay are identical between AK-AdamW and AdamW, and setting the state-dependent forgetting factor of \eqref{eq:dm-factored} to zero returns \eqref{eq:ema-update} exactly, so AdamW is the null case of the mechanism and any difference in the loss curve is attributable to the EMA-vs-delta swap. Any other delta-rule buffer, including one keyed on the gradient, differs from AK-AdamW in the choice of key rather than in whether the mechanism is present, so it isolates a different variable. 
Head-to-heads against Muon, Shampoo, or SOAP would conflate the first-moment contribution with Newton--Schulz orthogonalization or with Kronecker-factored preconditioning applied at a different level of the optimizer stack, which is precisely the level at which those methods address anisotropy and at which \DeltaMomentum is complementary, not substitutive. The natural compositions, \textsc{AK-Muon}, \textsc{AK-Shampoo}, and \textsc{AK-SOAP}, inherit \DeltaMomentum's first-moment update while retaining the structural processing of the base optimizer, and we identify them as future work in \Cref{sec:conclusion}, and an isolated AK-AdamW vs.\ AdamW comparison is the prerequisite for those follow-ups, not a substitute for them. The AK-SGD vs.\ SGD-momentum result on CIFAR-10 (\Cref{sec:main-deltaSGD}) provides a parallel controlled comparison in the family without an adaptive denominator, verifying that the gain is attributable to the delta-rule mechanism rather than to its interaction with Adam's diagonal preconditioner.

\section{Stability of the Normalized-key \DeltaMomentum}
\label{app:stability_norm_key_dm}
The normalized-key variant 
gives input-scale-independent contraction under a
width-independent condition \(\eta<\beta\), unlike the unnormalized-key variant.
This is the main practical reason for
deploying it in deep-network training.

\begin{proposition}[Bounded directional second moment]
\label{prop:bounded-sigma-hat}
Let $x \in \R^n$ be any random vector with $\Pr(\|x\|>0) = 1$, and let
$\xhat = x/\|x\|$. Then $\Sighat := \E[\xhat\xhat^\top]$ satisfies
\begin{equation}
\tr(\Sighat) = 1 \qquad\text{and}\qquad 0 \le \lamhat_i \le 1 \text{ for all } i,
\label{eq:sigma-hat-bound}
\end{equation}
with $\sum_i \lamhat_i = 1$.
\end{proposition}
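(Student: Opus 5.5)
The plan is to work almost entirely with the rank-one matrix $\xhat\xhat^\top$ pointwise and then pass to the expectation by linearity. Since $\Pr(\|x\|>0)=1$, the normalized key $\xhat = x/\|x\|_2$ is well defined almost surely and satisfies $\|\xhat\|_2 = 1$. Every entry of $\xhat\xhat^\top$ is then bounded by $1$ in absolute value, so $\Sighat = \E[\xhat\xhat^\top]$ exists with no moment assumption on $x$. This is the one point where the normalization does real work, and it is why the statement needs nothing beyond $\|x\|>0$ almost surely.

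\textbf{Step 1: the trace.} Use linearity of trace and expectation together with the cyclic property:
\[
\tr(\Sighat) = \E[\tr(\xhat\xhat^\top)] = \E[\xhat^\top\xhat] = \E[\|\xhat\|_2^2] = 1 .
\]

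\textbf{Step 2: the eigenvalue range.} First, $\Sighat$ is symmetric. For any unit vector $v$ we have $v^\top\Sighat v = \E[(v^\top\xhat)^2]$. This quantity is $\ge 0$, and by Cauchy--Schwarz $(v^\top\xhat)^2 \le \|v\|^2\|\xhat\|^2 = 1$, so it is also $\le 1$. Applying this to each unit eigenvector $u_i$ gives $\lamhat_i = u_i^\top\Sighat u_i \in [0,1]$. The spectral theorem for the real symmetric matrix $\Sighat$ supplies the orthonormal eigenbasis, so $\sum_i \lamhat_i = \tr(\Sighat) = 1$.

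An alternative route to the upper bound is to use nonnegativity together with $\sum_i\lamhat_i = 1$. Either argument suffices.

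None of these steps is a genuine obstacle. The only care needed is the measure-theoretic remark that $\xhat$ is defined almost surely and bounded, which justifies exchanging expectation with trace and quadratic forms. I will also note that the bound $\lamhat_i = 1$ is attained only when $\xhat = \pm u_i$ almost surely. This remark supports the reading of $\lamhat_i$ as a directional density rather than an energy.
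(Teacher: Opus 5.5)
Your proof is correct and essentially the paper's: the trace computation is identical, and the paper gets $\lamhat_i \in [0,1]$ from positive semidefiniteness plus the eigenvalues summing to $\tr(\Sighat)=1$, which is exactly your stated alternative. Your primary bound $u_i^\top\Sighat u_i = \E[(u_i^\top\xhat)^2]\le 1$ via Cauchy--Schwarz is an equally short variant, and your remark that the boundedness of $\xhat$ guarantees $\Sighat$ exists supplies a point of rigor the paper leaves implicit.
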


\begin{proof}
$\tr(\Sighat) = \E[\tr(\xhat\xhat^\top)] = \E[\xhat^\top\xhat] = \E[\|\xhat\|^2] = 1$.
Each eigenvalue is non-negative since $\Sighat$ is positive semi-definite,
and bounded above by the trace.
\end{proof}

\paragraph{Normalized-key variant. } 
$\xhat_t := x_t / \|x_t\|$, giving
\begin{equation}
M_t = \beta M_{t-1} + \eta(\delta_t - M_{t-1}\xhat_t) \xhat_t^\top.
\label{eq:dm-norm-update}
\end{equation}

\begin{theorem}[Input-scale-independent contraction of the normalized variant]
\label{thm:unconditional-stability}
The normalized-key \DeltaMomentum recurrence~\eqref{eq:dm-norm-update} 
contracts in expectation along
every input direction whenever \(\eta < \beta\),
\emph{independent of} input scale, layer index, or training step. In
contrast, the original unnormalized-key \DeltaMomentum~\eqref{eq:dm-update} requires the
input-dependent and time-varying condition $\eta < \beta/\lambda_{\max}(\Sigx)$.
\end{theorem}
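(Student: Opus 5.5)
The plan is to reduce the statement to the spectrum of the expected forgetting operator, exactly as in \Cref{lem:transport}. First I take conditional expectations of \eqref{eq:dm-norm-update} given $M_{t-1}$. This gives the mean recursion
\[
\E[M_t] = \E[M_{t-1}]\bigl(\beta I - \eta\Sighat\bigr) + \eta\Ghat ,
\]
where I use independence of the fresh key from the past, as in the lemma. Subtracting the fixed point of \Cref{cor:fixedpoint-normalized} removes the affine term. The deviation then evolves by right multiplication with $\beta I - \eta\Sighat$, and in the eigenbasis $u_i$ of $\Sighat$ the component along $u_i$ is multiplied by $\tbeta_i = \beta - \eta\lamhat_i$ per step. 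So \emph{contraction along every direction} means $|\tbeta_i| < 1$ for all $i$, and the statement becomes a claim about the interval in which every $\lamhat_i$ lies.

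For the normalized variant I invoke \Cref{prop:bounded-sigma-hat}, which gives $0 \le \lamhat_i \le 1$ for every $i$ regardless of the distribution of $x$. Hence $\tbeta_i \in [\beta - \eta, \beta]$.
\begin{itemize}
\item The upper end satisfies $\tbeta_i \le \beta < 1$.
\item Under $\eta < \beta$, the lower end satisfies $\tbeta_i \ge \beta - \eta > 0$.
\end{itemize}
So $0 < \tbeta_i < 1$, and in fact the contraction is monotone, with no sign oscillation. This bound uses only $\tr\Sighat = 1$. It therefore holds whatever the input scale, layer, or step, since \Cref{prop:bounded-sigma-hat} requires only $\Pr(\|x\|>0)=1$. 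That is the claimed independence.

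For the unnormalized contrast, the same computation with $\Sigx$ gives the rates $\beta - \eta\lambda_i$ with $\lambda_i \in [0, \lambda_{\max}(\Sigx)]$. Non-negative contraction on the top direction needs $\eta\lambda_{\max} < \beta$, that is $\eta < \beta/\lambda_{\max}(\Sigx)$. This quantity scales quadratically with the input norm and changes as $\Sigx(\theta_t)$ drifts, so the condition is input dependent and time varying.

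The main obstacle is pinning down what \emph{contracts} means in the comparison. The bare condition $\rho(\beta I - \eta\Sigx) < 1$ would allow $\eta$ up to $(1+\beta)/\lambda_{\max}$ at the cost of oscillation. I will read the stated threshold as the non-oscillatory condition $\tbeta_i \ge 0$, and I will remark that for the normalized key even the weaker spectral-radius condition $\eta < 1+\beta$ is scale free.

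A second subtlety is that the independence of $\xhat_t$ from $M_{t-1}$ holds only under the same quasi-static reading used in \Cref{thm:fixedpoint}. I will state that assumption explicitly rather than try to remove it.
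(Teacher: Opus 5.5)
Your proposal is correct and follows the paper's proof. Both arguments reduce to the per-direction expected contraction $\tbeta_i = \beta - \eta\lamhat_i$ from \Cref{lem:transport}, use \Cref{prop:bounded-sigma-hat} to get $\tbeta_i \ge \beta - \eta > 0$, and obtain the unnormalized contrast by the same computation on $\Sigx$; your non-oscillatory reading $\tbeta_i > 0$ (rather than $\rho<1$, which would allow $\eta < 1+\beta$) and the upper bound $\tbeta_i \le \beta < 1$ are what the paper's argument uses implicitly.
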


\begin{proof}
By Lemma~\ref{lem:transport}, the expected per-direction memory contraction along the eigendirection $\uhat_i$ of $\Sighat$ is
$\tbeta_i = \beta - \eta\lamhat_i$. By
Proposition~\ref{prop:bounded-sigma-hat}, $\lamhat_i \le 1$, hence
$\tbeta_i \ge \beta - \eta > 0$ under the stated condition, so the buffer
contracts along every direction in expectation. For the unnormalized
variant the contraction along eigenvector $u_i$ of $\Sigx$ is
$\beta - \eta\lambda_i(\Sigx)$, which requires
$\eta < \beta/\lambda_{\max}(\Sigx)$ and depends on input scale.
\end{proof}

The decoupling of $\eta$ from input scale is the main practical reason
for using the normalized variant in deep-network training, where activation
scales vary by orders of magnitude across layers and through training.

\begin{remark}[Why normalization]
The forgetting operator is $\beta I - \eta\, \xhat_t \xhat_t^\top$, and its
rank-one part has spectral norm $\|\xhat_t \xhat_t^\top\|_{\mathrm{op}} = 1$ at
every width. This decouples $\eta$ from width, since stability requires only
$\eta < 1 + \beta$ and $\eta = \bigO(1)$ is admissible
across all model widths. Without normalization, $\|x_tx_t^\top\|_{\mathrm{op}} \sim n$,
forcing $\eta = \bigO(1/n)$. This is the foundation of $\mu$P-clean
width transfer for \DeltaMomentum.
\end{remark}

\section{Why the Momentum Buffer Is Necessarily a Gradient Estimator}
\label{app:taylor}

\begin{proposition}[First-order optimal momentum]
\label{prop:first-order-optimal}
Let $\mathcal{L}$ be twice differentiable at $\theta_t$ with Hessian $H_t$. For any
choice of $M_t$, the update $\theta_{t+1} = \theta_t - \alpha M_t$ satisfies
\begin{equation}
\mathcal{L}(\theta_{t+1})
= \mathcal{L}(\theta_t)
  - \alpha\,\langle M_t,\Gbar(\theta_t)\rangle_F
  + \frac{\alpha^2}{2}\,\tr\!\bigl(M_t^\top H_t M_t\bigr)
  + O(\alpha^3),
\label{eq:taylor}
\end{equation}
where $\Gbar(\theta_t) = \nabla_\theta \mathcal{L}(\theta_t)$. To first order in
$\alpha$, the loss decrease $\mathcal{L}(\theta_t) - \mathcal{L}(\theta_{t+1})$ is maximized over
$M_t$ of fixed Frobenius norm $\|M_t\|_F$ at $M_t \propto \Gbar(\theta_t)$.
\end{proposition}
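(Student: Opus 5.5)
The plan is a second-order Taylor expansion of $\mathcal{L}$ around $\theta_t$ along the ray $\theta_t - sM_t$, followed by a Cauchy--Schwarz argument for the first-order optimality claim.

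\textbf{Reduce to one variable.} Define $h(s) := \mathcal{L}(\theta_t - sM_t)$ for $s$ in a neighbourhood of $0$. By the chain rule,
\[
h'(0) = -\langle \nabla_\theta\mathcal{L}(\theta_t), M_t\rangle_F = -\langle M_t,\Gbar(\theta_t)\rangle_F .
\]
Viewing the Hessian $H_t$ as a symmetric linear operator on parameter space, the second derivative is
\[
h''(0) = \langle M_t, H_t[M_t]\rangle_F .
\]
For a single matrix layer, where $H_t$ acts on the columns of $M_t$, this equals $\tr(M_t^\top H_t M_t)$, which is the expression in \eqref{eq:taylor}.

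\textbf{Apply Taylor's theorem.} I would use Taylor's theorem with the Peano remainder at order two. This gives $h(\alpha) = h(0) + \alpha h'(0) + \tfrac{\alpha^2}{2}h''(0) + o(\alpha^2)$, and substituting the two derivatives yields \eqref{eq:taylor}.

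\textbf{Main obstacle: the remainder.} The statement claims an $O(\alpha^3)$ remainder, but mere twice-differentiability at $\theta_t$ only gives $o(\alpha^2)$. To get $O(\alpha^3)$ I would assume $\mathcal{L}$ is $C^3$, or that $\nabla^2\mathcal{L}$ is locally Lipschitz, near $\theta_t$. Then the integral form of the remainder is bounded by $\tfrac{\alpha^3}{6}\sup\|\nabla^3\mathcal{L}\|\,\|M_t\|_F^3$, or by the Lipschitz constant times $\|M_t\|_F^3$. This holds uniformly over $M_t$ in bounded sets, which is all the fixed-norm optimization below requires. I would record this assumption explicitly, since the first-order claim that follows does not depend on it.

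\textbf{First-order optimality.} To first order, $\mathcal{L}(\theta_t)-\mathcal{L}(\theta_{t+1}) = \alpha\langle M_t,\Gbar\rangle_F$. Fix $\|M_t\|_F = r$. By Cauchy--Schwarz in the Frobenius inner product,
\[
\langle M_t,\Gbar\rangle_F \le r\|\Gbar\|_F ,
\]
with equality if and only if $M_t = c\,\Gbar$ for some $c\ge 0$. So the maximizer is $M_t = r\,\Gbar/\|\Gbar\|_F$ when $\Gbar\neq 0$. When $\Gbar=0$ every $M_t$ of norm $r$ gives the same value, and the claim holds trivially. This also recovers the penalty stated in Section~\ref{sec:gradient-estimator}: for a fixed step size the first-order shortfall of $M_t$ relative to $\Gbar$ is $\alpha\langle \Gbar - M_t,\Gbar\rangle_F$.
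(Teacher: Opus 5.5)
Your proposal is correct and follows the same route as the paper: a second-order Taylor expansion of $\mathcal{L}$ at $\theta_t$ with displacement $-\alpha M_t$, followed by Cauchy--Schwarz applied to the linear first-order term $\langle M_t, \Gbar(\theta_t)\rangle_F$ at fixed $\|M_t\|_F$. You also point out that twice-differentiability at $\theta_t$ only gives an $o(\alpha^2)$ remainder, so the stated $O(\alpha^3)$ needs an extra assumption such as $C^3$ or a locally Lipschitz Hessian; this is accurate, and the paper's proof passes over it without comment.
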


\begin{proof}
Equation~\eqref{eq:taylor} is the second-order Taylor expansion of $\mathcal{L}$ at
$\theta_t$ with displacement $\Delta\theta = -\alpha M_t$. It holds pointwise
for any realization of $M_t$, and no probabilistic averaging is required. The
first-order term $-\alpha\langle M_t,\Gbar(\theta_t)\rangle_F$ is linear in
$M_t$, and by Cauchy--Schwarz $\langle M_t,\Gbar_t\rangle_F$ is maximized
over $M_t$ of fixed Frobenius norm by alignment $M_t \propto \Gbar_t$.
\end{proof}

Proposition~\ref{prop:first-order-optimal} is a Taylor identity, not a probabilistic
assumption, since any realized buffer $M_t$ that deviates from $\Gbar(\theta_t)$
pays a first-order shortfall
\begin{equation}
\Delta_{\mathrm{shortfall}}
= \alpha\,\langle \Gbar_t - M_t,\,\Gbar_t\rangle_F
\;\ge\; 0
\quad\text{(for fixed $\|M_t\|_F = \|\Gbar_t\|_F$)},
\end{equation}
in the realized loss decrease relative to the optimal direction
$M_t = \Gbar_t$. The quality of $M_t$ as an estimator of $\Gbar_t$ therefore
directly governs optimization efficiency. What differentiates between EMA
and \DeltaMomentum is the loss function on which the buffer performs online
stochastic gradient descent.

\section{Proofs for Estimation Objectives and Fixed Point Analysis}
\label{app:estimation-proofs}

\subsection{Divergent optima}
\label{app:divergent-optima}

\begin{proof}
\emph{Weight-space.} 

$J_w(M) = \tfrac{1}{2}\|M\|_F^2 - \langle M, \Gbar\rangle_F + \tfrac{1}{2}\E[\|\delta x^\top\|_F^2]$,
$\nabla_M J_w = M - \Gbar = 0 \Rightarrow M_w^{*} = \Gbar$.

\emph{Function-space.} 

$J_f(M) = \tfrac{1}{2}\E[\|\delta\|^2] - \tr(M^\top \Gbar) + \tfrac{1}{2}\tr(M\Sigx M^\top)$,
where $\E[\delta^\top M x] = \tr(M^\top \Gbar)$ and $\E[x^\top M^\top M x] = \tr(M\Sigx M^\top)$,
$\nabla_M J_f = -\Gbar + M\Sigx = 0 \Rightarrow M_f^{*} = \Gbar\,\Sigx^{-1}$.
\end{proof}

\subsection{Update rules as online SGD}
\label{app:updates-as-sgd}
\begin{proof}
\emph{(a)} $\nabla_M \ell_t^w = M - g_t$, so one SGD step with step $\alpha'$ gives
$M \leftarrow M - \alpha'(M - g_t) = (1-\alpha') M + \alpha' g_t$. Setting
$\alpha' = 1-\beta$ recovers EMA.
\emph{(b)} $\nabla_M \ell_t^f = -(\delta_t - M x_t) x_t^\top$, so one SGD step with
step $\eta$ gives $M \leftarrow M + \eta(\delta_t - Mx_t) x_t^\top$. At stationarity,
$\E[\nabla_M \ell_t^f] = -(\Gbar - M\Sigx) = 0 \Rightarrow M^{*} = \Gbar\,\Sigx^{-1}$.
\end{proof}

\subsection{Fixed Point Convergence of \DeltaMomentum}
\label{app:fixed-point}
\begin{proof}
Take expectations of~\eqref{eq:dm-factored} conditional on $M_{t-1}$:
$\E[M_t \mid M_{t-1}] = M_{t-1}(\beta I - \eta\Sigx) + \eta\Gbar$.
At stationarity $M^{*} = M^{*}(\beta I - \eta\Sigx) + \eta\Gbar$, hence
$M^{*}[(1-\beta)I + \eta\Sigx] = \eta\Gbar$ and~\eqref{eq:dm-fixedpoint}.
Projection onto $u_i$ gives~\eqref{eq:dm-eigen}.
\end{proof}

\subsection{Memory-theoretic reading} 
\label{app:memory-theoretic-reading}
Theorem~\ref{thm:fixedpoint} can be
read directly off Lemma~\ref{lem:transport}: along $u_i$, the steady-state
ratio of injection to forgetting is
$\eta / [(1-\beta) + \eta\lambda_i] = \eta\tau_i$, the product of per-query
injection gain $\eta$ and direction-dependent memory horizon $\tau_i$.
Directions queried more often have shorter memory horizons, which reduces
the steady-state retained mass along those directions, and this down-weighting
\emph{is} input-side whitening.

\begin{remark}[K-FAC connection]
\label{rem:kfac-app}
The K-FAC approximation of the Fisher information for a linear layer is
$F \approx \E[\delta\delta^\top]\otimes \E[x x^\top]$, with natural-gradient
update $\E[\delta\delta^\top]^{-1}\, \Gbar\,\Sigx^{-1}$. The pure delta-rule
fixed point $\Gbar\,\Sigx^{-1}$ is exactly the input-side factor of this
Kronecker product. \DeltaMomentum thus performs implicit input-side
natural-gradient preconditioning without explicitly computing or inverting
$\Sigx$, and the regularization parameter $\mu$ provides a numerically stable
interpolation toward this preconditioner. As in \Cref{rem:kfac}, the deployed
algorithm whitens $\Sighat$ rather than $\Sigx$, so this reading holds up to the
scalar identified in \Cref{cor:fixedpoint-normalized} rather than as an identity.
\end{remark}

\subsection{Estimator and Preconditioner as One Object}
\label{app:estimator-and-preconditioner}

\DeltaMomentum is primarily a gradient estimator, and the preconditioning reading is what that estimator looks like when it is written in weight coordinates. Neither reading is only a statement about the fixed point. What separates \DeltaMomentum from EMA is the space in which the estimate is formed. A layer computes $y = Wx$, so a step $W \leftarrow W - \alpha M$ changes the output on input $x$ by $-\alpha M x$ while the loss asks for $-\alpha\delta$. The job of the buffer is therefore to estimate the map from input to desired output change, which is the regression of $\delta$ on $\xhat$. That is \Cref{def:fms}, minimized by $\Ghat\Sighat^{-1}$, and \eqref{eq:dm-update} is online SGD on it. EMA estimates the raw weight-space gradient instead, which is \Cref{def:wms}, minimized by $\Ghat$. So $\Sighat^{-1}$ is not an operator applied to a gradient after the fact. It is the normal equations of the regression that defines the estimate.

The preconditioning reading follows because weight space and function space are the same linear space under two inner products, $\langle A, B\rangle_F$ and $\E\langle A\xhat, B\xhat\rangle = \tr(A\Sighat B^\top)$. The matrix $\Sighat^{-1}$ converts between them, so writing the function-space estimate in weight coordinates produces the right-preconditioned form of \Cref{thm:fixedpoint}. Substituting the buffer for EMA is appropriate because the two objectives are not in conflict. Both maximize the same linear functional $\E\langle \delta, M\xhat\rangle = \langle M, \Ghat\rangle_F$, and they differ only in which norm is held fixed while doing so, $\|M\|_F$ for \Cref{def:wms} and the induced output change $\E\|M\xhat\|^2$ for \Cref{def:fms}, with the Tikhonov term $\mu$ interpolating between them. The second constraint is the one the role of the buffer calls for, and for a linear layer the identification is exact. In expectation the substitution also cannot reverse a descent direction, since $((1-\beta)I + \eta\Sighat)^{-1} \succ 0$.

\section{Proofs for Linear Regression Dynamics}
\label{app:linreg-proofs}
\label{app:linear-regression}

\subsection{Per-Direction Iteration Matrices}
\label{app:iteration-matrices}

\begin{proof}
The momentum update along $u_i$ is, in expectation,
\begin{align*}
\text{EMA: } & \mu_{t,i} = \beta\mu_{t-1,i} + (1-\beta)\lambda_i\, e_{t,i},\\
\text{Delta: } & \mu_{t,i} = (\beta - \eta\lambda_i)\mu_{t-1,i} + \eta\lambda_i\, e_{t,i},
\end{align*}
where Delta uses the conditional expectation $\E[M_{t-1} x_t x_t^\top \mid M_{t-1}] = M_{t-1}\Sigx$
projected onto $u_i$. The weight update is
$e_{t+1,i} = e_{t,i} - \alpha\mu_{t,i}$. Substituting the momentum recurrence
into the weight recurrence and arranging the state vector as
$(e_{t,i}, \mu_{t-1,i})^\top$ gives~\eqref{eq:iter-matrices} in the unnormalized reading.

For the deployed normalized-key algorithm the same derivation gives
\begin{align*}
\text{Delta: } & \mu_{t,i} = (\beta - \eta\lamhat_i)\mu_{t-1,i} + \eta\kappa_i\, e_{t,i},
\end{align*}
because the buffer is contracted by $\Sighat$ while the injected value $\delta_t = E_t x_t$ still carries the raw input, so $\Ghat u_i = \E[\delta\xhat^\top]u_i = -e_{t,i}\,\kappa_i$ with $\kappa_i = u_i^\top\E[\|x\|\,\xhat\xhat^\top]u_i$, using $x\xhat^\top = \|x\|\,\xhat\xhat^\top$. This is the form stated in \eqref{eq:iter-matrices}, and it requires that $\Sighat$ and $\E[\|x\|\,\xhat\xhat^\top]$ share eigenvectors. Only the trace moves. The determinant is $\tbeta_i$ either way, so \Cref{thm:detred} and \Cref{thm:strengthened-rate} are unaffected by the split.
\end{proof}

\subsection{Per-direction Determinant Reduction}
\label{app:determinant-reduction}

\begin{proof}
Direct computation. For EMA, $\det A_i^{\mathrm{EMA}} = \beta(1 - \alpha(1-\beta)\kappa_i) - (-\alpha\beta)(1-\beta)\kappa_i
= \beta - \alpha\beta(1-\beta)\kappa_i + \alpha\beta(1-\beta)\kappa_i = \beta$.
For \DeltaMomentum, $\det A_i^{\mathrm{Delta}} = \tbeta_i(1 - \alpha\eta\kappa_i) - (-\alpha\tbeta_i)(\eta\kappa_i)
= \tbeta_i - \alpha\eta\kappa_i\tbeta_i + \alpha\eta\kappa_i\tbeta_i = \tbeta_i$.
The coupling $\kappa_i$ cancels in both, which is why the determinant is set by the memory statistic alone. The inequality $\tbeta_i < \beta$ is immediate from $\eta\lamhat_i > 0$. The unnormalized reading is the same computation with $\kappa_i$ replaced by $\lambda_i$ and $\lamhat_i$ by $\lambda_i$.
\end{proof}

\subsection{Convergence Rate Reduction in \DeltaMomentum's Underdamped Regime}

\begin{lemma}[Determinant lower bound on the spectral radius]
\label{lem:det-bound}
Let $A \in \R^{2\times 2}$ with $\det A > 0$. Then
\begin{equation}
\rho(A) \;\ge\; \sqrt{\det A},
\label{eq:det-bound}
\end{equation}
with equality if and only if the discriminant
$\Delta(A) = \tr(A)^2 - 4\det A$ satisfies $\Delta(A) \le 0$ (i.e., $A$ has
complex-conjugate eigenvalues or a repeated real eigenvalue).
\end{lemma}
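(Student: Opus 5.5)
The plan is to work directly with the two eigenvalues of $A$ through its characteristic polynomial $p(z) = z^2 - \tr(A)\,z + \det A$. We write $\tau := \tr(A)$, $d := \det A > 0$, and $\Delta(A) = \tau^2 - 4d$. The eigenvalues $z_1, z_2$ satisfy $z_1 z_2 = d$ and $z_1 + z_2 = \tau$. We then split into the two cases $\Delta(A) \le 0$ and $\Delta(A) > 0$.

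\emph{Case $\Delta(A) \le 0$.} The eigenvalues are either a complex-conjugate pair $z_2 = \bar z_1$, or a repeated real root $z_1 = z_2 = \tau/2$. In the conjugate case, $|z_1|^2 = z_1 \bar z_1 = d$. In the repeated case, $z_1^2 = d$. Either way $|z_1| = |z_2| = \sqrt{d}$, so $\rho(A) = \sqrt{\det A}$ and equality holds.

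\emph{Case $\Delta(A) > 0$.} The eigenvalues are real and distinct: $z_{1,2} = (\tau \pm \sqrt{\Delta})/2$. Since $z_1 z_2 = d > 0$, they share a sign, and $|z_1|\,|z_2| = d$. By AM--GM, $\max(|z_1|,|z_2|) \ge \sqrt{|z_1||z_2|} = \sqrt{d}$. The inequality is strict here, because equality in AM--GM would force $|z_1| = |z_2|$, and together with equal signs this gives $z_1 = z_2$, contradicting $\Delta > 0$. As a cross-check, one can compute directly $\rho(A) = (|\tau| + \sqrt{\Delta})/2$. Since $\tau^2 = \Delta + 4d > 4d$, we get $|\tau| > 2\sqrt d$, hence $\rho(A) > \sqrt d$.

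Combining the two cases gives $\rho(A) \ge \sqrt{\det A}$ in general, with equality exactly when $\Delta(A) \le 0$. The only point requiring care is the strictness in the real-distinct case, which is where the hypothesis $\det A > 0$ (same-sign roots) is used. No earlier result from the paper is needed beyond the definition of the spectral radius. This lemma is then what converts \Cref{thm:detred} into the underdamped rate comparison: at any step size for which $A_i^{\mathrm{Delta}}$ is underdamped, $\rho = \sqrt{\tbeta_i} < \sqrt{\beta} \le \rho(A_i^{\mathrm{EMA}})$.
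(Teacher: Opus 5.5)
Your proof is correct and follows essentially the same route as the paper: AM--GM on the eigenvalue moduli, whose product is $\det A$, with equality exactly when $|z_1| = |z_2|$. You organize the argument by cases on $\Delta(A)$ rather than by one inequality chain. You also make explicit the same-sign step that rules out $z_1 = -z_2$ via $\det A > 0$, which the paper's characterization of the equality case leaves implicit.
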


\begin{proof}
Let $z_1, z_2$ be the eigenvalues, with $z_1 z_2 = \det A > 0$, so they are
either both real with the same sign or a complex-conjugate pair. By the
AM--GM inequality applied to $|z_1|, |z_2| > 0$,
\[
\rho(A) = \max(|z_1|, |z_2|) \;\ge\; \tfrac12(|z_1|+|z_2|) \;\ge\; \sqrt{|z_1| |z_2|} = \sqrt{\det A}.
\]
Equality in the second inequality holds iff $|z_1| = |z_2|$, which for a
real matrix $A$ holds iff either (a) $z_1 = z_2$ (repeated real eigenvalue,
$\Delta = 0$), or (b) $z_1 = \bar z_2$ is a complex-conjugate pair
($\Delta < 0$).
\end{proof}

\begin{theorem}[Per-direction rate reduction in the underdamped regime]
\label{thm:strengthened-rate}
Assume the stability condition $\eta\lamhat_i < \beta$ (so
$\tbeta_i = \beta - \eta\lamhat_i > 0$). Suppose that along the
eigendirection $\uhat_i$ of $\Sighat$, the normalized \DeltaMomentum
operates in the underdamped regime, i.e.\ $\Delta_i^{\mathrm{Delta}} < 0$.
Then, \emph{regardless of EMA's regime},
\begin{equation}
\rho_i^{\mathrm{Delta}} \;=\; \sqrt{\beta - \eta\lamhat_i} \;<\; \sqrt{\beta} \;\le\; \rho_i^{\mathrm{EMA}}.
\label{eq:strengthened-rate}
\end{equation}
The reduction is anisotropic, since directions with larger directional density
$\lamhat_i$ enjoy a strictly larger reduction.
\end{theorem}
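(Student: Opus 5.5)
The argument combines the determinant identity of \Cref{thm:detred} with the spectral-radius bound of \Cref{lem:det-bound}, applied once to each iteration matrix in \eqref{eq:iter-matrices}.

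\emph{Step 1: the \DeltaMomentum side.} By \Cref{thm:detred}, $\det A_i^{\mathrm{Delta}} = \tbeta_i = \beta - \eta\lamhat_i$. This value does not depend on the coupling $\kappa_i$ or on the step size $\alpha$. The stability hypothesis $\eta\lamhat_i < \beta$ makes it strictly positive, so \Cref{lem:det-bound} applies. The underdamped hypothesis $\Delta_i^{\mathrm{Delta}} < 0$ is exactly the case where that lemma holds with equality: the eigenvalues form a complex-conjugate pair of common modulus $\sqrt{\det A_i^{\mathrm{Delta}}}$. Hence $\rho_i^{\mathrm{Delta}} = \sqrt{\beta - \eta\lamhat_i}$.

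\emph{Step 2: the EMA side.} Again by \Cref{thm:detred}, $\det A_i^{\mathrm{EMA}} = \beta$. Assuming $\beta > 0$, this is positive. The inequality part of \Cref{lem:det-bound} needs no assumption on the discriminant, so $\rho_i^{\mathrm{EMA}} \ge \sqrt{\beta}$ in every regime. This is why the statement can say ``regardless of EMA's regime.'' If $\beta = 0$, the middle bound $\sqrt{\beta}\le\rho_i^{\mathrm{EMA}}$ is trivial. But then $\tbeta_i > 0$ would force $\eta\lamhat_i < 0$, which contradicts the hypotheses. So the case $\beta = 0$ is vacuous, and I would note this in one line.

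\emph{Step 3: the strict inequality and anisotropy.} Since $\eta > 0$ and $\lamhat_i > 0$, we have $\beta - \eta\lamhat_i < \beta$. Because $\sqrt{\cdot}$ is strictly increasing on $[0,\infty)$, the middle inequality is strict. Chaining this with Steps 1 and 2 gives \eqref{eq:strengthened-rate}. For the anisotropy claim, the gap $\sqrt{\beta} - \sqrt{\beta - \eta\lamhat_i}$ is strictly increasing in $\lamhat_i$ on $[0, \beta/\eta)$. The ratio $\rho_i^{\mathrm{Delta}}/\sqrt{\beta} = \sqrt{1 - \eta\lamhat_i/\beta}$ is likewise strictly decreasing. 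So directions with larger directional density get a strictly larger reduction.

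\emph{Main obstacle.} There is little real difficulty; the only care needed is in the hypotheses of \Cref{lem:det-bound}. It requires $\det A > 0$ on both sides: for \DeltaMomentum this is the stability assumption, and for EMA it needs $\beta > 0$, handled above. I would also check that the underdamped assumption is stated for the matrix whose determinant is $\tbeta_i$, namely the deployed normalized-key matrix with coupling $\eta\kappa_i$. That matrix has trace $1 - \alpha\eta\kappa_i + \tbeta_i$, while its determinant is still $\tbeta_i$ because the coupling cancels. The equality case of the lemma therefore gives exactly $\sqrt{\tbeta_i}$ and not a quantity involving $\kappa_i$.
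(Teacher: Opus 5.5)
Your proposal is correct and follows the paper's proof essentially step for step: \Cref{thm:detred} gives both determinants, $\Delta_i^{\mathrm{Delta}}<0$ forces a complex-conjugate pair of modulus $\sqrt{\tbeta_i}$, and \Cref{lem:det-bound} applied to $A_i^{\mathrm{EMA}}$ gives $\rho_i^{\mathrm{EMA}}\ge\sqrt{\beta}$ in any regime. Your extra remarks make explicit two points the paper leaves implicit: that $\beta=0$ makes the hypotheses vacuous, and that $\sqrt{\beta}-\sqrt{\beta-\eta\lamhat_i}$ is strictly increasing in $\lamhat_i$, which is the anisotropy claim the paper states but does not argue.
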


\begin{proof}
Under stability, $\det A_i^{\mathrm{Delta}} = \tbeta_i > 0$ and
$\det A_i^{\mathrm{EMA}} = \beta > 0$ (\Cref{thm:detred}). Since
$\Delta_i^{\mathrm{Delta}} < 0$, the eigenvalues of $A_i^{\mathrm{Delta}}$
form a complex-conjugate pair with squared modulus $\det A_i^{\mathrm{Delta}}$,
hence $\rho_i^{\mathrm{Delta}} = \sqrt{\tbeta_i} = \sqrt{\beta - \eta\lamhat_i}$.
By Lemma~\ref{lem:det-bound} applied to $A_i^{\mathrm{EMA}}$,
$\rho_i^{\mathrm{EMA}} \ge \sqrt{\det A_i^{\mathrm{EMA}}} = \sqrt{\beta}$,
with equality only if EMA is also underdamped or critically damped along
$\uhat_i$. Since $\eta\lamhat_i > 0$,
$\sqrt{\beta - \eta\lamhat_i} < \sqrt{\beta} \le \rho_i^{\mathrm{EMA}}$.
\end{proof}

\begin{remark}[Explicit underdamped condition]\label{rem:underdamped-typicality}
The discriminant of $A_i^{\mathrm{Delta}}$ is
$\Delta_i^{\mathrm{Delta}} = (1+\tilde{\beta}_i-\alpha\eta\kappa_i)^2 - 4\tilde{\beta}_i$,
so the underdamped condition $\Delta_i^{\mathrm{Delta}}<0$ is equivalent to
\begin{equation}
\alpha\eta\kappa_i \;\in\; \Bigl( (1-\sqrt{\tilde{\beta}_i})^2,\; (1+\sqrt{\tilde{\beta}_i})^2 \Bigr).
\label{eq:underdamped-cond}
\end{equation}
The window has width $4\sqrt{\tilde{\beta}_i}$, approaching $4$ as $\tilde{\beta}_i\to 1$, so the band is wide whenever $\tilde{\beta}_i$ is not far from $1$. The same calculation with $\tilde{\beta}_i\!\to\!\beta$ yields the EMA condition $\alpha(1-\beta)\kappa_i\in((1-\sqrt{\beta})^2,(1+\sqrt{\beta})^2)$, and for matched $(\alpha,\beta)$ the two regimes share the same directions to leading order in $\eta\lamhat_i$, so \Cref{thm:strengthened-rate} applies on exactly the directions where EMA is also underdamped, which is the comparison of interest. For AK-AdamW the relevant $\alpha$ in \eqref{eq:underdamped-cond} is the per-direction \emph{effective} step size $\alpha/(\sqrt{v_i}+\varepsilon)$; under $\mu$P this is $\Theta(1)$ on hidden weights, comfortably inside the band along the directions on which loss-curve dynamics are non-trivial.
\end{remark}

\paragraph{Regime of validity.}
Theorem~\ref{thm:strengthened-rate} restricts the comparison to the regime where
\DeltaMomentum is underdamped along $\hat{u}_i$, and EMA may be in any regime. Two
remarks. \emph{(i) Practical relevance.} High-density directions
($\hat{\lambda}_i$ near 1) typically operate in this regime under reasonable
hyperparameters, and these directions dominate early- and mid-training
loss-curve dynamics, making Theorem~\ref{thm:strengthened-rate} the operative regime in
practice. \emph{(ii) Honest scope.} When \DeltaMomentum is overdamped along
$\hat{u}_i$, the per-direction comparison depends jointly on trace and
determinant, and a determinant decrease at fixed trace can in principle
increase $\rho$.

\section{Proofs for Non-stationary Tracking Dynamics}
\label{app:tracking-proofs}
\label{app:tracking}

\subsection{Direction-selective Tracking under Distribution Shift}
\label{app:direction-selective-tracking}

\begin{proof}
\textbf{EMA.} $\E[M_t \mid M_{t-1}] = \beta M_{t-1} + (1-\beta)\Gbar
= \Gbar + \beta(M_{t-1} - \Gbar) = \Mstar + \beta(M_{t-1} - \Mstar)$.
Subtracting $\Mstar$ and projecting onto $u_i$ yields~\eqref{eq:tracking-ema}.

\textbf{\DeltaMomentum.} From~\eqref{eq:dm-factored},
$\E[M_t \mid M_{t-1}] = M_{t-1}(\beta I - \eta\Sigma_k) + \eta\Gbar_k$.
Using the fixed-point identity
$\eta\Gbar_k = \Mstar[(1-\beta)I + \eta\Sigma_k] = \Mstar - \Mstar(\beta I - \eta\Sigma_k)$:
\begin{align*}
\E[M_t - \Mstar \mid M_{t-1}]
&= M_{t-1}(\beta I - \eta\Sigma_k) + \eta\Gbar_k - \Mstar \\
&= M_{t-1}(\beta I - \eta\Sigma_k) - \Mstar(\beta I - \eta\Sigma_k) \\
&= (M_{t-1} - \Mstar)(\beta I - \eta\Sigma_k).
\end{align*}
Projecting onto $u_i$ and using $\Sigma_k u_i = \lambda_i u_i$ yields
$\E[T_{t,i}] = \tbeta_i T_{t-1,i}$.
\end{proof}

\subsection{Uniform Tracking-rate Bounds (Normalized Variant)}

For completeness we record the direction-uniform tracking bounds for the normalized variant of \DeltaMomentum. They follow directly from \Cref{thm:tracking} and Proposition~\ref{prop:bounded-sigma-hat}.

\begin{corollary}[Uniform tracking-rate bounds, normalized variant]
\label{cor:uniform-tracking}
For the normalized variant, $\hat{\lambda}_i \in [0,1]$ implies
\begin{equation}
\beta - \eta \;\le\; \tilde{\beta}_i \;\le\; \beta \quad \text{for all } i,
\label{eq:uniform-tracking}
\end{equation}
so 
the fastest-tracking direction contracts at rate $\beta - \eta$, and the slowest is bounded above by EMA's uniform rate $\beta$
(a width- and layer-independent guarantee). 
Moreover, the directional-density-weighted mean contraction rate is
\begin{equation}
\sum_i \hat{\lambda}_i \tilde{\beta}_i \;=\; \beta - \eta\,\|\hat{\Sigma}\|_F^2,
\label{eq:weighted-mean}
\end{equation}
which by Cauchy--Schwarz satisfies $\sum_i \hat{\lambda}_i \tilde{\beta}_i \le \beta - \eta/n$, with equality iff $\hat{\Sigma}$ is isotropic. Anisotropy of $\hat{\Sigma}$ strictly improves the density-weighted mean tracking rate beyond the isotropic baseline $\beta - \eta/n$.
\end{corollary}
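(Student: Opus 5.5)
The plan is to combine \Cref{thm:tracking}, applied under the normalized-key substitution of \Cref{rem:normalized-key} (so the per-direction contraction is $\tbeta_i = \beta - \eta\lamhat_i$), with the spectral facts of \Cref{prop:bounded-sigma-hat}. I would not revisit the dynamics at all. The corollary is a set of algebraic consequences of $\tbeta_i$ being affine in $\lamhat_i$, together with $\lamhat_i \in [0,1]$ and $\sum_i \lamhat_i = 1$.

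\textbf{Uniform bounds.} Since $\eta > 0$ and $0 \le \lamhat_i \le 1$ by \Cref{prop:bounded-sigma-hat}, multiplying by $-\eta$ and adding $\beta$ gives $\beta - \eta \le \tbeta_i \le \beta$ immediately. The extremes are attained at $\lamhat_i = 1$ and $\lamhat_i = 0$ respectively. Under the stability condition $\eta < \beta$ of \Cref{thm:unconditional-stability}, the lower end is positive, which is what makes ``fastest-tracking rate $\beta-\eta$'' a contraction rate rather than an oscillating factor. None of these quantities depends on $n$ or on the layer, which is the width- and layer-independence claim.

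\textbf{Weighted mean.} I would expand directly:
\[
\sum_i \lamhat_i \tbeta_i = \beta \sum_i \lamhat_i - \eta \sum_i \lamhat_i^2 = \beta - \eta \sum_i \lamhat_i^2 .
\]
The trace identity $\sum_i \lamhat_i = 1$ gives the first term. For the second, $\sum_i \lamhat_i^2 = \tr(\Sighat^2) = \|\Sighat\|_F^2$, because $\Sighat$ is symmetric positive semidefinite and its Frobenius norm squared equals the sum of its squared eigenvalues. This yields \eqref{eq:weighted-mean}.

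\textbf{Cauchy--Schwarz bound and equality case.} Apply Cauchy--Schwarz to the vectors $(\lamhat_1,\dots,\lamhat_n)$ and $(1,\dots,1)$:
\[
1 = \Bigl(\sum_i \lamhat_i\Bigr)^2 \le n \sum_i \lamhat_i^2 ,
\]
so $\|\Sighat\|_F^2 \ge 1/n$. Multiplying by $-\eta < 0$ gives $\sum_i \lamhat_i\tbeta_i \le \beta - \eta/n$. Equality in Cauchy--Schwarz holds iff all $\lamhat_i$ are equal, hence all equal $1/n$. That is $\Sighat = I/n$, the isotropic case, since a symmetric matrix with a single repeated eigenvalue is that multiple of the identity. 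Any anisotropy therefore makes the inequality strict, which is the final sentence of the statement.

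\textbf{Main obstacle.} The only delicate point is bookkeeping rather than mathematics. The weighted sum must run over all $n$ eigenvalues, zeros included, so that the Cauchy--Schwarz count is $n$ and the isotropic baseline is $\beta - \eta/n$. In addition, ``isotropic'' must be read as $\Sighat = I/n$ on all of $\R^n$, not merely isotropic on the support of $\Sighat$. I would state both conventions explicitly before invoking Cauchy--Schwarz.
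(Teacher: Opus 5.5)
Your proposal is correct and follows the paper's proof essentially line for line: the bracket comes from $0 \le \lamhat_i \le 1$, the weighted mean from expanding $\sum_i \lamhat_i\tbeta_i = \beta\sum_i\lamhat_i - \eta\sum_i\lamhat_i^2$ with $\tr\Sighat = 1$, and the bound from Cauchy--Schwarz against the all-ones vector, with equality iff all $\lamhat_i$ are equal. Your two conventions (summing over all $n$ eigenvalues including zeros, and reading isotropic as $\Sighat = I/n$) make explicit what the paper leaves implicit.
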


\label{app:uniform-tracking-rate}
\begin{proof}
The bracket~\eqref{eq:uniform-tracking} is direct from
$0 \le \lamhat_i \le 1$. For the density-weighted mean,
$\sum_i \lamhat_i\tbeta_i = \beta\sum_i \lamhat_i - \eta\sum_i \lamhat_i^2
= \beta - \eta\,\|\Sighat\|_F^2$, using $\tr(\Sighat) = 1$. The
Cauchy--Schwarz inequality
$1 = (\sum_i \lamhat_i)^2 \le n\sum_i \lamhat_i^2$ gives
$\|\Sighat\|_F^2 \ge 1/n$, with equality iff all $\lamhat_i$ are equal.
\end{proof}

\section{Scope of the Theoretical Assumptions}
\label{app:scope}

The results of \Cref{sec:methodology} carry different amounts of assumption. This appendix states which one describes a training run.

\Cref{lem:transport} needs only the key second moment and is an exact property of the deployed update, so it carries to any architecture, loss, and data. \Cref{thm:tracking} lets both $\Ghat$ and $\Sighat$ move, imposes no regime restriction, and gives strictly faster expected tracking-error contraction along every direction with positive input density. The runs of \Cref{sec:experiments} live there, so \Cref{thm:tracking} and not \Cref{thm:fixedpoint} is what describes them.

\Cref{thm:fixedpoint} is the only result that needs quasi-staticity, and it needs half of it. The assumption covers two quantities, $\Ghat$ and $\Sighat$, and $\Sighat$ alone sets the per-direction decay rates. Drift in $\Ghat$ only makes the buffer lag a moving target, which is the tracking error \Cref{thm:tracking} already bounds. The real condition is therefore that $\Sighat$ moves slowly compared with the buffer memory window $\tau_i$ of \eqref{eq:horizon}, which is at most $100$ steps at $\beta = 0.99$. The condition is local in time. It asks that $\Sighat$ hold still over one window and reapplies as the window slides, so \Cref{thm:fixedpoint} describes the current phase of training rather than the whole run. This is the same locality that K-FAC and Shampoo rely on when they reuse an input-side factor for tens to hundreds of steps. It holds most comfortably after warmup and is weakest early in training and under abrupt input shifts, which are the cases \Cref{thm:tracking} covers.

The linear-regression analysis of \Cref{sec:linear-regression-dynamics} is idealized by design. It exhibits a clean setting in which a strict per-direction improvement can be proved exactly rather than accounting for transformer training, and we know of no comparable result separating momentum schemes on non-convex losses. What carries to a transformer is the mechanism and not the closed form, and \Cref{lem:loss-free-det-main} and \Cref{prop:nc1-main} are the two statements that carry it. Appendix~\ref{app:beyond-linreg} works out what they give beyond the squared loss, and \Cref{sec:diag} measures on the real 24-layer run the quantities that \Cref{lem:transport} and \Cref{thm:tracking} govern.

\section{Guarantees Beyond Standard Linear Regression}
\label{app:beyond-linreg}

\Cref{sec:linear-regression-dynamics} proves a strict per-direction improvement on standard linear regression. This appendix shows that the guarantee survives in two settings that the main text does not cover, high-dimensional regression with $p > n$ and canonical generalized linear models including logistic regression. \Cref{app:nonconvex} then treats the non-convex losses deep networks produce, which are reached through the tracking result of \Cref{sec:non-stationary-tracking-dynamics} rather than through the determinant. Throughout we work in the deployed normalized-key update based on \Cref{rem:normalized-key}.

\paragraph{One ratio organizes all three.}
Along direction $u_i$ the delta rule forgets at $(1-\beta) + \eta\lamhat_i$ per step against EMA's $(1-\beta)$, a ratio of
\begin{equation}
1 + r_i, \qquad r_i := \frac{\eta\lamhat_i}{1-\beta},
\label{eq:ratio}
\end{equation}
and every gain below comes from that one factor. It shortens the memory window, it speeds relaxation toward a moving target, it shrinks the steady lag bound, and it scales direction $i$ of the implicit preconditioner $(\mu I + \Sighat)^{-1}$ by $1/(1+r_i)$, which lowers the effective condition number on the data span.

\subsection{A Loss-Free Determinant Lemma}
\label{app:lemma-a}

The per-direction iteration of \eqref{eq:iter-matrices} was derived for a squared loss, but its determinant never uses the loss. Making that explicit is what lets the same comparison run in every setting below.

\begin{lemma}[Loss-free determinant, restating \Cref{lem:loss-free-det-main}]
\label{lem:loss-free-det}
Along $u_i$, one step maps the pair (parameter error, buffer component) through
\begin{equation}
A = \begin{pmatrix} 1 - \alpha c & -\alpha \tbeta_i \\ c & \tbeta_i \end{pmatrix},
\label{eq:loss-free-A}
\end{equation}
where $\alpha$ is the step size and $c$ is the per-direction coupling from the parameter error into the buffer, equal to $\eta$ times the local curvature along $u_i$. This is \eqref{eq:iter-matrices} with $c$ left arbitrary, and $c = \eta\kappa_i$ recovers it. Then
\begin{equation}
\det A = \tbeta_i \quad \text{for \DeltaMomentum}, \qquad \det A = \beta \quad \text{for EMA momentum},
\end{equation}
for any $c$ and any $\alpha$. The determinant depends only on the memory statistic, never on the loss that produced $c$. Consequently, by \Cref{lem:det-bound},
\begin{equation}
\min_\alpha \rho(A) = \sqrt{\det A},
\end{equation}
so the best achievable per-direction rate falls from $\sqrt{\beta}$ to $\sqrt{\tbeta_i}$ on every direction with $c > 0$, whatever loss produced it.
\end{lemma}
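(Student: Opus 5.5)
The proof has two parts: an algebraic identity for the determinant, and a spectral-radius optimization over $\alpha$ that reduces to \Cref{lem:det-bound}. For the first part I would expand the $2\times 2$ determinant of \eqref{eq:loss-free-A} directly: $\det A = \tbeta_i(1-\alpha c) - (-\alpha\tbeta_i)c = \tbeta_i$. The two $\alpha c\,\tbeta_i$ cross terms cancel identically, so the value holds for every $c$ and every $\alpha$. For EMA I would repeat the computation of Appendix~\ref{app:determinant-reduction} with the coupling $(1-\beta)\kappa_i$ replaced by an arbitrary $c$, i.e.\ on $\begin{pmatrix} 1-\alpha c & -\alpha\beta \\ c & \beta\end{pmatrix}$, and obtain $\beta$ by the same cancellation. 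Since $\tbeta_i$ and $\beta$ come from the momentum recursion alone (\Cref{lem:transport}), no loss quantity appears.

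For the rate statement, let $d=\det A$ ($d = \tbeta_i$ or $\beta$). I would assume $d\in(0,1)$, which for the Delta row is the stability condition $\eta\lamhat_i<\beta$ of \Cref{thm:strengthened-rate}. The lower bound $\rho(A)\ge\sqrt{d}$ for every $\alpha$ is exactly \Cref{lem:det-bound}, since $d>0$. For attainment I would use the trace. Because $\tr A = 1 + d - \alpha c$ with $c>0$, as $\alpha$ ranges over $(0,\infty)$ the trace sweeps the interval $(-\infty, 1+d)$. The equality case of \Cref{lem:det-bound} requires $\tr(A)^2\le 4d$, i.e.\ $\tr A\in[-2\sqrt d, 2\sqrt d]$. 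By AM--GM, $2\sqrt d < 1+d$ for $d\neq 1$, so this interval meets the reachable range. Concretely, $\alpha c = (1-\sqrt d)^2$ gives the repeated eigenvalue $\sqrt d$, and the whole band \eqref{eq:underdamped-cond} gives complex eigenvalues of modulus $\sqrt d$. Hence $\min_\alpha\rho(A)=\sqrt d$, and the minimum is attained rather than only approached. Comparing the two rows then gives $\sqrt{\tbeta_i}<\sqrt\beta$ whenever $\eta\lamhat_i>0$.

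The determinant identity is routine. The step needing care is the attainment claim, for two reasons. First, it needs $d>0$ to invoke \Cref{lem:det-bound}, so I would state the stability hypothesis explicitly. Second, it needs $c>0$ so that $\alpha$ actually moves the trace, since with $c=0$ the trace is frozen at $1+d$ and the minimum is not reached. I would also remark that the optimizing $\alpha$ differs between the two rows, which is why the comparison is stated as a minimum over step sizes rather than at a fixed $\alpha$.
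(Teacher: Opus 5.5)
Your proposal is correct and matches the paper's proof. Like the paper, you expand the $2\times 2$ determinant so the $\alpha c$ cross terms cancel, invoke \Cref{lem:det-bound} for $\rho(A)\ge\sqrt{\det A}$, and get attainment from the non-empty discriminant band of \Cref{rem:underdamped-typicality}, which you re-derive by letting $\alpha$ sweep the trace $1+d-\alpha c$; your explicit hypotheses $\det A>0$ and $c>0$ and the critically damped point $\alpha c=(1-\sqrt{d})^2$ only make precise what the paper leaves implicit.
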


\begin{proof}
Direct computation gives $\det A = \tbeta_i(1-\alpha c) + \alpha\tbeta_i c = \tbeta_i$, and the same computation with $\tbeta_i$ replaced by $\beta$ gives $\det A = \beta$. Neither retains $c$. \Cref{lem:det-bound} gives $\rho(A) \ge \sqrt{\det A}$ whenever $\det A > 0$, with equality exactly when the discriminant is non-positive, and \Cref{rem:underdamped-typicality} shows the discriminant is non-positive on a band of $\alpha c$ of width $4\sqrt{\tbeta_i}$, which is non-empty, so the bound is attained at the tuned step size.
\end{proof}

We state the comparison as a bound on the best achievable rate rather than as a claim at fixed $\alpha$. At a step size held fixed across the two methods, a smaller determinant does not by itself force a smaller spectral radius, since an overdamped iteration can move the other way, which is the same scope \Cref{thm:strengthened-rate} states.

\subsection{High-dimensional Linear Regression}
\label{app:high-dim}

In the regime $p > n$, where $p$ is the input dimension and $n$ the number of samples, the guarantee survives and sharpens.

Off the key span the rule is inert. If $\xhat^\top u = 0$ for every observed key, then $\Sighat u = 0$ and $\Ghat u = \E[\delta(\xhat^\top u)] = 0$, so both the correction term and the gradient term vanish along $u$. Neither the EMA buffer nor the \DeltaMomentum buffer puts mass on a direction the data has not queried, and the difference between them is confined to the queried subspace.

The fixed point stays well posed. \Cref{cor:fixedpoint-normalized} gives $\Mstar = \Ghat(\mu I + \Sighat)^{-1}$, which is defined for rank-deficient $\Sighat$, exactly the regime in which the unregularized whitening $\Ghat\Sighat^{-1}$ is undefined. The rule therefore regularizes on its own, with the ridge $\mu = (1-\beta)/\eta$ supplied by the decay rather than chosen separately.

The gain grows with spread. Over the queried directions the improvement is the ratio $(\mu + \lamhat_{\max})/(\mu + \lamhat_{\min})$, so it is largest when the queried spectrum is spread out, which is the typical spiked scenario in high dimension. A ceiling is available for the normalized variant alone, since $\lamhat_i \in [0,1]$ gives
\begin{equation}
\mathrm{cond}(\mu I + \Sighat) \;\le\; \frac{\mu + 1}{\mu} \;=\; 1 + \frac{\eta}{1-\beta},
\label{eq:cond-ceiling}
\end{equation}
a bound that is independent of width, of layer, and of the data.

\subsection{Canonical GLMs}
\label{app:glm}

For a canonical generalized linear model with inverse link $f$, the per-sample gradient is $(f(w^\top x) - y)x^\top$, which is the same key-value structure the method is built on. \Cref{thm:fixedpoint}, \Cref{lem:transport}, and \Cref{thm:tracking} therefore hold unchanged, since their proofs use only $\Ghat$ and $\Sighat$ and never a squared loss.

For the rate, linearize the joint dynamics of the parameter error and the buffer at the optimum $w_\star$. The scalar curvature $c$ of \Cref{lem:loss-free-det} becomes the matrix
\begin{equation}
K = \E\!\left[f'(w_\star^\top x)\, x \xhat^\top\right] = \E\!\left[f'(w_\star^\top x)\, \|x\|\, \xhat\xhat^\top\right],
\end{equation}
which is symmetric and positive semidefinite, using $x\xhat^\top = \|x\|\,\xhat\xhat^\top$ and $f' > 0$ for canonical links. If $K$ and $\Sighat$ share eigenvectors they diagonalize together, so the system splits into one block per direction, each a copy of \eqref{eq:loss-free-A} with the matching eigenvalue of $K$ in place of $c/\eta$, and \Cref{lem:loss-free-det} applies block by block. The fixed-point and tracking statements need no such condition.

\subsection{Nonlinear Deep Neural Network Training}
\label{app:nonconvex}

For non-convex deep losses, tracking is the right tool, because \Cref{thm:tracking} constrains the recursion that produces the buffer and places no condition on the loss surface, so convexity never enters. We strengthen it with two results, stated against each buffer's own target so that scale differences between the two targets do not confound the comparison. We claim no global rate.

\begin{proposition}[Steady lag, restating \Cref{prop:nc1-main}]
\label{prop:nc1}
Suppose the target of the buffer drifts by at most $\Delta_i$ per step along $u_i$. Then the expected tracking error satisfies
\begin{equation}
\limsup_{t\to\infty} \|\E[T_{t,i}]\| \;\le\; \frac{\tbeta_i \Delta_i}{1 - \tbeta_i}
\qquad\text{against EMA's}\qquad
\frac{\beta\Delta_i}{1-\beta}.
\end{equation}
Since $\tbeta_i < \beta$ on every direction with positive input density and $t \mapsto t/(1-t)$ is increasing on $[0,1)$, the bound is strictly smaller there, and the relaxation-time ratio is $1 + r_i$.
\end{proposition}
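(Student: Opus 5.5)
The plan is to reduce the statement to a scalar linear recursion with a bounded forcing term and then sum the resulting geometric series. Let $M^\star_t$ denote the buffer's own target at step $t$ (for \DeltaMomentum, $\Ghat_t(\mu I + \Sighat)^{-1}$ as in \Cref{cor:fixedpoint-normalized}; for EMA, $\Ghat_t$). Define the tracking error $T_{t,i} = (M_t - M^\star_t)u_i$. The drift hypothesis reads $\|(M^\star_t - M^\star_{t-1})u_i\| \le \Delta_i$.

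The first step is the one-step conditional identity from the proof of \Cref{thm:tracking}, now applied with the fixed point at the current step $t$:
\[
\E[M_t - M^\star_t \mid M_{t-1}] = (M_{t-1} - M^\star_t)(\beta I - \eta\Sighat).
\]
I then write $M_{t-1} - M^\star_t = (M_{t-1} - M^\star_{t-1}) - (M^\star_t - M^\star_{t-1})$. Projecting onto $u_i$ and using $\Sighat u_i = \lamhat_i u_i$ gives, after taking full expectations,
\[
\E[T_{t,i}] = \tbeta_i\,\E[T_{t-1,i}] - \tbeta_i\,D_{t,i},
\]
where $D_{t,i} = (M^\star_t - M^\star_{t-1})u_i$ is the drift, treated as deterministic (a path of targets) or bounded in norm by $\Delta_i$. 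For EMA the same computation holds with $\tbeta_i$ replaced by $\beta$.

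The second step unrolls this recursion. It gives $\E[T_{t,i}] = \tbeta_i^{\,t-t_0}\E[T_{t_0,i}] - \sum_{s} \tbeta_i^{\,t-s+1} D_{s,i}$. The triangle inequality bounds the norm by $|\tbeta_i|^{t-t_0}\|\E[T_{t_0,i}]\| + \Delta_i \sum_{k\ge1}|\tbeta_i|^k$. Under the stability condition $0 \le \tbeta_i < 1$ (from \Cref{thm:unconditional-stability}, $\eta < \beta$), the transient term vanishes and the series sums to $\tbeta_i\Delta_i/(1-\tbeta_i)$. EMA gives $\beta\Delta_i/(1-\beta)$.

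The comparison is then immediate. Since $\tbeta_i = \beta - \eta\lamhat_i < \beta$ when $\lamhat_i > 0$, and $t \mapsto t/(1-t)$ is increasing on $[0,1)$, the \DeltaMomentum bound is strictly smaller. The relaxation-time ratio is $(1-\tbeta_i)/(1-\beta) = 1 + r_i$ by \eqref{eq:ratio}.

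The main obstacle is the subtlety hidden in the first step. The exact factorization requires that $u_i$ be an eigenvector of the \emph{current} $\Sighat$, and that $\Sighat$ (not only the target) be frozen while the target drifts. If $\Sighat$ also moves, the projection picks up cross-terms from rotating eigenvectors. I would handle this by stating that the drift is absorbed into the target only, with $\Sighat$ quasi-static over the window $\tau_i$ as in Appendix~\ref{app:scope}. I would also use the nonnegativity $\tbeta_i \ge 0$ so that $|\tbeta_i| = \tbeta_i$ in the sum.
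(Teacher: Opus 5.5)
Your proposal is correct and follows the paper's proof. The paper derives the same one-step relation in the form $\|\E[T_{t,i}]\| \le \tbeta_i(\|\E[T_{t-1,i}]\| + \Delta_i)$ from \Cref{thm:tracking}, and then bounds the $\limsup$ by the fixed point of that contraction, which is exactly what your unrolled geometric sum computes; the monotonicity comparison and the relaxation-time ratio are the same as yours. Two assumptions the paper's proof uses only implicitly are the nonnegativity $\tbeta_i \ge 0$ (via $\eta < \beta$) and the requirement that $\Sighat$ be held fixed so that $u_i$ remains an eigendirection, and you state both explicitly.
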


\begin{proof}
By \Cref{thm:tracking}, one step contracts the tracking error by $\tbeta_i$, and the drift adds at most $\Delta_i$ before the contraction, so $\|\E[T_{t,i}]\| \le \tbeta_i(\|\E[T_{t-1,i}]\| + \Delta_i)$. The fixed point of this recursion is $\tbeta_i\Delta_i/(1-\tbeta_i)$ and the recursion is a contraction, so the limit superior is bounded by it. The EMA statement is the same argument with $\tbeta_i$ replaced by $\beta$, which is \eqref{eq:tracking-ema}. Monotonicity of $t/(1-t)$ and $\tbeta_i < \beta$ give the comparison. The relaxation time is $1/(1-\tbeta_i)$ against $1/(1-\beta)$, a ratio of $((1-\beta)+\eta\lamhat_i)/(1-\beta) = 1 + r_i$.
\end{proof}

\begin{corollary}[Uniform lag]
\label{cor:nc2}
If the drift follows directional density, $\Delta_i = c_0 \lamhat_i$, then the \DeltaMomentum lag is bounded by $\beta c_0/\eta$ uniformly over directions, while EMA's grows linearly in the density as $\beta c_0 \lamhat_i/(1-\beta)$.
\end{corollary}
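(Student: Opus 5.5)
The plan is to substitute $\Delta_i = c_0\lamhat_i$ into the two steady-lag bounds of \Cref{prop:nc1} and simplify each one. We take as given the per-direction bound $\limsup_t \|\E[T_{t,i}]\| \le \tbeta_i\Delta_i/(1-\tbeta_i)$ for \DeltaMomentum and $\beta\Delta_i/(1-\beta)$ for EMA. We work in the normalized-key reading, so $\tbeta_i = \beta - \eta\lamhat_i$ and $\lamhat_i\in[0,1]$ by \Cref{prop:bounded-sigma-hat}. We also assume the stability condition $\eta<\beta$ from \Cref{thm:unconditional-stability}, which gives $\tbeta_i \in (0,\beta]$.

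The EMA side is immediate. Substituting gives $\beta c_0\lamhat_i/(1-\beta)$, which is linear in the density, and that is the claimed growth.

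For the \DeltaMomentum side we bound the numerator and the denominator separately.
\begin{itemize}
\item The numerator satisfies $\tbeta_i \le \beta$.
\item The denominator is $1-\tbeta_i = (1-\beta) + \eta\lamhat_i \ge \eta\lamhat_i$, using $\beta<1$.
\end{itemize}
Together these give
\begin{equation*}
\frac{\tbeta_i c_0\lamhat_i}{(1-\beta)+\eta\lamhat_i} \;\le\; \frac{\beta c_0\lamhat_i}{\eta\lamhat_i} \;=\; \frac{\beta c_0}{\eta}
\end{equation*}
whenever $\lamhat_i>0$. When $\lamhat_i=0$ the drift vanishes, so the bound is $0$ and the uniform bound holds trivially. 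The constant $\beta c_0/\eta$ does not depend on $i$, which is the uniformity claim.

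The only step needing care is this choice of lower bound on the denominator. The point is to drop the $(1-\beta)$ term rather than the $\eta\lamhat_i$ term, so that the factor $\lamhat_i$ cancels. It is worth adding a remark that the bound is essentially tight at high density: since $\lamhat_i\le 1$, we have $\tbeta_i \ge \beta-\eta$. Also, EMA's lag at $\lamhat_i$ near $1$ is $\beta c_0/(1-\beta)$, which exceeds $\beta c_0/\eta$ whenever $\eta>1-\beta$. This contrast shows the ratio $r_i$ of \eqref{eq:ratio} at work, but it is not needed for the corollary itself.
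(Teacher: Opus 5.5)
Your proposal is correct and matches the paper's proof. Both substitute $\Delta_i = c_0\lamhat_i$ into \Cref{prop:nc1}, bound the numerator $\tbeta_i$ by $\beta$, bound the denominator $(1-\beta)+\eta\lamhat_i$ below by $\eta\lamhat_i$ so that $\lamhat_i$ cancels, and read the EMA bound off directly; your separate handling of $\lamhat_i = 0$ and your explicit assumption $\eta<\beta$ (so that $\tbeta_i \ge 0$, which the recursion behind \Cref{prop:nc1} tacitly uses) are small additions that the paper leaves implicit.
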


\begin{proof}
Substituting into \Cref{prop:nc1}, the lag is $\tbeta_i c_0\lamhat_i/((1-\beta) + \eta\lamhat_i)$. Bounding the numerator by $\beta c_0 \lamhat_i$ and the denominator below by $\eta\lamhat_i$ gives $\beta c_0/\eta$, which does not involve $i$. The EMA bound is $\beta c_0\lamhat_i/(1-\beta)$, which is linear in $\lamhat_i$.
\end{proof}

\paragraph{The bridge back to the loss.}
Split the estimation error into two parts, $M - \Ghat = (M - \Mstar) + (\Mstar - \Ghat)$. The first term is the lag, which is how far the buffer trails its own moving target. \Cref{prop:nc1} lowers its steady bound on every direction and \Cref{cor:nc2} caps it where EMA's grows with the density. The second term is the designed preconditioning bias of \Cref{thm:fixedpoint} rather than an error. So \Cref{prop:first-order-optimal} turns reduced lag on the high-density directions, which \Cref{fig:top1} shows dominate the input variance, into reduced first-order shortfall in the realized loss decrease. \Cref{sec:diag} is the empirical form of that bridge, since it measures the gradient-estimator quality, the function-space prediction error, and the input-feature conditioning on the real 24-layer run.

\section{Mixed-Modality Inputs}
\label{app:multimodal}

A natural question is what happens when the input features are of mixed types, such as tabular data, text, and images together, which carry different natural metrics. The short answer is that nothing in \Cref{sec:methodology} changes, because \DeltaMomentum acts inside a layer on post-embedding activations and never sees raw input types.

The theory is untouched. Every result in \Cref{sec:methodology} uses the input distribution only through $\Sighat$ and $\Ghat$. The key second moment of a mixture is the density-weighted combination of the second moments of its components, so heterogeneous input produces a valid $\Sighat \succeq 0$ and every statement holds for any such $\Sighat$. Mixing modalities changes the key spectrum, which is the very thing the mechanism adapts to.

There is one metric, not several. On the input side, after the embedding every modality lives in the same activation space under the same inner product. Per-sample $\ell_2$ normalization also removes residual scale and unit differences before they reach the buffer while preserving directional density, which is scale free by \Cref{prop:bounded-sigma-hat}. On the output side the loss enters only through $\delta$, so a different loss or metric per modality changes what is stored, not how it is stored, and no theorem in \Cref{sec:methodology} uses a particular loss. An embedding is needed only in the sense that any network already has one, so no extra machinery and no new theory are required. Our experiments span two modalities trained separately, and an interleaved study is noted as future work in \Cref{sec:conclusion}.

\section{Interaction with the Second Moment of AdamW}
\label{app:second-moment}

The delta correction and the second moment of AdamW act on different structure, so they compose rather than duplicate. This appendix gives the argument in full. We present it as a discussion rather than as a theoretical claim, since the central step is a heuristic factorization that we do not prove and that is exact only under an independence assumption a trained network does not satisfy.

\paragraph{The two act in different bases.}
The second moment $v$ is diagonal in the coordinate basis and rescales entries of the update. The delta correction is diagonal in the eigenbasis of $\Sighat$ and rescales input directions. Key normalization keeps the roles apart, since $\Sighat$ has unit trace and therefore carries direction only, leaving overall scale to Adam.

\paragraph{Where the input side enters each.}
Heuristically, under $g = \delta x^\top$ with $\delta$ and $x$ independent, the second moment factorizes as $v_{ij} \approx \E[\delta_i^2]\,\E[x_j^2]$, so the input-side factor that Adam sees is the diagonal of the input second moment, while the delta rule uses all of $\Sighat$. The two therefore coincide only when $\Sighat$ is near isotropic or coordinate aligned. \Cref{fig:cond} rules that out in our models, where the per-layer input-feature condition numbers sit between $10^4$ and $10^{10}$ throughout training. The independence assumption behind the factorization does not hold in a trained network, which is why we do not state this as a claim, but the conclusion it points to does not depend on the assumption being exact, only on $\Sighat$ being far from isotropic.

\paragraph{What the delta rule adds on top of the second moment.}
Both arms of every comparison in \Cref{sec:experiments} use the unchanged AdamW second moment, so whatever the delta rule contributes is measured on top of the scaling of Adam. The separations reported in \Cref{sec:diag-feat} are measured on the input-feature covariance, which $v$ registers only through its diagonal, so they are outside what $v$ can produce on its own.

\paragraph{The one place they overlap.}
The two do overlap in the overall size of $M$ relative to $\sqrt{v}$. Learning-rate tuning absorbs that overlap, which is the reason the tuned peak rate for AK-AdamW sits more than ten times below the AdamW value rather than near it (Appendix~\ref{app:hp-sweep}). We read the offset as a scale correction and not as evidence that the two preconditioners are competing for the same structure.

\section{Deployed \DeltaMomentum Algorithm}
\label{app:deployed}

This appendix states the algorithm that produces every number reported in \Cref{sec:experiments}, so that the cost analysis of Appendix~\ref{app:flops-memory} and the empirical claims refer to the same object.

Per linear layer, with a batch of $N$ tokens giving normalized keys $\Xhat \in \R^{N\times n}$ and output-side errors $\Delta \in \R^{N \times m}$, write
\begin{equation}
G = \frac{\Delta^\top \Xhat}{N} \in \R^{m\times n},
\qquad
\Sighat = \frac{\Xhat^\top \Xhat}{N} \in \R^{n \times n},
\end{equation}
and update the buffer by
\begin{equation}
\boxed{\;\; M \;\leftarrow\; \beta M + \eta\bigl(G - M\Sighat\bigr). \;\;}
\label{eq:deployed}
\end{equation}
This is the batch form of \eqref{eq:dm-update} with the normalized key. AK-AdamW then applies the unchanged AdamW machinery to $M$, that is the second moment $v_t = \beta_2 v_{t-1} + (1-\beta_2)g_t^{\odot 2}$ taken on the ordinary weight gradient, bias correction, decoupled weight decay, and the preconditioned update $\Mhat_t \oslash (\sqrt{\hat v_t} + \varepsilon)$. AK-SGD applies $\theta \leftarrow \theta - \alpha M$ directly.

Two implementation points matter for the cost.

\emph{The gradient term is free.} Forming $G$ costs the same as the weight gradient it replaces, and the standard AdamW second moment consumes that weight gradient anyway, so the only optimizer-specific work is the correction $M\Sighat$. Appendix~\ref{app:flops-memory} counts exactly that.

\emph{The contraction order is chosen per layer.} The product $M\Sighat$ can be formed either as $M(\Xhat^\top\Xhat)$, which builds the Gram matrix once and reuses it across every layer reading the same input, or as $(M\Xhat^\top)\Xhat$, which never materializes an $n \times n$ matrix. The first is cheaper for square and tall layers and the second for wide ones, and the dispatch rule is given in Appendix~\ref{app:flops-memory}. The second moment $\Sighat$ is taken on a tensor the step already forms and adds no GEMM of its own.

\section{FLOPs Count and Measurement Setup}
\label{app:flops-memory}

\subsection{Notation and FLOP Convention}

For a single linear layer $y = Wx$ with weight $W \in \mathbb{R}^{m \times n}$
where $m = \dout$ and $n = \dinp$, batched over $N = B \cdot L_{\mathrm{seq}}$ tokens, with $B$ the batch size in sequences and $L_{\mathrm{seq}}$ the sequence length:
\begin{center}
\begin{tabular}{cll}
\toprule
Symbol & Meaning & Shape \\
\midrule
$X$ & Input activations & $N \times n$ \\
$\nablaY$ & Output-side gradient & $N \times m$ \\
$W$ & Weight matrix & $m \times n$ \\
$M$ & Momentum buffer & $m \times n$ \\
$V$ & Second-moment buffer (AdamW) & $m \times n$ \\
$\Xhat$ & Row-normalized inputs, $\xhat_i = x_i / \|x_i\|_2$ & $N \times n$ \\
$d$ & Transformer hidden dimension & --- \\
$\gamma$ & Gated-MLP expansion ratio, $d_{\mathrm{ffn}} = \gamma\, d$ & --- \\
$L_{\mathrm{seq}}$ & Sequence length, distinct from the layer count $L$ & --- \\
\bottomrule
\end{tabular}
\end{center}

We adopt the standard convention that one fused multiply-add (FMA) counts
as $2$ FLOPs. Hence a matrix multiply of an $a\times b$ and $b\times c$
matrix is counted as $2abc$ FLOPs.

\subsection{Standard Linear-Layer FLOPs}

The baseline cost of a linear layer (forward + backward, excluding the
optimizer step since it is comparatively negligible) is
\begin{equation}\label{eq:baseline-layer}
\FLOP_{\text{baseline}}^{\text{layer}}
= \underbrace{2mnN}_{\text{forward } Y = X W^\top}
+ \underbrace{2mnN}_{\nabla X = \nablaY\, W}
+ \underbrace{2mnN}_{\nabla W = \nablaY^\top X}
= 6mnN.
\end{equation}

\subsection{The Normalized-Key \DeltaMomentum Update}

\begin{definition}[Normalized-Key \DeltaMomentum]
The normalized-key delta-rule update is
\begin{equation}\label{eq:nk-delta}
\boxed{\;\;
M_{t+1} = \beta\, M_t + \frac{\eta}{N}\Bigl(\nablaY^\top \Xhat - M_t\, \Xhat^\top \Xhat\Bigr),
\;\;}
\end{equation}
or per-sample,
\begin{equation}
M_{t+1} = \beta M_t + \eta \bigl(v_t - M_t \xhat_t\bigr) \xhat_t^\top,
\qquad
v_t \coloneqq \delta_t,
\end{equation}
where $G_t = \delta_t x_t^\top$ is the per-sample gradient and $\delta_t = \partial \mathcal{L} / \partial y_t$.
\end{definition}

The factored implementation evaluates~\eqref{eq:nk-delta} as
\begin{equation}\label{eq:factored}
\Delta M = \tfrac{1}{N}\bigl(\nablaY^\top \Xhat - M\Xhat^\top \Xhat\bigr)
\equiv \tfrac{1}{N}\bigl(\nablaY^\top - M\Xhat^\top\bigr)\Xhat .
\end{equation}

\subsection{Per-Layer Cost of the Correction}

The deployed update is \eqref{eq:deployed}, and Appendix~\ref{app:deployed} explains why only the correction $M\Sighat$ is optimizer-specific. This subsection counts that correction per layer, then \Cref{sec:block-flops} aggregates it over a block.

\subsubsection{Cost of Key Normalization}

\begin{lemma}[Normalization FLOPs]\label{lem:norm-cost}
Computing $\Xhat$ from $X$ costs
\begin{equation}
\FLOP_{\mathrm{norm}}(n, N) = 3nN + 2N \approx 3nN .
\end{equation}
\end{lemma}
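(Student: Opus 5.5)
The count is carried out per token, since each row $x_i$ of $X$ is normalized independently; the per-token cost is then multiplied by $N$. For a single row $x \in \R^n$, computing $\xhat = x/\|x\|_2$ splits into three stages, and I will count each under the FMA convention fixed at the start of this appendix.

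\emph{Squared norm.} Computing $\|x\|_2^2 = \sum_{k} x_k^2$ takes $n$ multiplications and $n-1$ additions. Counting $2$ FLOPs per multiply-add, I charge this as $2n$ FLOPs; the one addition saved in the first step is absorbed into the rounding, since only the leading coefficient matters.

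\emph{Scalar reciprocal.} Forming $r = 1/\sqrt{\|x\|_2^2}$ is one square root and one division, charged as $2$ FLOPs per token. This step is the source of the $2N$ term in the statement.

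\emph{Rescaling.} Forming $\xhat = r\,x$ costs $n$ multiplications.

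Summing the three stages gives $2n + 2 + n = 3n + 2$ FLOPs per token, hence $3nN + 2N$ over the batch. Since $n \gg 1$ at every layer we deploy, the lower-order term is dropped to give $\approx 3nN$.

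The only delicate point is bookkeeping, not mathematics. The constant in front of $N$ depends on how the square root and division are charged, and the squared norm must be counted as $2n$ rather than $2n-1$ to match the stated form. I would state these conventions explicitly: one FLOP each for the square root and the reciprocal, and $2$ per element for the fused reduction. With those fixed, the lemma is a direct tally.

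It is also worth noting in the proof why this cost is negligible. $3nN$ is lower order than the $2mnN$ GEMMs of \eqref{eq:baseline-layer}, because it lacks the factor $m$. This is what allows the block-level overhead band to ignore it to leading order.
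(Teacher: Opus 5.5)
Your proposal is correct and is exactly the per-row tally the paper uses: $n$ multiplies plus $n-1$ adds rounded to $2n$ for the squared norm, one FLOP each for the square root and the reciprocal, and $n$ multiplies for the rescaling, giving $N(3n+2)$. You also note that the $2N$ term depends on rounding $2n-1$ up to $2n$; the literal count is $3nN+N$, and the paper makes the same $2n-1 \approx 2n$ rounding.
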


\begin{proof}
Per row $i$, computing $\|x_i\|_2^2 = \sum_j x_{ij}^2$ takes $n$ multiplies plus $n-1$ adds, which is $2n - 1 \approx 2n$ FLOPs, the square root takes $1$, the reciprocal takes $1$, and scaling all $n$ entries takes $n$ multiplies. Aggregating over $N$ rows gives $N(3n+2) = 3nN + 2N$.
\end{proof}

This is sub-leading against a single GEMM at $2mnN$ by a factor of $3/(2m)$, which is about $0.15\%$ at $m = 1024$. The normalization is computed once per distinct input and shared by every layer reading it.

\subsubsection{Two Contraction Orders and the Dispatch Rule}
\label{sec:dispatch}

The correction $M\Sighat$ admits two evaluation orders with different costs.

\emph{Gram order.} Form $\Ghat_{\mathrm{gram}} = \Xhat^\top\Xhat$ at $2n^2N + n^2$, then $M\,\Ghat_{\mathrm{gram}}$ at $2mn^2$, for a total of $2n^2N + 2mn^2$ up to lower-order terms. The Gram matrix is formed once per distinct input and reused by every layer that reads that input, so the $2n^2N$ term is shared while the $2mn^2$ term is per layer. Its spectrum lies in $[0, N]$ before the $1/N$ scaling, so the product is numerically stable.

\emph{Through-$N$ order.} Form $M\Xhat^\top$ at $2mnN$, then multiply by $\Xhat$ at $2mnN$, for a total of $4mnN$, never materializing an $n \times n$ matrix.

Comparing the two leading terms, $2n^2N$ against $4mnN$, the Gram order is cheaper exactly when $n < 2m$. This gives the dispatch rule used throughout.
\begin{equation}
M\Sighat \;=\;
\begin{cases}
M\,(\Xhat^\top \Xhat) & \text{if } n < 2m \quad \text{(Gram order)},\\[2pt]
(M \Xhat^\top)\,\Xhat & \text{otherwise} \quad \text{(through-}N\text{ order)}.
\end{cases}
\label{eq:dispatch}
\end{equation}

\subsection{Per-Block Count and the Overhead Band}
\label{sec:block-flops}

\subsubsection{Baseline Per-Block FLOPs}

A standard transformer block with a gated MLP of expansion ratio $\gamma$ holds four attention projections at $d\times d$, two gated-MLP matrices at $\gamma d \times d$, and one down projection at $d \times \gamma d$. Using \eqref{eq:baseline-layer} for each,
\begin{equation}
\FLOP^{\text{baseline}}_{\text{block}}(\gamma)
\;=\; \underbrace{(24 + 18\gamma)\, d^2 N}_{\text{linear (QKVO + gated MLP)}}
\;+\; \underbrace{12\, d L_{\mathrm{seq}} N}_{\text{attention scores/values}} .
\label{eq:baseline-block}
\end{equation}
For the Llama-2 ratio $\gamma = 8/3$ the linear term is $72\,d^2N$.

\subsubsection{Aggregating the Correction}

Applying \eqref{eq:dispatch} layer by layer, and sharing each Gram matrix across the layers that read the same input, gives the following.

\begin{center}
\begin{tabular}{lllcc}
\toprule
Group & Shape & Order & $d^2N$ coeff. & $d^3$ coeff. \\
\midrule
$W_Q, W_K, W_V$ & $d \times d$ & Gram, shared over one input & $2$ & $6$ \\
$W_O$ & $d \times d$ & Gram & $2$ & $2$ \\
$W_{\mathrm{up}}, W_{\mathrm{gate}}$ & $\gamma d \times d$ & Gram, shared over one input & $2$ & $4\gamma$ \\
$W_{\mathrm{down}}$ & $d \times \gamma d$ & through-$N$ & $4\gamma$ & $0$ \\
\midrule
\textbf{Total} & & & $\boldsymbol{6 + 4\gamma}$ & $\boldsymbol{8 + 4\gamma}$ \\
\bottomrule
\end{tabular}
\end{center}

The first three rows each contribute one shared $2d^2N$ term rather than one per layer, because $W_Q, W_K, W_V$ read one and the same input and $W_{\mathrm{up}}, W_{\mathrm{gate}}$ read another. Sharing applies to the Gram matrix only. The multiplication by $M$ that follows it is per layer at $2mn^2$, which carries no factor of $N$ and is what the $d^3$ column collects. The down projection maps the expanded hidden state back to the model width, so $m = d$ and $n = \gamma d$ there. Every gated MLP in use has $\gamma \ge 2$, which makes $n \ge 2m$, so this layer takes the through-$N$ order at $4mnN = 4\gamma d^2 N$. Key normalization costs $3nN$ per distinct input by \Cref{lem:norm-cost}, which over the four inputs above, of dimensions $d$, $d$, $d$, and $\gamma d$, gives $(9 + 3\gamma)dN$. The block count is therefore
\begin{equation}
\Delta\FLOP_{\text{block}}(\gamma)
= (6 + 4\gamma)\, d^2 N \;+\; (8 + 4\gamma)\, d^3 \;+\; (9 + 3\gamma)\, dN .
\label{eq:block-overhead-deployed}
\end{equation}
The two lower-order terms enter the ratio as $(8+4\gamma)d/N$ and $(9+3\gamma)/d$, which at $N = 524{,}288$ tokens per step and our widths of 384, 1024, and 1792 move it by about a tenth of a percentage point, so the leading term is the block-level number.

\subsubsection{The Overhead Band}

\begin{theorem}[Per-block overhead of the deployed rule]
\label{thm:deployed-overhead}
Under the dispatch rule \eqref{eq:dispatch} with normalized keys, the per-block cost of the correction relative to the linear forward and backward of \eqref{eq:baseline-block} is
\begin{equation}
\boxed{\;\;
f(\gamma) \;=\; \frac{6 + 4\gamma}{24 + 18\gamma},
\;\;}
\end{equation}
which is strictly decreasing on $\gamma \ge 0$ and therefore satisfies
\begin{equation}
22.22\% \;=\; \tfrac{4}{18} \;<\; f(\gamma) \;\le\; \tfrac{6}{24} \;=\; 25.0\% ,
\end{equation}
with $f(8/3) = 50/216 = 23.148\%$ at the Llama-2 ratio used in all our models.
\end{theorem}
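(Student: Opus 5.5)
The plan has three steps: form the ratio of the leading terms, show it is monotone in $\gamma$, and read off the band.

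First, I would take the numerator to be the $d^2N$ coefficient of \eqref{eq:block-overhead-deployed}, namely $6+4\gamma$. I would justify it by re-deriving each row of the aggregation table from the dispatch rule \eqref{eq:dispatch}:
\begin{itemize}
\item the $W_Q,W_K,W_V$ group reads one input of dimension $d<2d$, so it takes the Gram order, and its $2d^2N$ Gram cost is paid once;
\item $W_O$ likewise takes the Gram order and contributes $2d^2N$;
\item $W_{\mathrm{up}},W_{\mathrm{gate}}$ have $n=d<2\gamma d$, so they take the Gram order and share one $2d^2N$ term;
\item $W_{\mathrm{down}}$ has $n=\gamma d\ge 2d=2m$ for $\gamma\ge2$, so it takes the through-$N$ order at $4mnN=4\gamma d^2N$.
\end{itemize}
The $d^3$ and $dN$ terms are lower order, as the text notes; they contribute relative corrections of order $d/N$ and $1/d$. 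I would state explicitly that the theorem refers to the leading-order ratio.

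Second, I would take the denominator to be the linear part of \eqref{eq:baseline-block}, that is $(24+18\gamma)d^2N$, obtained from $6mnN$ per layer. Summing, the four $d\times d$ projections give $24d^2N$ and the three MLP matrices give $3\cdot 6\gamma d^2N=18\gamma d^2N$. Dividing the two expressions gives $f(\gamma)=(6+4\gamma)/(24+18\gamma)$.

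Third, I would establish monotonicity by differentiating:
\[
f'(\gamma)=\frac{4(24+18\gamma)-18(6+4\gamma)}{(24+18\gamma)^2}=\frac{96-108}{(24+18\gamma)^2}<0 .
\]
The endpoints then follow directly: $f(0)=6/24=1/4$, and $f(\gamma)\to 4/18=2/9$ as $\gamma\to\infty$, which is approached but never attained. This gives the strict lower bound and the non-strict upper bound. Finally, I would evaluate at the Llama-2 ratio: $f(8/3)=(6+32/3)/(24+48)=(50/3)/72=50/216$.

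The main obstacle is a scope issue rather than the algebra. The table's dispatch (in particular the through-$N$ order for $W_{\mathrm{down}}$) is justified only for $\gamma\ge2$, yet the monotonicity claim is stated on all $\gamma\ge0$. I would handle this by treating $f$ as a formula defined for every $\gamma\ge0$, with the band bounds holding for it there, and by noting that the cost interpretation applies for every gated MLP actually in use. For $\gamma<2$ the dispatch would switch $W_{\mathrm{down}}$ to the Gram order, and the cheaper of the two orders can only lower the cost, so $f$ remains an upper bound in that range.
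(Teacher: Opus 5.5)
Your proposal is correct and follows the paper's proof essentially line for line. You drop the $d^3$ and $dN$ terms, take the ratio of the $d^2N$ coefficients $(6+4\gamma)/(24+18\gamma)$, use $f'(\gamma)=-12/(24+18\gamma)^2<0$ together with $f(0)=6/24$ and $f(\gamma)\to 4/18$ to get the band, evaluate at $\gamma=8/3$, and note, as the paper does, that the dispatch count itself is valid only for $\gamma\ge 2$.

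Your extra claim that $f$ stays an upper bound for $\gamma<2$ is not needed for the theorem, and it is not quite airtight under the literal per-layer rule \eqref{eq:dispatch}. For $\gamma\le 1/2$ that rule also moves $W_{\mathrm{up}},W_{\mathrm{gate}}$ to the through-$N$ order, because it ignores Gram sharing. The leading count then becomes $4+8\gamma+2\gamma^2$, which exceeds $6+4\gamma$ on $(\sqrt{2}-1,\,1/2]$, although no real gated MLP lives in that regime.
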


\begin{proof}
The ratio follows from \eqref{eq:block-overhead-deployed} and \eqref{eq:baseline-block} by dropping the two lower-order terms. The endpoints are $f(0) = 6/24 = 25.0\%$ and $f(\gamma) \to 4/18 = 22.22\%$ as $\gamma \to \infty$. Endpoints alone do not give a band, so we add that $f'(\gamma) = -12/(24+18\gamma)^2 < 0$ for every $\gamma \ge 0$, which makes $f$ strictly decreasing and puts every value between the two. The down projection is the only layer whose order depends on $\gamma$, so the count above holds for $\gamma \ge 2$ and the attainable values sit in the lower part of the range.
\end{proof}

The band is what makes the cost predictable across architectures. Any gated-MLP shape lands inside a range of under three percentage points, so the block-level number barely moves when the expansion ratio changes. Every gated MLP in use has $\gamma \ge 2$, so the attainable values sit in the narrower range from $22.22\%$ to $f(2) = 23.33\%$, and the $25.0\%$ endpoint is the limit at $\gamma = 0$ rather than a shape any such block takes.

\subsection{Realized Per-Step Overhead and Measurement Setup}
\label{sec:realized-flops}

\Cref{thm:deployed-overhead} is a block-level statement about linear layers. Three effects at finite scale place the realized per-step overhead below it.
\begin{enumerate}
\item the attention $\bigO(d L_{\mathrm{seq}} N)$ term of \eqref{eq:baseline-block} inflates the baseline at $L_{\mathrm{seq}} = 2048$.
\item the embeddings and the output projection contribute to the baseline but receive no correction, since those groups use standard AdamW under our $\mu$P scheme.
\item the sub-leading $d^3$ and $dN$ terms of \eqref{eq:block-overhead-deployed} remain visible at finite scale.
\end{enumerate}
All three enlarge the denominator of the overhead ratio without enlarging the numerator, and all three shrink relative to the leading $d^2N$ term as width grows, which is why the realized number rises with scale toward the band.

\paragraph{Measurement setup.}
We do not estimate the realized overhead from the closed form. Our training suite carries a FLOP counter that instruments every GEMM in the step, including the correction and the normalization, and we report what it counts over the actual runs. Both models use $L = 24$, $d_{\mathrm{head}} = 64$, $V = 32{,}000$, untied input and output embeddings, and $\gamma = 8/3$, with $B = 256$ and $L_{\mathrm{seq}} = 2048$, hence $N = 524{,}288$ tokens per step. Step time is measured end to end on a single H200, averaged over the training horizon after warmup, with the same data order and batch shape in both arms.

\paragraph{Results.}
\begin{center}
\begin{tabular}{l c c c c}
\toprule
Scale & $d$ & $d_{\mathrm{ffn}}$ & Realized per-step FLOP overhead & Measured step time \\
\midrule
67M  & 384  & 1024 & $\mathbf{11.2\%}$ & $1.177\times$ AdamW \\
370M & 1024 & 2752 & $\mathbf{17.4\%}$ & $1.153\times$ AdamW \\
1B   & 1792 & 4785 & $\mathbf{20.3\%}$ & $1.179\times$ AdamW \\
\bottomrule
\end{tabular}
\end{center}

The counted overhead rises with width across all three scales, from $11.2\%$ to $20.3\%$, approaching the band of \Cref{thm:deployed-overhead} from below as the three finite-scale effects above shrink relative to the leading $d^2N$ term. Measured step time does not follow the count monotonically, and the reason is where the correction runs. At 67M the correction GEMMs are small enough that launch overhead dominates, so the measured cost sits above the count. At 370M the correction is one large GEMM per layer routed to tensor cores, so its FLOPs run at higher throughput than the average FLOP in the step and the measured cost falls below the count. At 1B the correction is still routed to tensor cores, but its share of the step has grown enough that the throughput advantage no longer covers it, so the measured cost tracks the count again. Reported wall-clock numbers use a research-quality, non-fused implementation of the correction, and we expect the gap to the counted overhead to narrow with a fused backward and optimizer kernel.

\paragraph{What this costs at matched loss.}
\Cref{tab:cost} reports the end-to-end consequence, measured over the full horizon with identical data order and initialization in both arms. Averaged over the matched validation-loss levels past the crossing point, AK-AdamW reaches the same loss with $25.9\%$ fewer FLOPs and $21.6\%$ less wall clock at 67M, and $4.5\%$ fewer FLOPs and $6.2\%$ less wall clock at 370M. \Cref{fig:wallclock} shows the wall-clock curves. Because the correction is paid from the first step, the two arms cross once on each cost axis and AK-AdamW is ahead of AdamW everywhere after that point. The cost curves are single-seed, since the per-seed spread in \Cref{fig:main-loss} is a property of the loss trajectory and not of the cost of a step.

\begin{figure}[h]
  \centering
  \begin{subfigure}[t]{0.48\linewidth}
    \includegraphics[width=\linewidth]{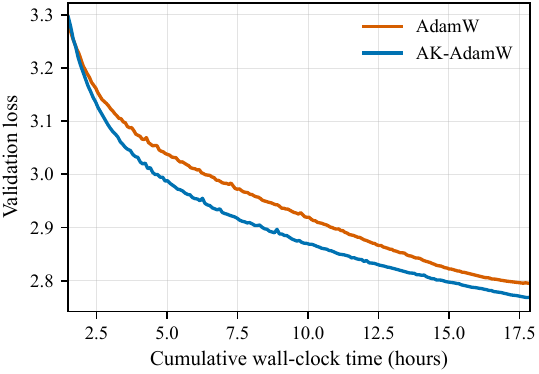}
    \caption{67M.}
  \end{subfigure}\hfill
  \begin{subfigure}[t]{0.48\linewidth}
    \includegraphics[width=\linewidth]{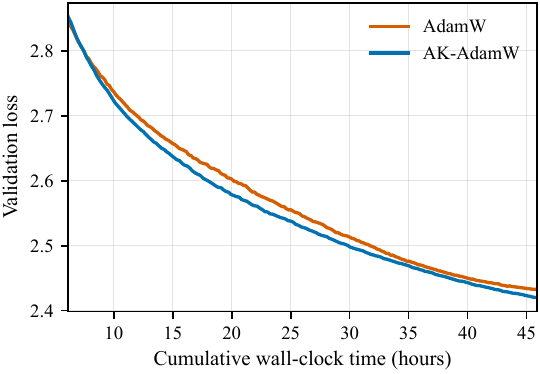}
    \caption{370M.}
  \end{subfigure}
  \caption{Validation loss against cumulative wall clock on a single H200, single seed. AK-AdamW starts behind, because the correction is paid from the first step, crosses once, and is ahead for the rest of the run at both scales.}
  \label{fig:wallclock}
\end{figure}

\subsection{Peak-Memory Derivation}

\subsubsection{Persistent State}

\begin{center}
\begin{tabular}{l c c}
\toprule
Optimizer & Buffers & Bytes/param (fp32) \\
\midrule
SGD-momentum & $M$ & $4$ \\
AdamW & $M, V$ & $8$ \\
\midrule
\textbf{AK-SGD (norm-key)} & $M$ & $\mathbf{4}$ \\
\textbf{AK-AdamW} & $M, V$ & $\mathbf{8}$ \\
\bottomrule
\end{tabular}
\end{center}

\begin{result}[Persistent overhead = 0]
The persistent optimizer-state memory overhead of normalized-key
AK-SGD against SGD-momentum, and of normalized-key AK-AdamW against
AdamW, is exactly zero.
\end{result}

\subsubsection{Saved Activations}

The standard backward already saves $X$ for $\nablaW$. The factored
delta also requires $X$ (to build $\Xhat$). Hence:

\begin{result}[Saved-activation overhead = 0]
No additional activation tensors must be saved between forward and
backward.
\end{result}

\subsubsection{Transient Memory During the Optimizer Step}

\paragraph{Factored path.}
\begin{center}
\begin{tabular}{l c r}
\toprule
Tensor & Shape & Bytes (fp32) \\
\midrule
$\|x_i\|_2$ & $N$ & $4N$ \\
$\Xhat$ & $N \times n$ & $4Nn$ \\
$T = M\Xhat^\top$ & $m \times N$ & $4mN$ \\
$R = \nablaY^\top - T$ & $m \times N$ & in-place over $T$ \\
$U$ & $m \times n$ & accumulates into $M$ \\
\midrule
\textbf{Peak transient} & & $\mathbf{4Nn + 4mN + 4N}$ \\
\bottomrule
\end{tabular}
\end{center}

\paragraph{Gram path.}
\begin{center}
\begin{tabular}{l c r}
\toprule
Tensor & Shape & Bytes (fp32) \\
\midrule
$\|x_i\|_2$ & $N$ & $4N$ \\
$\Xhat$ & $N \times n$ & $4Nn$ \\
$\Ghat$ & $n \times n$ & $4n^2$ \\
$C = M\Ghat$ & $m \times n$ & accumulates \\
\midrule
\textbf{Peak transient} & & $\mathbf{4Nn + 4n^2 + 4N}$ \\
\bottomrule
\end{tabular}
\end{center}

For tall/square layers with $N \gg n^2/m$, the Gram path is strictly
more memory-efficient. The cost from key normalization itself is the
$4Nn$ term, but if we permit \emph{in-place} replacement of $X$
by $\Xhat$ (the standard backward no longer needs $X$ once $\Xhat$ is
formed and $\nablaW$ has been computed for Gram layers), this term is
absorbed into the existing $X$ allocation.

\subsubsection{Per-Block Transient Peak Overhead}

Under fused backward+optimizer execution (per-layer immediate update,
no global $\nablaW$ retention across the network):
\begin{equation}\label{eq:peak-mem}
\boxed{\;\;
\mathrm{PeakMemOverhead}^{\text{norm-key}}_{\text{block}}
\le 4\,N \cdot n_{\max} \;\;\text{bytes (fp32)}
\;\;}
\end{equation}
where $n_{\max} = \gamma d$ for the FFN-down input.

\begin{result}[Concrete peak overhead]
For \(d=1024\), \(\gamma=8/3\), fp16 storage, and a microbatch of
\(N=8192\) tokens rather than the 524{,}288 tokens of a full optimizer step,
since only one microbatch of \(\hat X\) is live at a time, the largest
normalized-key transient allocation is
\begin{equation}
2 \cdot 8192 \cdot (8/3)\cdot 1024 \approx 44.7\text{ MB},\
\text{per block. }
\end{equation}
This is fully transient, since only one block's \(\hat X\) is live at a time.
\end{result}

\subsection{Theoretical Lower Bound}

\subsubsection{Sequential Read--Write Lower Bound}

The delta rule's intrinsic operation per sample comprises:
\begin{enumerate}
  \item \emph{Read:} $\hat v_t = M_t \xhat_t$, costing $2mn$ FLOPs.
  \item \emph{Write:} $M \mathrel{+}= \eta(v_t - \hat v_t)\xhat_t^\top$,
        costing $2mn$ FLOPs.
\end{enumerate}
Aggregating over $N$ samples gives a lower bound of $4mnN$ for batched
algorithms, or $2mnN$ for fully sequential ones, which is the through-$N$
order of \eqref{eq:dispatch}. The Gram order beats it whenever $n < 2m$
by moving the $N$ contraction into a matrix that several layers share.

\subsubsection{Tightness of the Asymptotic Result}

The dispatch rule together with cross-layer Gram sharing yields
\eqref{eq:block-overhead-deployed}, which is the theoretical minimum
under three constraints, GPU-batched execution, a standard transformer
architecture with no architectural relaxation, and independent block
computation with no cross-block sharing. The leading coefficient
$(6+4\gamma)$ saturates two structurally independent lower bounds at
once, the per-layer read and write bound above and the Gram-shared bound
from cross-layer input reuse. Further reduction would require relaxing
one of the three, for example low-rank or sparse keys, a custom fused
weight-gradient kernel, or cross-block input sharing, which the
architecture does not allow.

\section{Experimental Setup}
\label{app:setup}

\paragraph{Architectures.}
Across all language modeling experiments we use a Llama-2-style~\citep{touvron2023llama} decoder-only
transformer with SwiGLU FFNs (expansion ratio 2.67), rotary position
embeddings (RoPE, base 10,000), pre-RMSNorm, no biases on linear layers, and
untied input/output embeddings. \Cref{tab:arch} lists the three scales used in
the main experiments. All three belong to the same width-only \(\mu\)P family:
\(d_{\mathrm{head}}=64\), \(n_{\mathrm{layers}}=24\) and the SwiGLU expansion
ratio are held fixed across scales so that scale enters only through
\(d_{\mathrm{model}}\). The 67M model is the proxy at which all
hyperparameters are tuned, and the 370M and 1B models are the \(\mu\)P transfer targets.

\begin{table}[t]
  \centering
  \caption{Model architectures. All three belong to the same width-only \(\mu\)P
    family, with $d_{\mathrm{head}}$, $n_{\mathrm{layers}}$, and the SwiGLU
    expansion ratio fixed across scales, so scale enters only through
    $d_{\mathrm{model}}$.}
  \label{tab:arch}
  \begin{tabular}{lcccccc}
    \toprule
    Scale & $d_{\mathrm{model}}$ & $n_{\mathrm{layers}}$ &
      $n_{\mathrm{heads}}$ & $d_{\mathrm{head}}$ & Vocab & Params \\
    \midrule
    67M (proxy)  & 384  & 24 & 6  & 64 & 32{,}000 & $\sim$67M \\
    370M (target) & 1024 & 24 & 16 & 64 & 32{,}000 & $\sim$370M \\
    1B (target) & 1792 & 24 & 28 & 64 & 32{,}000 & $\sim$1B \\
    \bottomrule
  \end{tabular}
\end{table}

\paragraph{Data and training protocol.}
We pretrain on the 10BT subset of \textsc{FineWeb-Edu}~\citep{penedo2024fineweb}, tokenized with the
Llama-2 SentencePiece tokenizer (vocabulary size 32{,}000). Sequence length
is 2048 and the effective batch size is 256 sequences (524{,}288 tokens per
optimizer step), giving 19{,}073 optimization steps for one full pass over
the 10B-token budget. Training uses bfloat16 mixed precision, gradient norm
clipping at 1.0, a cosine learning-rate schedule with linear warmup over the
first 500 steps and decay to a floor of \(10^{-2}\) of the peak rate, and
no dropout. The validation set is a held-out 0.5\% slice of FineWeb-Edu (not
seen during training). The 1B run uses the 20BT budget, which is Chinchilla-optimal
at that size, with the cosine cycle stretched to the longer horizon and the warmup
held fixed at 500 steps, identically for both optimizers.

\paragraph{The matched-level measurement window.}
Every loss gap, step reduction, and cost saving we report is measured over
matched validation-loss levels taken from step 2{,}000 onward, which is also the
range the figures plot. The window drops the warmup and the transient right after
it, where the two curves have not yet separated and a small difference in loss
maps to a very large difference in steps, so a metric read there says more about
the schedule than about the buffer. The window was fixed once, before any of the
numbers below were computed, and the same window is applied at every scale, to
every arm, and to both cost axes.

\paragraph{Baselines and \DeltaMomentum instantiation.}
Our headline comparison is AK-AdamW against AdamW. AK-AdamW is obtained
by replacing AdamW's first-moment EMA accumulator with the normalized-key
\DeltaMomentum update of \Cref{eq:nk-delta}, and the second moment, bias
correction, and decoupled weight decay are unchanged. The activations
\(\hat{x}_t\) and output-side error signals \(\delta_t\) needed for the
delta-rule innovation are captured per linear layer through forward and
backward hooks, and the captured tensors are released immediately after the
optimizer step (Appendix~\ref{app:flops-memory}). The pure delta limit (\(\beta=1\)) is not
used in our main runs, and we always operate with \(\beta\in[0.9,1)\) so that
\DeltaMomentum admits a uniform decay floor along every direction.

\paragraph{Hyperparameter selection via \(\mu\)P transfer.}
All hyperparameters are tuned on the 67M proxy and transferred to 370M and 1B
via the \(\mu\)P scaling rules for hidden weights:
\(\mathrm{lr}_{\mathrm{eff}} = \mathrm{lr}_{\mathrm{proxy}}\cdot
(d_{\mathrm{base}}/d_{\mathrm{model}})\) for both AdamW and AK-AdamW,
while the delta coefficient \(\eta\) and the EMA coefficient \(\beta_1\)
transfer directly without rescaling. The \(\eta\)-invariance follows from
the per-sample key normalization \(\|\hat{x}_t\|=1\)
(\Cref{thm:mup-eta} and \Cref{thm:unconditional-stability}), and is verified empirically
by a \(\mu\)P coordinate check at widths
\(\{128,256,512,1024, 2048\}\) (Appendix~\ref{app:mup-coord}).

We use Optuna~\citep{akiba2019optuna} with a TPE sampler and a
SuccessiveHalvingPruner to search over learning rate, \(\beta_1\),
\(\eta\) (AK-AdamW only), and the auxiliary learning rate for the embedding
and output-head group, with weight decay fixed at \(10^{-1}\) and
\(\beta_2=0.99\). The auxiliary rate is tuned as its own axis for AK-AdamW and for Muon,
because tying it to the main rate starves those parameters in both. AdamW uses one
rate for all parameter groups, which is the standard configuration for it and the
one its own \(\mu\)P rule expects, so it carries no separate auxiliary axis. Each trial trains the 67M proxy for 4{,}000 steps with the
LR scheduler period set to the full 19{,}073-step horizon so that trial
losses are comparable to early-training losses of the full run.
Appendix~\ref{app:hp-sweep} gives the final range of every axis, the pruner settings, the
budget accounting, and the boundary audit. The
configurations selected for the headline runs are listed in
\cref{tab:hps}.

\begin{table}[t]
  \centering
  \caption{Optuna-selected hyperparameters at the 67M proxy and their
    \(\mu\)P-transferred values at the two targets. The \(\eta\) coefficient and
    \(\beta_1\) do not rescale with width. ``aux lr'' is the auxiliary rate for the
    embedding and output-head group, tuned as its own axis.}
  \label{tab:hps}
  \small
  \begin{tabular}{lcccccccc}
    \toprule
    optimizer & lr@67M & lr@370M & lr@1B & aux lr@67M & $\beta_1$ & $\beta_2$ & $\eta$ & wd \\
    \midrule
    AdamW       & 4.0\,$\!\times\!10^{-3}$ & 1.5\,$\!\times\!10^{-3}$ & 8.6\,$\!\times\!10^{-4}$ & --- & 0.95 & 0.99 & --- & 0.1 \\
    AK-AdamW  & 3.0\,$\!\times\!10^{-4}$ & 1.13\,$\!\times\!10^{-4}$ & 6.4\,$\!\times\!10^{-5}$ & 1.24\,$\!\times\!10^{-2}$ & 0.99 & 0.99 & 0.37 & 0.1 \\
    Muon        & 8.0\,$\!\times\!10^{-3}$ & 4.90\,$\!\times\!10^{-3}$ & --- & 1.28\,$\!\times\!10^{-2}$ & 0.95 & --- & --- & 0.1 \\
    \bottomrule
  \end{tabular}
\end{table}

\paragraph{Compute and seeds.}
All experiments are run on H200 GPUs (local cluster).
A single 67M training run completes in roughly $36$ GPU-hours on a single GPU, a 370M run
completes in roughly $116$ GPU-hours, and a 1B run completes in roughly $192$ GPU-hours.
These figures are the cost of running each configuration and not a throughput
comparison across scales, because the microbatch size and the number of gradient
accumulation steps differ by scale. The 370M runs leave GPU memory under-filled,
since the next larger microbatch does not fit by a small margin, while the 1B runs
fill memory more effectively at their own microbatch size, so cost per token does
not fall out of model size the way a fixed microbatch would predict. The
like-for-like timing comparison is the end-to-end step time of
\Cref{sec:realized-flops}, which is measured on a single H200 with the same batch
shape in both arms.
The 67M and 370M FineWeb-Edu comparisons are each repeated over 3 seeds (42, 777, 1234) for both
AdamW and AK-AdamW, and we report per-seed bands at both scales. The Muon arm uses 3 seeds at each of 67M and 370M, with the same seed set
(42, 777, 1234). The 1B comparison is single-seed because of compute, and we report it as a
scale check rather than a significance claim. Reported wall-clock numbers use a
research-quality, non-fused implementation of the delta-rule correction, and we
expect the gap to the counted FLOP overhead (\Cref{sec:realized-flops}) to narrow
with a fused backward and optimizer kernel.

\section{Comparison with Structured Optimizers}
\label{app:structured}

This appendix reports the tuned Muon comparison in full and states why Shampoo and SOAP are absent. \Cref{sec:main} summarizes the finding.

\begin{table}[h]
  \centering
  \caption{AK-AdamW against a tuned Muon baseline on FineWeb-Edu, across matched validation-loss levels, in the same format as \Cref{tab:scale}. Loss gap is in nats and step reduction is the fraction of Muon steps saved to reach the same loss, both reported as mean and maximum over the matched levels with the seed standard deviation. Matched levels are taken over the same step-2{,}000 window as \Cref{tab:scale}.}
  \label{tab:muon}
  \small
  \begin{tabular}{lcccc}
    \toprule
    Scale (seeds) & Mean loss gap & Max loss gap & Mean step red. & Max step red. \\
    \midrule
    67M \ \ (3 seeds) & $0.120 \pm 0.008$ & $0.140 \pm 0.007$ & $49.4 \pm 2.4\%$ & $55.8 \pm 1.9\%$ \\
    370M (3 seeds) & $0.109 \pm 0.003$ & $0.132 \pm 0.003$ & $41.6 \pm 0.8\%$ & $48.1 \pm 0.7\%$ \\
    \bottomrule
  \end{tabular}
\end{table}

\begin{figure}[h]
  \centering
  \begin{subfigure}[t]{0.48\linewidth}
    \includegraphics[width=\linewidth]{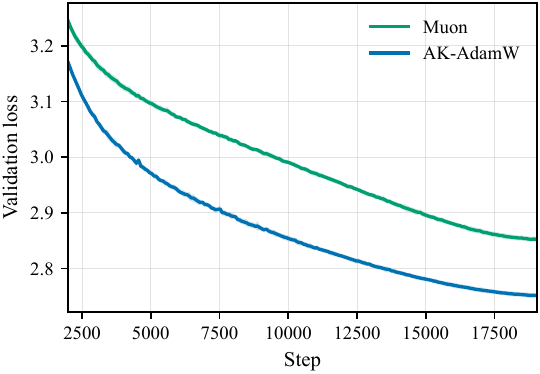}
    \caption{67M, 3 seeds.}
  \end{subfigure}\hfill
  \begin{subfigure}[t]{0.48\linewidth}
    \includegraphics[width=\linewidth]{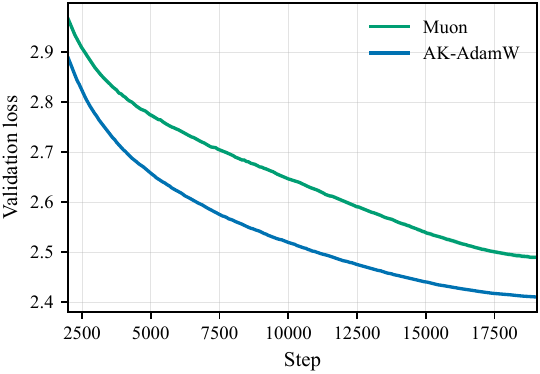}
    \caption{370M, 3 seeds.}
  \end{subfigure}
  \caption{AK-AdamW against a tuned Muon baseline on FineWeb-Edu, validation loss against optimizer step, seed means with per-seed bands. The sign of the gap is consistent in every run, and the two curves do not cross at either scale.}
  \label{fig:muon}
\end{figure}

\paragraph{Why AdamW is the isolating baseline.}
The empirical contribution of this paper is the delta-rule update applied to the first-moment buffer, and isolating it requires a baseline that differs from AK-AdamW only in that update. AdamW is the unique such baseline. A head-to-head against a structured optimizer instead conflates the first-moment contribution with Newton--Schulz orthogonalization or with Kronecker-factored preconditioning, which act at a different level of the stack. That is why \Cref{sec:main} leads with AdamW and treats Muon as a reference point rather than as the isolating comparison, and why the Muon numbers are reported here rather than beside the headline result.

\paragraph{Muon protocol.}
Muon was tuned under the same Optuna setup as the other two arms, with the same sampler, the same pruner, and the same trial horizon. Budgets were 33, 32, and 38 full-trial equivalents per tuned dimension for AdamW, AK-AdamW, and Muon, so Muon received the largest per-dimension budget of the three. Two further points make the tuning fair rather than merely equal in budget. Muon was tuned under its own $\mu$P rule, $\alpha = \Theta(n^{-1/2})$ with the $\sqrt{m/n}$ prefactor \citep{qiu2025hptransfer}, rather than under an Adam-family rule \citep{yang2021tuning}, since transferring an Adam rule to a spectrally normalized update is not a like-for-like transfer. And the auxiliary AdamW learning rate for the embedding and output-head group was decoupled and tuned as its own axis for Muon exactly as for AK-AdamW. The 370M Muon run transfers the 67M champion with no re-tuning under that same rule, which is the protocol the other two optimizers also follow.

\paragraph{Scope of the Muon comparison.}
We make no general claim about Muon from \Cref{tab:muon}. We report it as a baseline tuned inside our protocol rather than as a ceiling for the method.

\paragraph{Shampoo and SOAP.}
We did not run Shampoo or SOAP. Both keep a scalar-decay EMA first moment and add Kronecker-factored preconditioning on top of it, so the delta rule changes a different component of the stack and would slot into their buffers unchanged. The comparison that isolates our contribution is the EMA-against-delta swap at a fixed base optimizer, which is the argument of Appendix~\ref{app:related}, and we read Shampoo and SOAP as complementary rather than competing. The same holds for Muon, where the delta buffer would replace the EMA that feeds Newton--Schulz. \textsc{AK-Shampoo}, \textsc{AK-SOAP}, and \textsc{AK-Muon} are identified as future work in \Cref{sec:conclusion}.

\section{Mechanistic Diagnostics of Momentum Dynamics}
\label{app:mechanistic}

\subsection{Explanation of Momentum--Gradient Cosine Similarity}
\label{app:mom-grad-cos-sim}
The buffer--gradient cosine in stochastic optimization is \emph{not} expected to
approach $+1$, and is in general not even of definite sign. A Taylor expansion of
$g_t = \nabla\!\mathcal{L}(\theta_t)$ around $\theta_{t-1}$ together with the update rule
$\theta_t = \theta_{t-1} - \eta\, M_{t-1}/(\sqrt{V_{t-1}}+\varepsilon)$ yields
\begin{equation}
\langle M_{t-1}, g_t\rangle
\;=\;
\langle M_{t-1}, g_{t-1}\rangle
\;-\;
\alpha\,\frac{\langle M_{t-1}, H_{t-1}\, M_{t-1}\rangle}{\sqrt{V_{t-1}}+\varepsilon}
\;+\;
\langle M_{t-1}, \xi_t\rangle,
\label{eq:rebound}
\end{equation}
where $H_{t-1}\succeq 0$ is the local Hessian and $\xi_t$ is stochastic gradient
noise. The middle term is non-negative and biases $\cos(M_{t-1}, g_t)$ downward
by an amount that depends on local curvature and step size, not on the buffer
rule. This deterministic ``stepping-rebound'' effect applies symmetrically to
EMA and \DeltaMomentum, since it is a property of stochastic descent itself rather than of
the optimizer. The diagnostic quantity is therefore the
\emph{gap between optimizers at matched training step}, and equivalently the
prediction error~$e_t$, which is monotone in
$1 - \cos(M_t, g_t)\,\cdot\,\|M_t\|_F/\|g_t\|_F$.

\section{Maximal-Update ($\mu$P) Scaling}
\label{app:muP}

We derive the width-scaling rule for the delta-rule coefficient $\eta$ in
the normalized variant under maximal update parameterization
\citep{yang2021tuning}, showing that $\eta$ is width-invariant and that the
base optimizer's $\mu$P learning-rate rule is inherited unchanged. The
analysis is presented for a single hidden linear layer, and the multi-layer
extension follows by the same per-layer reasoning, since the delta-rule
correction is layer-local.

\subsection{Setup and Per-Coordinate Scales}
Consider a hidden linear layer $W \in \R^{m\times n}$ at width $n$ with
$m = \Theta(n)$. Under $\mu$P, the input $x \in \R^n$ has per-coordinate
scale $\Theta(1)$, so $\|x\|^2 = \Theta(n)$ and the unit key
satisfies $\|\xhat\|^2 = 1$ with per-coordinate scale $\Theta(n^{-1/2})$.
The output-side error $\delta \in \R^m$ has per-coordinate scale
$\Theta(1)$ for hidden layers under $\mu$P. The gradient
$g = \delta x^\top$ has per-coordinate scale $\Theta(1)$, and the EMA
momentum buffer $M^{\mathrm{EMA}} = \E[g]$ has the same per-coordinate
scale $\Theta(1)$.

\paragraph{Target scale for the \DeltaMomentum buffer.} The deployed
algorithm uses the normalized key $\xhat$. The natural injection signal is
$\eta\delta_t \xhat_t^\top$, which has per-coordinate scale
$\Theta(\eta) \cdot \Theta(1) \cdot \Theta(n^{-1/2}) = \Theta(\eta/\sqrt{n})$.
Setting $\eta = \Theta(1)$ gives a buffer with per-coordinate scale
$\Theta(n^{-1/2})$. We show below that this is the consistent fixed-point
scale under the recurrence~\eqref{eq:dm-norm-update} and that this scale
matches the postprocessing requirements of each base optimizer, recovering
the standard $\mu$P learning-rate rules.

\subsection{Width Invariance of $\eta$}

\begin{theorem}[$\mu$P width invariance of $\eta$]
\label{thm:mup-eta}
Under $\mu$P with input $x$ of per-coordinate scale $\Theta(1)$ and output
error $\delta$ of per-coordinate scale $\Theta(1)$, the normalized
\DeltaMomentum recurrence
\begin{equation}
M_t = \beta M_{t-1} + \eta(\delta_t - M_{t-1}\xhat_t)\xhat_t^\top
\label{eq:dm-mup}
\end{equation}
admits a fixed-point per-coordinate scale of $\Theta(n^{-1/2})$ if and only
if the delta-rule coefficient is width-invariant: $\eta = \Theta(1)$.
\end{theorem}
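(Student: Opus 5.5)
The plan is to read the fixed point of \eqref{eq:dm-mup} off \Cref{cor:fixedpoint-normalized} and then do per-coordinate scale bookkeeping on it. Under the quasi-static reading, the recurrence converges in expectation to
\[
\Mstar=\eta\,\Ghat\bigl((1-\beta)I+\eta\Sighat\bigr)^{-1}.
\]
This holds whenever $\eta<1+\beta$, which is the width-free stability window from the remark following \Cref{thm:unconditional-stability}, valid because $\|\xhat\xhat^\top\|_{\mathrm{op}}=1$. So the scale of $\Mstar$ is governed by two factors: the injection $\eta\Ghat$ and the inverse of the width-independent contraction operator. I will therefore bound each factor separately in width, with $\beta\in[0,1)$ held fixed and $\eta$ allowed to depend on $n$ as $\eta=\Theta(n^{a})$. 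I then show that the per-coordinate scale of $\Mstar$ is $\Theta(n^{-1/2})$ exactly when $a=0$.

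\emph{Step 1: the injection.} The term $\Ghat=\E[\delta\xhat^\top]$ has entries $\E[\delta_i\xhat_j]$. Here $\delta_i=\Theta(1)$ and $\xhat_j=\Theta(n^{-1/2})$, the latter because $\|x\|^2=\Theta(n)$ under $\mu$P. In the $\mu$P convention, where correlated factors multiply scales, this gives $\Ghat$ per-coordinate scale $\Theta(n^{-1/2})$. The injection $\eta\Ghat$ is therefore $\Theta(\eta n^{-1/2})$, as in the setup paragraph.

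\emph{Step 2: the contraction inverse.} By \Cref{prop:bounded-sigma-hat}, $0\preceq\Sighat\preceq I$. Hence the eigenvalues of $(1-\beta)I+\eta\Sighat$ lie in $[(1-\beta),(1-\beta)+\eta]$, and the eigenvalues of its inverse times $\eta$ lie in
\[
\Bigl[\tfrac{\eta}{1-\beta+\eta},\;\tfrac{\eta}{1-\beta}\Bigr].
\]
Right-multiplying $\Ghat$ by this matrix rescales each input direction by $\phi_i=\eta/((1-\beta)+\eta\lamhat_i)$. For the \emph{if} direction with $\eta=\Theta(1)$, both endpoints are $\Theta(1)$ independent of $n$. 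So $\Mstar$ has the same per-coordinate scale as $\Ghat$, namely $\Theta(n^{-1/2})$.

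To justify passing from operator-level bounds to per-coordinate scale, I will work in the eigenbasis of $\Sighat$. Every column direction of $\Ghat$ is multiplied by some $\phi_i$ lying in a width-independent interval bounded away from $0$ and $\infty$. The Frobenius norm of $\Mstar$ is therefore within constants of that of $\Ghat$. The coordinates are also not concentrated differently, since the multiplier is uniformly $\Theta(1)$.

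\emph{Step 3: only if.} If $\eta=\Theta(n^{a})$ with $a<0$, then $\eta\to0$ and $\phi_i\approx\eta/(1-\beta)$ for every $i$. This gives scale $\Theta(n^{a-1/2})$, strictly smaller than $n^{-1/2}$. If $a>0$, then $\eta$ eventually exceeds $1+\beta$ and the stability condition of \Cref{thm:unconditional-stability} and its remark fails. Concretely, the contraction factor $\beta-\eta\lamhat_i$ on the top direction, where $\lamhat_{\max}\ge 1/n$ and is $\Theta(1)$ in the anisotropic case, exits $(-1,1)$, so no finite fixed point exists. Alternatively, restricting to the stable window, the multipliers $\phi_i$ on low-density directions would be $\eta/(1-\beta)\to\infty$, again breaking the $\Theta(n^{-1/2})$ scale.

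The main obstacle I expect is making the \emph{only if} argument for $a>0$ rigorous without extra assumptions on the spectrum of $\Sighat$. The upper end of $\phi_i$ is attained only on directions with $\lamhat_i\ll(1-\beta)/\eta$, and such directions are guaranteed to exist only because $\tr\Sighat=1$ forces most of the $n$ eigenvalues to be $O(1/n)$. I would make this precise by pigeonhole: at least $n/2$ eigenvalues are at most $2/n$. On those directions the multiplier tends to $\eta/(1-\beta)$, provided $\Ghat$ has non-negligible mass there, and I would assume that as a mild nondegeneracy condition.
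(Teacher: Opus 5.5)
Your argument is correct at the paper's level of rigor, but it takes a different route. The paper never uses the closed-form fixed point. Instead it does dominant-balance bookkeeping on the three terms of the recurrence. It takes the per-sample injection $\eta\delta_t\xhat_t^\top$ at $\Theta(\eta/\sqrt{n})$, and argues by random-sign cancellation that $M_{t-1}\xhat_t$ has coordinates $\Theta(s)$, so the retrieval term is $\Theta(\eta s/\sqrt{n})=o(s)$. Balancing injection against decay then gives $s=\Theta(\eta/\sqrt{n})$, from which both directions of the equivalence are read off. You instead sandwich the multipliers of \Cref{cor:fixedpoint-normalized} using $0\preceq\Sighat\preceq I$ and pass to RMS scale through Frobenius bounds.

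\emph{What your route buys.} It is robust to anisotropy. In expectation, the retrieval term along an eigendirection with $\lamhat_i=\Theta(1)$ is $\eta\lamhat_i M u_i$, the same order as the decay. So the paper's \emph{retrieval is negligible} step fails exactly where the method is meant to act. The paper's conclusion survives only because this changes the multiplier by a constant, which your sandwich captures explicitly for any spectrum.

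\emph{What it costs.} You bound the mean buffer under quasi-staticity, so Step 1 must assert that $\Ghat=\E[\delta\xhat^\top]$ itself is $\Theta(n^{-1/2})$. That is a correlation assumption. Under the random-sign convention the paper applies to expectations in \Cref{cor:mup-base-lr}, $\Ghat$ would be smaller. The paper's per-sample balance tracks the typical size of $M_t$ and never has to commit to how much the mean cancels.

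\emph{The $a>0$ case.} The obstacle you anticipate is not real, and you need neither pigeonhole nor a nondegeneracy assumption. Exceeding $1+\beta$ does not by itself destabilize anything. What matters is that convergence in expectation requires $\eta\lamhat_i<1+\beta$ for every $i$, so the denominator $(1-\beta)+\eta\lamhat_i$ is below $2$. Hence, whenever the expected recursion converges,
\[
\eta/2<\phi_i\le\eta/(1-\beta)\quad\text{for all } i, \qquad\text{so}\qquad \|\Mstar\|_F=\Theta(\eta)\,\|\Ghat\|_F .
\]
This is precisely the paper's $s=\Theta(\eta/\sqrt{n})$, and it settles both directions of the equivalence at once.
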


\begin{proof}
We track per-coordinate scales of $M_t$ in steady state. Suppose
$M_{t-1}$ has per-coordinate scale $s$, i.e.\ $|M_{t-1,ij}| = \Theta(s)$. We
compute the per-coordinate scale of each term in~\eqref{eq:dm-mup}.

\emph{Decay term.} $\beta M_{t-1}$ has per-coordinate scale $\Theta(s)$.

\emph{Injection term.} $\eta\,\delta_t\xhat_t^\top$ has per-coordinate scale
$\Theta(\eta) \cdot \Theta(1) \cdot \Theta(n^{-1/2}) = \Theta(\eta/\sqrt{n})$.

\emph{Retrieval term.} The vector $M_{t-1}\xhat_t \in \R^m$ has $i$-th
coordinate $\sum_{j=1}^n M_{t-1,ij}\xhat_{t,j}$, a sum of $n$ terms each of
order $\Theta(s) \cdot \Theta(n^{-1/2}) = \Theta(s/\sqrt{n})$. Under random
sign cancellation (the standard $\mu$P fluctuation argument applied to the
inner product between a buffer row and the unit key), the magnitude of this
sum is $\Theta(\sqrt{n} \cdot s/\sqrt{n}) = \Theta(s)$. Multiplying by
$\eta\xhat_t^\top$ then gives per-coordinate scale
$\Theta(\eta) \cdot \Theta(s) \cdot \Theta(n^{-1/2}) = \Theta(\eta s/\sqrt{n})$.

\emph{Fixed-point balance.} At steady state $|M_t| = |M_{t-1}| = \Theta(s)$,
so the largest term on the right-hand side of~\eqref{eq:dm-mup} must scale
as $\Theta(s)$. Since the decay term is $\Theta(s)$ and the retrieval term
is $\Theta(\eta s/\sqrt{n}) = o(s)$ for any $\eta = O(1)$, the binding
constraint comes from the injection term: we need
$\Theta(\eta/\sqrt{n}) = \Theta(s)$, giving $s = \Theta(\eta/\sqrt{n})$.
For the resulting $M_t$ to have a width-invariant per-coordinate
\emph{ratio} relative to its expectation, which is the standard $\mu$P feature-learning
condition, we require $\eta = \Theta(1)$, hence $s = \Theta(n^{-1/2})$.

Conversely, if $\eta = \Theta(n^{-c})$ for any $c > 0$, the buffer scale
becomes $\Theta(n^{-1/2 - c})$, which violates feature learning. If
$\eta = \omega(1)$, the retrieval term diverges relative to the decay term
and stability fails. Hence $\eta = \Theta(1)$ is necessary and sufficient.
\end{proof}

\subsection{Inherited Base-optimizer Learning-Rate Rules}

\begin{corollary}[Inherited $\mu$P learning-rate rules]
\label{cor:mup-base-lr}
With $\eta = \Theta(1)$, the normalized \DeltaMomentum buffer $M_t$ has
per-coordinate scale $\Theta(n^{-1/2})$, identical to the per-coordinate
scale of the EMA momentum buffer (because EMA's buffer estimates the
\emph{population gradient} $\Gbar = \E[\delta x^\top]$, whose per-coordinate
expectation has scale $\Theta(n^{-1/2})$ under $\mu$P with random-sign
cancellation, and the per-sample buffer scale is $\Theta(1)$ while its expectation
exhibits the same $\Theta(n^{-1/2})$ shrinkage). Consequently, every
postprocessing operation on $M_t$ inherits the same scaling as it would
under EMA, and the base-optimizer's standard $\mu$P learning-rate rule
applies unchanged:
\begin{itemize}
\item \emph{AK-SGD:} $\alpha = \Theta(1)$ for hidden layers, $\Theta(1/n)$
for output heads (matching SGD-with-momentum under $\mu$P).
\item \emph{AK-AdamW:} the preconditioned update $M_t/\sqrt{v_t + \varepsilon}$
has per-coordinate scale $\Theta(1)$ (because both $M_t$ and $\sqrt{v_t}$
scale as $\Theta(n^{-1/2})$), so $\alpha = \Theta(1/n)$ for hidden
layers, matching AdamW under $\mu$P.
\item \emph{AK-Muon:} the Newton--Schulz output $\NS(M_t)$ has unit
singular values and per-entry scale $\Theta(n^{-1/2})$ for an $m \times n$
matrix with $m = \Theta(n)$, so $\alpha = \Theta(n^{-1/2})$ with the
$\sqrt{m/n}$ prefactor of Muon, matching Muon under $\mu$P.
\end{itemize}
\end{corollary}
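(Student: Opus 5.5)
The proof first establishes the buffer scale and then pushes it through each base optimizer's postprocessing, in three steps.

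\emph{Step 1: buffer scale.} By \Cref{thm:mup-eta} with $\eta = \Theta(1)$, the normalized recurrence \eqref{eq:dm-mup} has fixed-point per-coordinate scale $s = \Theta(n^{-1/2})$. The same conclusion can also be read off \Cref{cor:fixedpoint-normalized}. There $\Mstar = \Ghat(\mu I + \Sighat)^{-1}$, and by \Cref{prop:bounded-sigma-hat} every eigenvalue of $\mu I + \Sighat$ lies in $[\mu, \mu+1]$. With $\mu = (1-\beta)/\eta = \Theta(1)$, right-multiplication by this matrix changes per-coordinate scales by at most a width-independent constant.

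\emph{Step 2: matching EMA.} The EMA buffer concentrates on $\Gbar = \E[\delta x^\top]$. Under the $\mu$P random-sign cancellation assumption, its per-coordinate expectation is $\Theta(n^{-1/2})$, and $\Ghat = \E[\delta\xhat^\top]$ has the same order. So the two buffers agree in per-coordinate scale up to the learning-rate-absorbed constant. This is the identification the corollary asserts, and I take it from the same heuristic scaling calculus used in \Cref{thm:mup-eta} rather than proving it rigorously.

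\emph{Step 3: each base optimizer.} I will treat the three cases separately.

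(i) AK-SGD. The update $\alpha M_t$ has exactly the scale of SGD-momentum's update. Hence the SGD $\mu$P prescription, $\Theta(1)$ on hidden layers and $\Theta(1/n)$ on output heads, transfers verbatim.

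(ii) AK-AdamW. The second moment $v_t$ is built from the ordinary gradient and is unchanged. So $\sqrt{v_t}$ has the same per-coordinate order as the EMA buffer. The quotient $M_t/(\sqrt{v_t}+\varepsilon)$ is then $\Theta(1)$ per coordinate, which is the entrywise-sign-like regime AdamW's $\mu$P rule assumes. That rule gives $\alpha = \Theta(1/n)$.

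(iii) AK-Muon. Newton--Schulz outputs an approximately orthogonal factor regardless of the input's overall scale. So for $m = \Theta(n)$, its entries are $\Theta(n^{-1/2})$, and the Muon rule with its $\sqrt{m/n}$ prefactor \citep{qiu2025hptransfer} applies unchanged.

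The main obstacle is (ii): justifying that $\sqrt{v_t}$ matches the order of $M_t$. Per-sample entries of $g$ are $\Theta(1)$ while their mean is $\Theta(n^{-1/2})$, so the ratio is not literally $\Theta(1)$ in every regime. My first move is to note that the mismatch is identical for EMA and \DeltaMomentum, since both numerators share the scale from Step 2 and the denominator is literally shared. It follows that whatever width exponent AdamW's rule absorbs, AK-AdamW absorbs the same one. That relative argument suffices for ``inherited unchanged'' without having to pin down the absolute scale.
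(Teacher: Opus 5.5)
\paragraph{Gap: $\Ghat$ and $\Gbar$ do not have the same order.}
Your Step 2 claims that $\Ghat=\E[\delta\xhat^\top]$ has the same per-coordinate order as $\Gbar=\E[\delta x^\top]$, and this is false. Normalizing divides every sample by $\|x\|$, and in the setup of \Cref{thm:mup-eta} $\|x\|=\Theta(\sqrt n)$. When $\|x\|$ is independent of $(\xhat,\delta)$, which is the case the paper invokes for RMSNorm outputs after \Cref{cor:fixedpoint-normalized}, one has exactly $\Ghat=\Gbar/\E\|x\|$. So any bookkeeping that gives $\Gbar=\Theta(n^{-1/2})$ gives $\Ghat=\Theta(n^{-1})$.

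Your route (b) in Step 1 is itself sound, since $(\mu I+\Sighat)^{-1}$ has spectrum in $[1/(1+\mu),1/\mu]$ independent of width. Combined with the correct $\Ghat$, it gives $\|\Mstar\|_F\asymp\|\Gbar\|_F/\E\|x\|$. That puts the \DeltaMomentum fixed point a factor $\Theta(n^{-1/2})$ below the EMA fixed point, and it contradicts your route (a).

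Route (a)'s $\Theta(n^{-1/2})$ matches the EMA figure only because of mixed bookkeeping. It is the per-sample scale of the injection $\eta\delta_t\xhat_t^\top$, whose EMA counterpart $(1-\beta)\delta_t x_t^\top$ is $\Theta(1)$. The EMA figure $\Theta(n^{-1/2})$ is instead an expectation scale. Apply either bookkeeping to both buffers and the same $\|x\|^{-1}$ gap remains. Your ``learning-rate-absorbed constant'' is therefore the width-dependent factor $\Theta(n^{-1/2})$, and absorbing it into $\alpha$ is precisely a change of $\mu$P exponent.

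\paragraph{Consequences for (i)--(iii) and the paper's route.}
This breaks (i) and (ii), even though their relative structure is otherwise the cleaner idea.
\begin{itemize}
\item In (ii), the shared denominator $\sqrt{v_t}$ is formed from the raw gradient (Appendix~\ref{app:deployed}), while $M_t$ is formed from normalized keys. Your comparison then gives an AK-AdamW ratio $M_t/(\sqrt{v_t}+\varepsilon)$ equal to $\|x\|^{-1}$ times AdamW's ratio.
\item In (i), $\alpha M_t$ is likewise $\|x\|^{-1}$ times the SGD-momentum update.
\item Only (iii) survives, because Newton--Schulz is invariant to the overall scale of its input.
\end{itemize}
Carried out correctly, the relative argument predicts a shifted exponent rather than an inherited one. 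It would work only if $v_t$ were built on the same normalized keys as $M_t$, and the deployed algorithm does not do that.

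The paper's proof never uses $\Ghat\asymp\Gbar$. It reads $\Theta(n^{-1/2})$ for $M_t$ directly off \Cref{thm:mup-eta}. It then asserts $\sqrt{v_t}=\Theta(n^{-1/2})$ by applying random-sign cancellation to the running average of $g_t^{\odot 2}$.

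Your suspicion of that step is justified: an average of nonnegative terms has no sign cancellation. But a relative comparison cannot stand in for it. As written, your argument does not establish bullets (i) and (ii) of the corollary.
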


\begin{proof}
Per-coordinate scale of $M_t$ is $\Theta(n^{-1/2})$ by
\Cref{thm:mup-eta}. The Adam second moment $v_t$ averages $g_t^{\odot 2}$,
which has per-coordinate scale $\Theta(1)$ per sample but expectation
scale $\Theta(n^{-1})$ under $\mu$P with random-sign cancellation, so
$\sqrt{v_t} = \Theta(n^{-1/2})$ and $M_t/\sqrt{v_t} = \Theta(1)$. With this
unit-scale preconditioned update, AdamW's standard $\mu$P learning-rate
rule $\alpha = \Theta(1/n)$ for hidden weights gives
$\alpha M_t/\sqrt{v_t} = \Theta(1/n)$ per coordinate, i.e.\ feature
learning. Newton--Schulz produces an orthogonal-like matrix with unit
singular values, and for an $m \times n$ matrix with $m,n = \Theta(n)$ this
forces per-entry scale $\Theta(n^{-1/2})$, matching the standard Muon
analysis. SGD without preconditioning uses $\alpha = \Theta(1)$ on the
buffer of scale $\Theta(n^{-1/2})$, giving $\alpha M_t = \Theta(n^{-1/2})$
per coordinate, the standard $\mu$P feature-learning scale.
\end{proof}

\begin{remark}[$\mu$Transfer]
\label{rem:mu-transfer}
\Cref{thm:mup-eta} implies that $\eta$, once tuned at any width, transfers
without rescaling to any other width. Combined with
\Cref{cor:mup-base-lr}, this means the \DeltaMomentum family is
\emph{$\mu$P-transparent}: it adds exactly one additional hyperparameter
($\eta$) over the base optimizer, and that hyperparameter requires no
width-dependent scaling. We empirically verify width invariance of $\eta$
via coordinate checks in Appendix~\ref{app:mup-coord}.
\end{remark}

\begin{remark}[Why normalization matters for $\mu$P]
\label{rem:mup-normalization}
The width-invariance of $\eta$ rests on the unit norm $\|\xhat\| = 1$,
which makes the spectral norm of the rank-one operator $\xhat_t\xhat_t^\top$
equal to 1 independently of $n$. For the unnormalized variant, $\|x\|^2 = \Theta(n)$,
and the corresponding operator $x_t x_t^\top$ has spectral norm $\Theta(n)$,
to maintain stability we would require $\eta = \Theta(1/n)$, which alters
the steady-state scaling of $M_t$ and the inherited base-optimizer
learning-rate rule. The normalized variant avoids these complications
entirely, and all $\mu$P scalings of the base optimizer carry over verbatim.
\end{remark}

\section{Generality Across Optimizer Families and Architectures on CIFAR-10}
\label{app:cifar-10}

This appendix complements \Cref{sec:main-deltaSGD} with full setup details and the corresponding test-accuracy comparison. We compare AK-SGD with momentum against vanilla SGD with momentum on a 3-hidden-layer MLP (4096, 4096, 2048 units per each layer, respectively, with ReLU activations) trained on CIFAR-10~\citep{krizhevsky2009learning} for 120 epochs at batch size 256. We manually sweep the learning rate independently for each optimizer over ($\alpha \in \{5\!\times\!10^{-2},  1\!\times\!10^{-1}, 3\!\times\!10^{-1}\}$ for SGD and $\alpha \in \{0.9,  1.3, 1.7, 2.1\}$ for AK-SGD) and report the best configuration. The basin of $\alpha$ for SGD was found in $\alpha=0.1$ as achieving the minimal loss, whereas for AK-SGD, $\alpha=1.7$ was the identified learning rate achieving minimal loss, and the momentum coefficient $\beta$ is swept within range $\{0.9,  0.95, 0.99\}$ for both optimizers, where $\beta=0.9$ was preferable for SGD and $\beta=0.99$ for AK-SGD. For AK-SGD we additionally search $\eta \in \{0.1, 0.5, 0.9, 0.99\}$ and find the optimal point in $\eta=0.99$. \Cref{fig:cifar} shows the training and validation trajectories, and \Cref{tab:cifar} reports final test accuracy.

\begin{figure}[h]
  \centering
  \begin{subfigure}[t]{0.49\linewidth}
    \includegraphics[width=\linewidth]{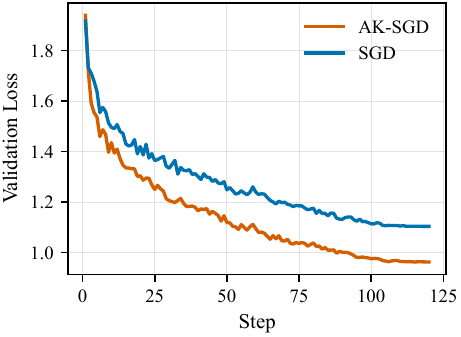}
    \caption{Validation loss vs.\ epoch. The horizontal axis is the epoch index.}
    \label{fig:cifar-loss}
  \end{subfigure}\hfill
  \begin{subfigure}[t]{0.49\linewidth}
    \includegraphics[width=\linewidth]{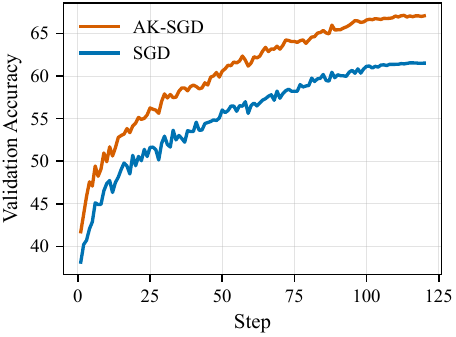}
    \caption{Validation accuracy vs.\ epoch. The horizontal axis is the epoch index.}
    \label{fig:cifar-acc}
  \end{subfigure}
  \caption{AK-SGD vs.\ SGD-momentum on CIFAR-10 with a 3-layer MLP. AK-SGD matches or exceeds SGD throughout training and achieves a higher final test accuracy, demonstrating that the delta-rule mechanism transfers beyond Adam's diagonal-preconditioner setting.}
  \label{fig:cifar}
\end{figure}

\begin{table}[h]
  \centering
  \caption{Final test accuracy on CIFAR-10 (3-hidden-layer MLP, 120 epochs). Best per-optimizer learning rate selected from an independent sweep.}
  \label{tab:cifar}
  \begin{tabular}{lcc}
    \toprule
    Optimizer & Final test acc.\ (\%) & $\Delta$ vs.\ SGD \\
    \midrule
    SGD-momentum & {61.51} & --- \\
    AK-SGD     & {67.06} & $+${5.55} \\
    \bottomrule
  \end{tabular}
\end{table}

The qualitative pattern matches the language-modeling result, since the gain appears early, is sustained throughout training, and translates into a positive final-metric difference. This experiment is a controlled test of whether the delta-rule mechanism transfers across optimizer families, run on a 3-layer MLP for that purpose, and the answer is positive.

\subsection{Standard Vision Architectures}
\label{app:cifar-vision}

This subsection complements \Cref{sec:main-deltaSGD} and \Cref{tab:vision}. We train ResNet-18 and ViT-Tiny on CIFAR-10 for 120 epochs, AK-AdamW against AdamW, with 3 seeds per optimizer and an identical log-scale learning-rate sweep for each. All other settings are held at the values that the architecture's usual recipe prescribes and are the same across the two arms.

\begin{figure}[h]
  \centering
  \begin{subfigure}[t]{0.48\linewidth}
    \includegraphics[width=\linewidth]{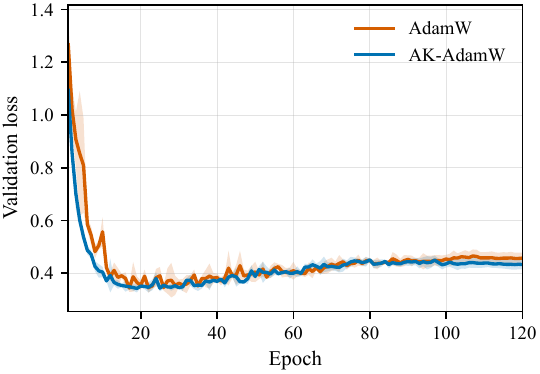}
    \caption{ResNet-18, validation loss.}
  \end{subfigure}\hfill
  \begin{subfigure}[t]{0.48\linewidth}
    \includegraphics[width=\linewidth]{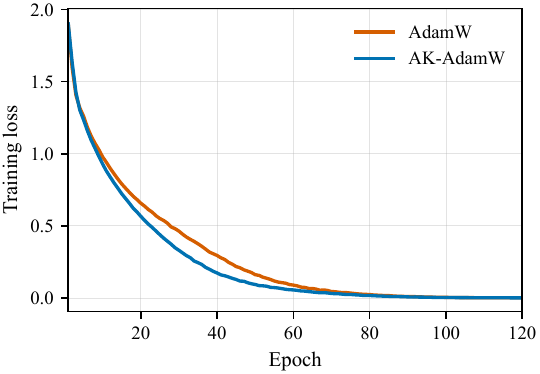}
    \caption{ViT-Tiny, training loss.}
  \end{subfigure}
  \caption{AK-AdamW against AdamW on CIFAR-10 over 120 epochs, 3 seeds, seed means with per-seed bands. We read ResNet-18 on validation loss and ViT-Tiny on training loss, for the reason given below.}
  \label{fig:cifar-vision}
\end{figure}

We read the two architectures on different curves, and the reason is what each curve can measure. ResNet-18 carries the locality and weight-sharing priors that CIFAR-10 rewards, so it generalizes well at this dataset size and its validation curve tracks optimization progress rather than a generalization gap. That makes validation loss both meaningful and the stricter of the two readings, so we use it. ViT-Tiny has no such inductive bias and is trained here from scratch without the large-scale pretraining or heavy augmentation that vision transformers normally rely on, so on a dataset of this size it fits the training set long before its validation loss reflects anything about the optimizer. Its validation curve is then dominated by the generalization gap of the architecture, which is a property of the model and not of the buffer rule, so we read ViT-Tiny on training loss, where the optimization question the paper asks is the one being measured. The complementary pair of curves is in \Cref{fig:cifar-vision-alt} and shows the same ordering in both cases.

\begin{figure}[h]
  \centering
  \begin{subfigure}[t]{0.48\linewidth}
    \includegraphics[width=\linewidth]{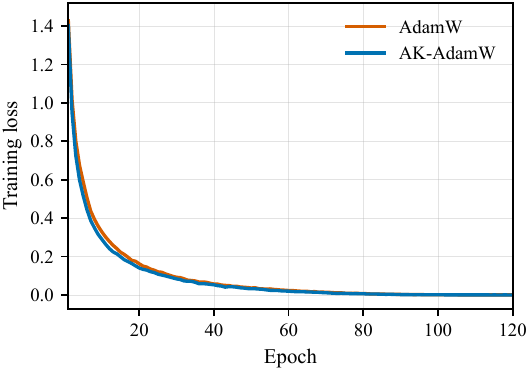}
    \caption{ResNet-18, training loss.}
  \end{subfigure}\hfill
  \begin{subfigure}[t]{0.48\linewidth}
    \includegraphics[width=\linewidth]{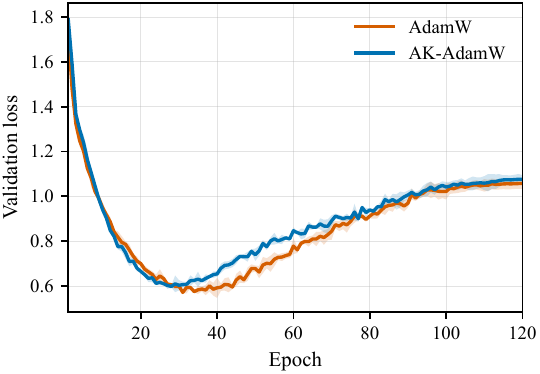}
    \caption{ViT-Tiny, validation loss.}
  \end{subfigure}
  \caption{The complementary pair to \Cref{fig:cifar-vision}, reported for completeness. The sign of the gap is the same here, and the ViT-Tiny validation curves show the generalization gap of the architecture on a dataset of this size rather than a difference between the optimizers.}
  \label{fig:cifar-vision-alt}
\end{figure}

We report the gap in training loss at matched epochs rather than a final metric, because both architectures saturate on CIFAR-10. Training accuracy reaches $99.99\%$ and test accuracy sits near $84\%$ for ViT-Tiny and $94\%$ for ResNet-18 under either optimizer, so final performance cannot separate the two and convergence speed is the quantity that can be measured. We average over the phase in which the curves separate, which is roughly the first 60 epochs for ViT-Tiny and the first 20 for ResNet-18, and we report the gap both in nats and relative to the AdamW loss at the same epoch. The sign of the gap is the same in all runs, and validation loss shows the same ordering. This supports that the mechanism is not specific to language modeling, while language-model pretraining remains where the claims of this paper are made.

\section{$\mu$P Coordinate Check}
\label{app:mup-coord}

\Cref{thm:mup-eta} (Appendix~\ref{app:muP}) predicts that the delta-rule coefficient $\eta$ is width-invariant under maximal-update parameterization. We verify this empirically with the standard $\mu$P coordinate-check protocol, training the same Llama-2-style architecture at five widths $d_{\mathrm{model}} \in \{128, 256, 512, 1024, 2048\}$ for 20 optimization steps across five random seeds, holding all hyperparameters fixed except for the width-variant hyperparameters such as learning rate. In particular $\eta$ and $\beta_1$ are transferred without rescaling, at $\eta = 0.5$ and $\beta_1 = 0.99$. The specific value of $\eta$ is immaterial to this test, which asks only whether a single value transfers across widths, and it sits in the flat basin of Appendix~\ref{app:eta-sweep}. 
We measured two per-coordinate scales whose width-invariance certifies $\mu$P-cleanness:
\begin{itemize}
\item \textbf{Activation RMS}: the root-mean-square per-coordinate magnitude of each layer's pre-activation, $\mathrm{RMS}(y_t) = (\tfrac{1}{m}\sum_i y_{t,i}^2)^{1/2}$, which should remain $\Theta(1)$ across widths.
\item \textbf{Feature update RMS}: the per-coordinate magnitude of the change the realized weight update makes to the layer output, $\mathrm{RMS}(\Delta W x)$ with $\Delta W = W_{\mathrm{after}} - W_{\mathrm{before}}$, which should remain $\Theta(1)$ across widths under AK-AdamW's inherited $\mu$P learning-rate rule (\Cref{cor:mup-base-lr}).
\end{itemize}

Both quantities are measured in feature space, meaning projected onto the layer
input, and that choice is deliberate. The parameter-space counterparts
$\mathrm{RMS}(\Delta\theta)$ and $\mathrm{RMS}(\alpha \nabla W)$ are $\Theta(1/n)$
under correct $\mu$P and shrink with width, so a collapse test on them certifies
nothing. The gradient is also the wrong proxy for the update here, since the
second moment of Adam rescales it before it reaches the weights, so
$\alpha \nabla W \neq \Delta W$ for the optimizers we compare. Taking the
realized update and projecting it onto the input recovers the quantity the theory
actually constrains. Under correct $\mu$P both scales are $\Theta(1)$ in width, so the curves for
different widths should lie on one another, and a width-dependent trend in either
one would open a spread that grows with $n$. We read the check by that collapse.

\begin{figure}[h]
  \centering
  \begin{subfigure}[t]{0.49\linewidth}
    \includegraphics[width=\linewidth]{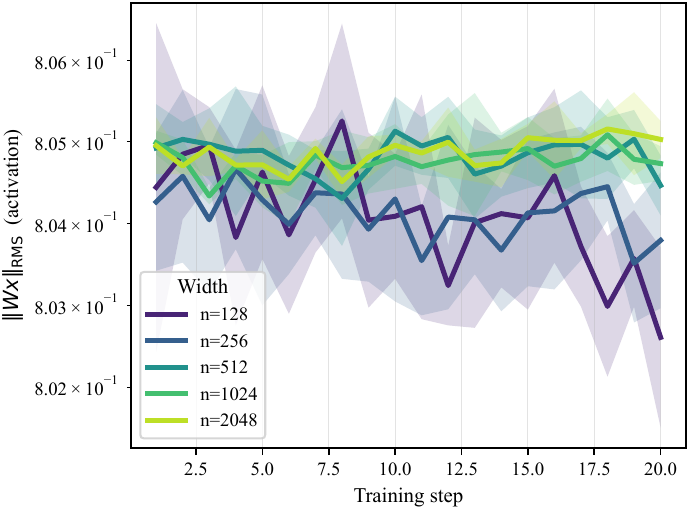}
    \caption{Activation RMS vs.\ training step, by width.}
    \label{fig:mup-act-rms}
  \end{subfigure}\hfill
  \begin{subfigure}[t]{0.49\linewidth}
    \includegraphics[width=\linewidth]{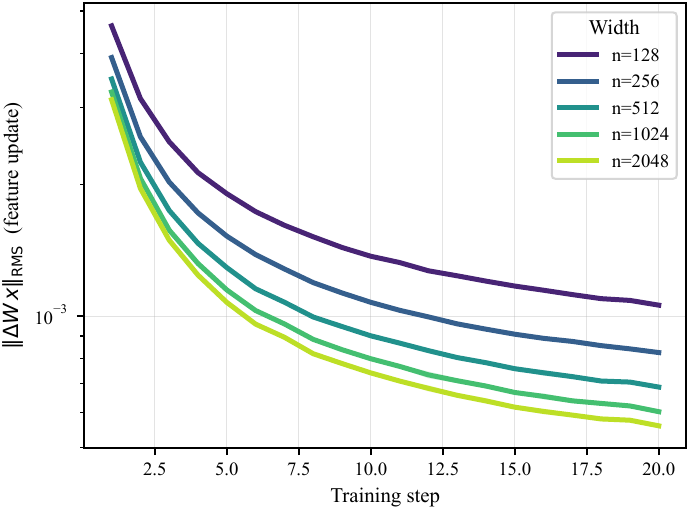}
    \caption{Feature update RMS vs.\ training step, by width.}
    \label{fig:mup-update-rms}
  \end{subfigure}
  \caption{$\mu$P coordinate check for AK-AdamW at widths $d_{\mathrm{model}} \in \{128, 256, 512, 1024, 2048\}$ with $\eta$ held fixed across widths. Both quantities collapse onto width-independent curves throughout the 20-step training horizon, empirically confirming that $\eta$ requires no width-dependent rescaling and that the inherited $\mu$P scaling rule applies unchanged.}
  \label{fig:mup-coord}
\end{figure}

The collapse of the per-coordinate-scale curves across widths in \Cref{fig:mup-coord} confirms the theoretical prediction that a single $\eta$ tuned at any width transfers without modification to any other width, with no degradation of feature-learning behavior. This is the practical content of \Cref{thm:mup-eta}.

\section{Hyperparameter Sweep}
\label{app:hp-sweep}

We use Optuna~\citep{akiba2019optuna} with a TPE sampler and a SuccessiveHalvingPruner for hyperparameter selection on the 67M proxy. The search space, search budget, and selection criterion are summarized below.

\paragraph{Search space.}
We run separate Optuna studies for AdamW, AK-AdamW, and Muon with single seed (42).
We report the final range of each axis, which is the range the reported optimum
was selected inside. Ranges were refit whenever an optimum approached a boundary,
and each prior optimum was re-enqueued as an anchor trial before sampling resumed.
For AdamW, $\alpha \in [2{\times}10^{-3},\, 8{\times}10^{-3}]$ on log scale and
$\beta_1 \in \{0.9, 0.95\}$. For AK-AdamW,
$\alpha \in [5{\times}10^{-5},\, 4.5{\times}10^{-4}]$ on log scale,
$\beta_1 \in \{0.95, 0.99\}$, and $\eta$ TPE-sampled from $[0.05,\, 1.0]$ on log
scale. For Muon, $\alpha \in [3{\times}10^{-3},\, 3{\times}10^{-2}]$ on log scale
and $\beta_1 \in \{0.9, 0.95\}$. For AK-AdamW
and for Muon we additionally search the auxiliary AdamW learning rate for the
embedding and output-head group as its own axis rather than tying it to the main
rate, since tying it starves those parameters, over
$[2{\times}10^{-3},\, 4{\times}10^{-2}]$ for AK-AdamW and
$[2{\times}10^{-3},\, 5{\times}10^{-2}]$ for Muon. All studies
hold $\beta_2 = 0.99$ and weight decay $10^{-1}$ fixed. Each trial trains the
67M proxy for up to 4{,}000 steps, evaluating every 80 steps, with the cosine LR
scheduler period set to the full 19{,}073-step horizon so that trial losses are
comparable to early-training losses of the headline run. The
SuccessiveHalvingPruner uses \texttt{min\_resource}~$=600$,
\texttt{reduction\_factor}~$=3$, and \texttt{min\_early\_stopping\_rate}~$=1$,
which places exactly one rung inside the trial budget, realized at step $1{,}840$
of $4{,}000$. Champions are selected from full-budget completions only and never
from rung values, because rung rank and final rank are close to uncorrelated
among survivors in our studies.

\paragraph{Sweep budget.}
We count budget in full-trial equivalents, where one equivalent is one trial run
to the full 4{,}000 steps, summing the steps every trial actually consumed. The
AdamW studies ran $101$ trials with $37$ completions for $66.3$ equivalents over
$2$ tuned dimensions, the AK-AdamW studies ran $207$ trials with $72$
completions for $129.2$ equivalents over $4$ dimensions, and the Muon studies ran
$249$ trials with $62$ completions for $115.3$ equivalents over $3$ dimensions.
Normalized by the number of tuned dimensions, the three arms received $33$, $32$,
and $38$ full-trial equivalents per dimension for AdamW, AK-AdamW, and Muon
respectively. Muon therefore received the largest per-dimension budget of the
three and our own method the smallest, which is the direction that matters for
reading the comparison. Roughly $65$ of the Muon trials are zero-cost prunes from
a co-scheduling out-of-memory incident and consumed essentially no steps, leaving
about $184$ informative trials, and they are excluded from the equivalents above
rather than counted as search effort. Most remaining trials are pruned, which is
by design, since TPE concentrates samples near the basin and successive halving
halts the rest at the rung. Total consumed compute is not equal across arms,
since dimensionality and the number of refit rounds differ, so we claim identical
protocol and per-dimension parity rather than equal GPU-hours. The Optuna SQLite
study databases are released with the code.

\paragraph{Selection criterion.} The objective is the validation loss at the end of each trial. The configuration with the lowest validation loss is selected for the headline run; no test-set information is used in selection.

\paragraph{Selected configurations.}
The selected hyperparameters at 67M and the corresponding $\mu$P-transferred
values at the two targets are reported in \Cref{tab:hps}. AdamW selects
$\alpha = 4.0{\times}10^{-3}$ and $\beta_1 = 0.95$, and among the six completed
AdamW trials all selected $\beta_1 = 0.95$. AK-AdamW selects
$\alpha = 3.0{\times}10^{-4}$, $\beta_1 = 0.99$, and $\eta = 0.37$. Every
completed AK-AdamW trial selected $\beta_1 = 0.99$, which is larger
than AdamW prefers, and the recipe at the end of this appendix explains why the
per-direction memory coefficient makes that the expected choice. The selected $\eta$ sits
inside the flat basin characterized in Appendix~\ref{app:eta-sweep}, where the final loss
moves by at most $0.015$ nats across $\eta \in [0.10,\, 0.99]$, so neighbouring
values in that basin do not change the headline configuration.

The AK-AdamW peak learning rate is more than ten times smaller than
AdamW's, and this is consistent with the buffer scale. \DeltaMomentum's
per-coordinate buffer scale matches EMA's (\Cref{cor:mup-base-lr}), but the
function-space objective induces a smaller effective gradient norm during the
buffer-warmup phase, so the optimal $\alpha$ is correspondingly reduced.

Every reported optimum is interior to its final range, with completed trials on
both sides of it, so no reported value rests on a boundary. Writing the position
of each optimum as a fraction of its final range on the log axis, where $0$ is the
lower bound and $1$ the upper, AdamW $\alpha$ sits at $0.51$, AK-AdamW
$(\alpha, \alpha_{\mathrm{aux}}, \eta)$ at $(0.82, 0.61, 0.67)$, and Muon
$(\alpha, \alpha_{\mathrm{aux}})$ at $(0.43, 0.58)$. The horizontal bars in
\Cref{fig:optuna} draw these ranges. Both categorical axes are resolved by
completed evidence rather than by default, including a controlled probe at the
adopted point on the other arm, and the $\beta_1$ selection is unanimous among
the completed AK-AdamW trials.

\begin{figure}[h]
  \centering
  \begin{subfigure}[t]{0.325\linewidth}
    \includegraphics[width=\linewidth]{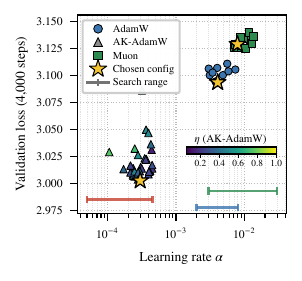}
    \caption{Trial loss vs.\ main learning rate $\alpha$.}
    \label{fig:optuna-lr}
  \end{subfigure}\hfill
  \begin{subfigure}[t]{0.325\linewidth}
    \includegraphics[width=\linewidth]{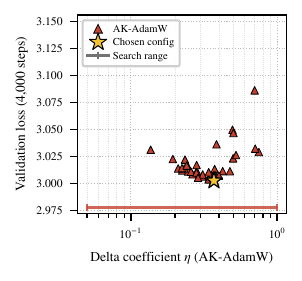}
    \caption{Trial loss vs.\ delta coefficient $\eta$.}
    \label{fig:optuna-eta}
  \end{subfigure}\hfill
  \begin{subfigure}[t]{0.325\linewidth}
    \includegraphics[width=\linewidth]{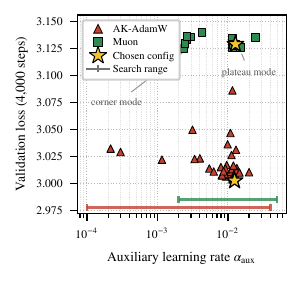}
    \caption{Trial loss vs.\ auxiliary rate $\alpha_{\mathrm{aux}}$.}
    \label{fig:optuna-aux}
  \end{subfigure}
  \caption{Optuna trial losses on the 67M proxy, one point per completed trial, with the chosen configuration of each arm starred and the searched interval of each arm drawn as a horizontal bar. (a) The three arms occupy well-separated learning-rate basins, and each optimum is interior to its own final range with completed trials on both sides. (b) The $\eta$ surface is flat over most of a decade, which is the trial-level counterpart of the sensitivity ablation in Appendix~\ref{app:eta-sweep}. (c) The auxiliary rate for the embedding and output-head group separates into two modes. The corner mode collects trials that pin the auxiliary rate near the low end of the interval and starves those parameters, and the plateau mode collects the trials that do not, which is why we tune this rate as its own axis rather than tying it to the main rate.}
  \label{fig:optuna}
\end{figure}

\paragraph{Recommended tuning recipe.}
The marginal tuning cost over the EMA momentum is one coefficient. Based on the
studies above, we recommend setting $\beta = 0.99$ and $\eta = 0.4$, keeping
$\beta_2$ and weight decay at the AdamW settings, and setting the learning rate
near one tenth of the tuned AdamW value and tuning only that.
Three facts make this cost-efficient.
First, $\beta = 0.99$ is larger than AdamW prefers, because the per-direction
memory coefficient is $\tbeta_i$ rather than $\beta$, so every direction with
positive input density forgets faster than nominal and a larger nominal $\beta$
recovers the same effective horizon.
Second, $\eta$ is easy to select, since the final loss moves by at most $0.015$
nats across $\eta \in [0.10, 0.99]$ (Appendix~\ref{app:eta-sweep}) and $\eta$
enters \Cref{thm:fixedpoint} only through the ridge $\mu$, where the fixed point
interpolates between the plain and the whitened gradient. Anything in
$[0.3, 0.5]$ is a safe default. Normalized keys also remove any data-dependent
constraint, since $\tr\Sighat = 1$ places $\tbeta_i$ inside $[\beta - \eta, \beta]$
for any data, so $\eta \le \beta$ keeps every coefficient non-negative and
stability needs only $\eta < 1 + \beta$.
Third, the learning rate does not carry over from AdamW because the buffer is a
Wiener predictor rather than an average of gradients.
Both $\eta$ and $\beta$ are width-invariant under $\mu$P (\Cref{thm:mup-eta}) and
the learning rate is transferred by the standard $\mu$P rule, so the whole recipe
is tuned once at the smallest proxy.

\section{$\eta$ Sensitivity Ablation}
\label{app:eta-sweep}

\Cref{thm:fixedpoint} relates the regularization parameter $\mu = (1-\beta)/\eta$ to the fixed-point structure of the buffer, and this in turn predicts that final loss should depend smoothly on $\eta$, with $\eta \to \beta$ approaching the unregularized (pure delta-rule) Wiener fixed point. 
To verify this prediction and empirically characterize the role of $\eta$, we sweep $\eta$ on the 67M proxy at the otherwise hyperparameters-fixed AK-AdamW configuration of \Cref{tab:hps}. 
Each run uses batch size 256 and context length 2048 (524K tokens/step), trained for 5,440 steps ($\approx$ 2.85B tokens, $\approx$ 2.12$\times$ Chinchilla optimal for this model size).

\begin{table}[h]
  \centering
  \caption{Final validation loss on FineWeb-Edu (2.85BT, 67M proxy) as a function of the delta-rule coefficient $\eta$, with all other AK-AdamW hyperparameters fixed at the values in \Cref{tab:hps}. 
  The configuration selected by Optuna ($\eta = 0.37$) was chosen independently of this ablation and lies between the $0.30$ and $0.50$ grid points.}
  \label{tab:eta-sweep}
  \begin{tabular}{lcccccccc}
    \toprule
    $\eta$        & 0.01 & 0.10 & 0.30 & 0.50 & 0.70 & 0.99 \\
    \midrule
    Final val.\ loss & 
    {3.063} & 
    {3.029} & {3.034} & {3.035} & {3.035} & 
    {3.020} \\
    \bottomrule
  \end{tabular}
\end{table}

The dependence on $\eta$ is relatively flat for $\eta \geq 0.1$, with a wide basin that contains the selected value. The final loss varies by at most $0.015$ nats across the wide interval $\eta \in [0.10, 0.99]$, and degrades smoothly at the limit ($\eta \approx 0$) while the pure-delta-rule limit ($\eta \to 1$) showed the best final validation loss by a small amount. \Cref{tab:eta-sweep} shows that $\eta = 0.99$ yields the lowest final loss in this $5{,}440$-step ablation, ahead of the $0.30$ and $0.50$ grid points that bracket the selected value by $0.014$ and $0.015$ nats and ahead of the next best point by $0.009$ nats. Both margins are no larger than the $0.0150$-nat seed standard deviation we measure on the 67M loss gap in \Cref{tab:scale}, and this ablation is single-seed, so we do not interpret the $\eta = 0.99$ advantage as significant. This insensitivity is the practical correlate of the Tikhonov interpolation of \Cref{thm:fixedpoint}, where a wide range of $\eta$ produces buffers that interpolate between the EMA and Wiener targets without sharply changing the optimization behavior.

The ablation is run at the configuration of \Cref{tab:hps} with everything except $\eta$ held fixed, so the sweep isolates $\eta$ and any offset common to the row cancels in the spread. Its grid brackets the selected $\eta = 0.37$ between the $0.30$ and $0.50$ points, both of which sit in the flat basin. This is what makes the recipe of Appendix~\ref{app:hp-sweep} safe to state as the interval $[0.3, 0.5]$ rather than as a point.

\section{Broader Impacts}
\label{app:broader-impacts}

\DeltaMomentum is a general-purpose optimization method, not a deployed system.
Its potential positive impact is improved training efficiency, which may reduce the
compute cost of neural-network training and make experimentation more accessible.
Its potential negative impact is indirect, since more efficient optimization could also lower
the cost of training models for harmful or dual-use applications. The work does not
release a high-risk model or dataset.

\end{document}